\def\fix{\textrm{\rm fix}}
\def\splitrm{\textrm{\rm split}}
\def\leftrm{\textrm{\rm left}}
\def\rightrm{\textrm{\rm right}}
\def\cap{\textrm{cap}}
\def\leaf{\textrm{leaf}}
\def\h2{\hspace*{-2pt}}
\def\x{\mathbf{x}}
\def\y{\mathbf{y}}
\def\ch{\textrm{ch}}
\def\AUC{\textrm{AUC}}
\def\uncertain{\textrm{\rm uncertain}}
\def\supp{\textrm{\rm supp}}
\def\move{\textrm{\rm move}}
\def\cumulative{\textrm{\rm cumulative}}
\newtheorem{theorem}{Theorem}[section]
\newtheorem{lemma}[theorem]{Lemma}
 \newcommand{\argmin}{\textrm{\rm argmin}}
\icmltitlerunning{Generalized Scalable Optimal Sparse Decision Trees}
\begin{document} 

\twocolumn[
\icmltitle{Generalized and Scalable Optimal Sparse Decision Trees}



\icmlsetsymbol{equal}{*}
\begin{icmlauthorlist}
\icmlauthor{Jimmy Lin}{equal,ubc}
\icmlauthor{Chudi Zhong}{equal,duke}
\icmlauthor{Diane Hu}{duke}
\icmlauthor{Cynthia Rudin}{duke}
\icmlauthor{Margo Seltzer}{ubc}
\end{icmlauthorlist}

\icmlaffiliation{ubc}{University of British Columbia, Vancouver, Canada}
\icmlaffiliation{duke}{Duke University, Durham, North Carolina, USA}

\icmlcorrespondingauthor{Cynthia Rudin}{cynthia@cs.duke.edu}

\icmlkeywords{decision trees, discrete optimization, dynamic programming, sparsity}

\vskip 0.3in
]



\printAffiliationsAndNotice{\icmlEqualContribution} 

\begin{abstract} 
Decision tree optimization is notoriously difficult from a computational perspective but essential for the field of interpretable machine learning. Despite efforts over the past 40 years, only recently have optimization breakthroughs been made that have allowed practical algorithms to find \textit{optimal} decision trees. These new techniques have the potential to trigger a paradigm shift where it is possible to construct sparse decision trees to efficiently optimize a variety of objective functions without relying on greedy splitting and pruning heuristics that often lead to suboptimal solutions. The contribution in this work is to provide a general framework for decision tree optimization that addresses the two significant open problems in the area: treatment of imbalanced data and fully optimizing over continuous variables. We present techniques that produce optimal decision trees over a variety of objectives including F-score, AUC, and partial area under the ROC convex hull. We also introduce a scalable algorithm that produces provably optimal results in the presence of continuous variables and speeds up decision tree construction by several orders of magnitude relative to the state-of-the art.
\end{abstract} 

\section{Introduction}
\label{Sec:intro}
Decision tree learning has served as a foundation for interpretable artificial intelligence and machine learning for over half a century \cite{MorganSo1963,payne1977algorithm,Loh14}. 
The major approach since the 1980's has been decision tree induction, where heuristic splitting and pruning procedures grow a tree from the top down and prune it back afterwards \citep{Quinlan93,Breiman84}. The problem with these methods is that they tend to produce suboptimal trees with no way of knowing how suboptimal the solution is. This leaves a gap between the performance that a decision tree \textit{might} obtain and the performance that one actually attains, with no way to check on (or remedy) the size of this gap--and sometimes, the gap can be large. 

Full decision tree optimization is NP-hard, with no polynomial-time approximation
\citep{laurent1976constructing}, leading to challenges in proving optimality or bounding the optimality gap in a reasonable amount of time, even for small datasets. It is possible to create assumptions that reduce hard decision tree optimization to cases where greedy algorithms suffice, such as independence between features \cite{Klivans06}, but these assumptions do not hold in reality. If the data can be perfectly separated with zero error, SAT solvers can be used to find optimal decision trees rapidly \cite{narodytska2018learning}; however, real data is generally not separable, leaving us with no choice other than to actually solve the problem.

Decision tree optimization is amenable to branch-and-bound methods, implemented either via generic mathematical programming solvers or by customized algorithms. Solvers have been used from the 1990's \citep{Bennett92,Bennett96optimaldecision} to the present \cite{verwer2019learning,molerooptimal,nijssen2020,bertsimas2017optimal,ErtekinRu18,MenickellyGKS18,vilas2019}, but these generic solvers tend to be slow. A common way to speed them up is to make approximations in preprocessing to reduce the size of the search space. For instance, ``bucketization'' preprocessing is used in both generic solvers \cite{verwer2019learning} and customized algorithms \cite{nijssen2020} to handle continuous variables. Continuous variables pose challenges to optimality; even one continuous variable increases the number of possible splits by the number of possible values of that variable in the entire database, and each additional split leads to an exponential increase in the size of the optimization problem. 
Bucketization is tempting and seems innocuous, but we prove in Section \ref{sec:DoesntWork} that it sacrifices optimality.



Dynamic programming methods have been used for various decision tree optimization problems since as far back as the early 1970's \cite{Garey72,MeiselMi73}. Of the more recent attempts at this challenging problem, \citet{GarofalakisHyRaSh03} use a dynamic programming method for finding an optimal subtree within a predefined larger decision tree grown using standard greedy induction. Their trees inherit suboptimality from the induction procedure used to create the larger tree. 
The DL8 algorithm \cite{nijssen2007mining} performs dynamic programming on the space of decision trees. 
However, without mechanisms to reduce the size of the space and to reduce computation, the method cannot be practical. A more practical extension is the DL8.5 method \cite{nijssen2020}, which uses a hierarchical upper bound theorem to reduce the size of the search space. However, it also uses bucketization preprocessing, which sacrifices optimality; without this preprocessing or other mechanisms to reduce computation, the method suffers in computational speed.


The CORELS algorithm \citep{AngelinoLaAlSeRu17-kdd,AngelinoEtAl18,Larus-Stone17}, which is an associative classification method rather than an optimal decision tree method, breaks away from the previous literature in that it is a custom branch-and-bound method with custom bounding theorems, its own bit-vector library, specialized data structures, and an implementation that leverages computational reuse. CORELS is able to solve problems within a minute that, using any other prior approach, might have taken weeks, or even months or years.
Hu et al.~\cite{HuRuSe2019} adapted the CORELS philosophy to produce an Optimal Sparse Decision Tree (OSDT) algorithm that leverages some of CORELS' libraries and its computational reuse paradigm, as well as many of its theorems, which dramatically reduce the size of the search space. However, OSDT solves an exponentially harder problem than that of CORELS' rule list optimization, producing scalability challenges, as we might expect. 

This work addresses two fundamental limitations in existing work: unsatisfying results for imbalanced data and scalability challenges when trying to fully optimize over continuous variables.
Thus, the first contribution of this work is
\textit{to massively generalize sparse decision tree optimization to handle a wide variety of objective functions, including weighted accuracy (including multi-class), balanced accuracy, F-score, AUC and partial area under the ROC convex hull.
} Both CORELS and OSDT were designed to maximize accuracy, regularized by sparsity, and neither were designed to handle other objectives. CORELS has been generalized \citep{FairCORELS,ChenRu18} to handle some constraints, but not to the wide variety of different objectives one might want to handle in practice. Generalization to some objectives is straightforward (e.g., weighted accuracy) but non-trivial in cases of optimizing rank statistics (e.g., AUC), which typically require quadratic computation in the number of observations in the dataset. However, for \textit{sparse} decision trees, this time is much less than quadratic, because all observations within a leaf of a tree are tied in score, and  there are a sparse number of leaves in the tree. Taking advantage of this permits us to rapidly calculate rank statistics and thus optimize over them. The second contribution is \textit{to present a new representation of the dynamic programming search space that exposes a high degree of computational reuse when modelling continuous features}.
 The new search space representation provides a useful solution to a problem identified in the CORELS paper, which is how to use ``similar support'' bounds in practice. A similar support bound states that if two features in the dataset are similar, but not identical, to each other, then bounds obtained using the first feature for a split in a tree can be leveraged to obtain bounds for the same tree, were the second feature to replace the first feature. However, if the algorithm checks the similar support bound too frequently, the bound slows the algorithm down, despite reducing the search space. Our method uses hash trees that represent similar trees using shared subtrees, which naturally accelerates the evaluation of similar trees. The implementation, coupled with a new type of incremental similar support bound, is  efficient enough to handle a few continuous features by creating dummy variables for all unique split points along a feature. This permits us to obtain smaller optimality gaps and certificates of optimality for mixed binary and continuous data when optimizing additive loss functions several orders of magnitude more quickly than any other method that currently exists. 

Our algorithm is called Generalized and Scalable Optimal Sparse Decision Trees (GOSDT, pronounced ``ghost''). A chart detailing a qualitative comparison of GOSDT to previous decision tree approaches is in Appendix \ref{app:ComparisonTable}.


\section{Notation and Objectives}\label{Sec:notation}
We denote the training dataset as $\{(x_i, y_i)\}_{i=1}^N$ , where $x_i \in \{0,1\}^M$ are binary features. Our notation uses $y_i \in \{0, 1\}$, though  our code is implemented for multiclass classification as well. For each real-valued feature, we create a split point at the mean value between every ordered pair of unique values present in the training data. Following notation of \citet{HuRuSe2019}, we represent a tree as a set of leaves; this is important because it allows us \textit{not} to store the splits of the tree, only the conditions leading to each leaf. A leaf set $d = (l_1,l_2, ..., l_{H_d})$ contains $H_d$ distinct leaves, 
where $l_i$ is the classification rule of the leaf $i$, that is, the set of conditions along the branches that lead to the leaf, and $\hat{y}_i^{\leaf}$ is the label prediction for all data in leaf $i$. 
For a tree $d$, we define the objective function as a combination of the loss and a penalty on the number of leaves, with regularization parameter $\lambda$:
\begin{equation}\label{eq:risk}
    R(d,\x,\y) = \ell(d,\x,\y)+\lambda H_d.
\end{equation}
Let us first consider \textit{monotonic losses} $\ell(d,\x,\y)$, which are monotonically increasing in the number of false positives ($FP$) and the number of false negatives ($FN$), and thus can be expressed alternatively as $\tilde{l}(FP, FN)$. We will specifically consider the following objectives in our implementation. (These are negated to become losses.)
\begin{itemize}[leftmargin=*, topsep=0pt, noitemsep]
    \item Accuracy = $1-\frac{FP+FN}{N}$: fraction of correct predictions. 
    \item Balanced accuracy = $1-\frac{1}{2}(\frac{FN}{N^+}+\frac{FP}{N^-})$: the average of true positive rate and true negative rate. Let $N^+$ be the number of positive samples in the training set and $N^-$ be the number of negatives. 
    \item Weighted accuracy = $1- \frac{FP+\omega FN}{\omega N^+ + N^-}$ for a predetermined threshold $\omega$: the cost-sensitive accuracy that penalizes more on predicting positive samples as negative.  
    \item F-score = $1-\frac{FP+FN}{2N^+ + FP-FN}$: the harmonic mean of precision and recall.
\end{itemize}
Optimizing F-score directly is difficult even for linear modeling, because it is non-convex \citep{nan2012optimizing}. 
In optimizing F-score for decision trees, the problem is worse -- a conundrum is possible where two leaves exist, the first leaf containing a higher proportion of positives than the other leaf, yet the first is classified as negative and the second classified as positive. We discuss how this can happen in Appendix \ref{app:Fscore} and how we address it, which is to force monotonicity by sweeping across leaves from highest to lowest predictions to calculate the F-score (see Appendix \ref{app:Fscore}).

We consider two objectives that are rank statistics: 
\begin{itemize}[leftmargin=*, topsep=0pt, noitemsep]
    \item Area under the ROC convex hull (AUC$_\ch$): the fraction of correctly ranked positive/negative pairs.
    \item Partial area under the ROC convex hull (pAUC$_\ch$) for predetermined threshold $\theta$: the area under the leftmost part of the ROC curve. 
\end{itemize}
 Some of the bounds from OSDT \citep{HuRuSe2019} have  straightforward extensions to the objectives listed above, namely the \textbf{Upper Bound on Number of Leaves} and \textbf{Leaf Permutation Bound}. The remainder of OSDT's bounds do not adapt. Our new bounds are the \textbf{Hierarchical Objective Lower Bound}, \textbf{Incremental Progress Bound to Determine Splitting}, \textbf{Lower Bound on Incremental Progress}, \textbf{Equivalent Points Bound}, \textbf{Similar Support Bound}, \textbf{Incremental Similar Support Bound}, and a \textbf{Subset Bound}. To focus our exposition, derivations and bounds for balanced classification loss, weighted classification loss, and F-score loss are in Appendix \ref{App:Objectives}, and derivations for AUC loss and partial AUC loss are in Appendix \ref{App: Obj_rank}, with the exception of the hierarchical lower bound for AUC$_{\ch}$, which appears in Section \ref{Sec:NewObjectives} to demonstrate how these bounds work. 

\subsection{Hierarchical Bound for AUC Optimization}\label{Sec:NewObjectives}

Let us discuss objectives that are rank statistics. If a classifier creates binary (as opposed to real-valued) predictions, its ROC curve consists of only three points (0,0), (FPR, TPR), and (1,1). The AUC of a labeled tree is the same as the balanced accuracy, because $\AUC = \frac{1}{2}(\frac{TP}{N^+}\times \frac{FP}{N^-}) + (1-\frac{FP}{N^-})\times \frac{TP}{N^+} + \frac{1}{2}( (1-\frac{TP}{N^+})\times(1-\frac{FP}{N^-}))$, and since $TP=N^+-FN$, we have $\AUC = \frac{1}{2}(\frac{N^+-FN}{N^+}\times \frac{FP}{N^-})+ (1-\frac{FP}{N^-})\times \frac{N^+-FN}{N^+} + \frac{1}{2} (\frac{FN}{N^+}\times\frac{N^--FP}{N^-}) = 1-\frac{1}{2}(\frac{FP}{N^-}+\frac{FN}{N^+})$. The more interesting case is when we have real-valued predictions for each leaf and use the ROC convex hull (ROCCH), defined shortly, as the objective.

Let $n_i^+$ be the number of positive samples in leaf $i$ ($n_i^-$ is the number of negatives) and let $r_i$ be the fraction of positives in leaf $i$. Let us define the area under the ROC convex hull (ROCCH) \citep{ferri2002} for a tree. For a tree $d$ consisting of $H_d$ distinct leaves, $d=(l_1,...,l_{H_d})$, we reorder leaves according to the fraction of positives, $r_1 \geq r_2 \geq ...\geq r_{H_d}$. For any $i=0,..., H_d$, define a labeling $S_i$ for the leaves that labels the first $i$ leaves as positive and remaining $H_d-i$ as negative. The collection of these labelings is
$\Gamma=S_0, S_1, ..., S_{H_d}$, where each $S_i$ defines one of the $H_d+1$ points on the ROCCH \citep[see e.g.,][]{ferri2002}. The associated misranking loss is then 1-AUC$_{\textrm{ch}}$:
\begin{equation}\label{eq:auc_loss}
    \ell(d, \boldsymbol{x}, \boldsymbol{y}) \h2=\h2 1\h2-\frac{1}{2N^+N^-}\sum\limits_{i=1}^{H}n_i^-\bigg[\bigg(\sum\limits_{j=1}^{i-1}2n_j^+\bigg)+n_i^+\bigg]\h2.
\end{equation}

Now let us derive a lower bound on the loss for trees that are incomplete, meaning that some parts of the tree are not yet fully grown.
For a partially-grown tree $d$, the leaf set can be rewritten as $d = (d_{\fix}, r_{\fix}, d_{\splitrm}, r_{\splitrm}, K, H_d)$, where $d_{\fix}$ is a set of $K$ fixed leaves that we choose not to split further and $d_{\splitrm}$ is the set of $H_d-K$ leaves that can be further split; this notation reflects how the algorithm works, where there are multiple copies of a tree, with some nodes allowed to be split and some that are not. $r_{\fix}$ and $r_{\splitrm}$ are fractions of positives in the leaves. If we have a new fixed $d'_{\fix}$, which is a superset of $d_{\fix}$, then we say $d'_{\fix}$ is a child of $d_{\fix}$. We define $\sigma(d)$ to be all such child trees:
\begin{equation}\label{eq:child}
    \sigma(d) = \{(d'_{\fix}, r'_{\fix}, d'_{\splitrm}, r'_{\splitrm}, K', H_d'): d_{\fix} \subseteq d'_{\fix}\}.
\end{equation}
Denote $N^+_{\splitrm}$ and $N^-_{\splitrm}$ as the number of positive and negative samples captured by $d_{\splitrm}$ respectively. Through additional splits, in the best case, $d_{\splitrm}$ can give rise to pure leaves, where positive ratios of generated leaves are either 1 or 0. Then the top-ranked leaf could contain up to $N^+_{\splitrm}$ positive samples (and 0 negative samples), and the lowest-ranked leaf could capture as few as 0 positive samples and up to $N^-_{\splitrm}$ samples. Working now with just the leaves in $d_{\fix}$, we reorder the leaves in $d_{\fix}$ by the positive ratios ($r_{\fix}$), such that $\forall i\in \{1,...,K\}, r_1 \geqslant r_2 \geqslant ... \geqslant r_K$. Combining these fixed leaves with the bounds for the split leaves, we can define a lower bound on the loss as follows.
\begin{theorem}\label{thm: lb_auc}(Lower bound for negative AUC convex hull)
For a tree $d= (d_{\fix}, r_{\fix}, d_{\splitrm}, r_{\splitrm}, K, H_d)$
using ${\textrm{\rm AUC}_{\rm{ch}}}$ as the objective, a lower bound on the loss is $b(d_{\fix}, \x, \y) \leqslant R(d,\x,\y)$, where:\vspace*{-4pt}
\begin{eqnarray}\nonumber
    b(d_{\fix},\x,\y) \h2\h2&\h2\h2=\h2\h2&\h2\h2 1-\frac{1}{2N^+N^-}\bigg (\sum\limits_{i=1}^K n_i^-\bigg[2N_{\splitrm}^+ + \bigg(\sum\limits_{j=1}^{i-1} 2n_j^+\bigg)\\
    \label{eq:auc_lb}
    && +n_i^+\bigg]+2N^+N_{\splitrm}^- \bigg) + \lambda H_d.
\end{eqnarray}
\end{theorem}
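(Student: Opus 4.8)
\emph{Proof plan.} I would prove the slightly stronger (and more useful) statement that $b(d_{\fix},\x,\y)\le R(d',\x,\y)$ for \emph{every} child tree $d'\in\sigma(d)$ of \eqref{eq:child}; the theorem is then the case $d'=d$, which lies in $\sigma(d)$ because $d_{\fix}\subseteq d_{\fix}$. Write $b(d_{\fix},\x,\y)=1-\AUC^{ub}+\lambda H_d$, where $\AUC^{ub}=\tfrac{1}{2N^+N^-}\big(\sum_{i=1}^{K}n_i^-[\,2N^+_{\splitrm}+\sum_{j=1}^{i-1}2n_j^++n_i^+\,]+2N^+N^-_{\splitrm}\big)$. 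Since $R(d',\x,\y)=\ell(d',\x,\y)+\lambda H_{d'}=1-\AUC_{\ch}(d')+\lambda H_{d'}$, and since any $d'\in\sigma(d)$ keeps the $K$ leaves of $d_{\fix}$ and merely refines the $H_d-K$ leaves of $d_{\splitrm}$, so that $H_{d'}\ge K+(H_d-K)=H_d$ and hence $\lambda H_{d'}\ge\lambda H_d$, it suffices to prove the single inequality $\AUC_{\ch}(d')\le\AUC^{ub}$.

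For this I would apply formula \eqref{eq:auc_loss} to $d'$, listing $d'$'s leaves in the order $r_1\ge r_2\ge\cdots$ that the formula uses, and split its outer sum into the terms indexed by the $K$ leaves of $d_{\fix}$ and those indexed by the leaves refining $d_{\splitrm}$. Two elementary observations do the work. First, restricting the global sorted order to the fixed leaves returns exactly their own sorted order $r_1\ge\cdots\ge r_K$, and for a fixed leaf $i$ the positives strictly above it obey $\sum_{j<i}n_j^+\le\big(\sum_{j<i,\,j\textrm{ fixed}}n_j^+\big)+N^+_{\splitrm}$, because at worst all $N^+_{\splitrm}$ split-region positives rank above it. Second, for any leaf $i$ refining $d_{\splitrm}$ we have $2\sum_{j<i}n_j^++n_i^+\le 2(N^+-n_i^+)+n_i^+\le 2N^+$. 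Plugging the first bound into the fixed-leaf terms and the second into the split-region terms, and using $\sum_{i\textrm{ fixed}}n_i^-=N^-_{\fix}$ and $\sum_{i\textrm{ refining }d_{\splitrm}}n_i^-=N^-_{\splitrm}$, gives
\begin{equation*}
\sum_i n_i^-\Big[\,2\sum_{j<i}n_j^++n_i^+\,\Big]\ \le\ \sum_{i=1}^{K}n_i^-\Big[\,2N^+_{\splitrm}+2\sum_{j=1}^{i-1}n_j^++n_i^+\,\Big]+2N^+N^-_{\splitrm},
\end{equation*}
and dividing by $2N^+N^-$ is exactly $\AUC_{\ch}(d')\le\AUC^{ub}$. (The relaxation is in fact tight: placing all split-region positives in pure leaves at the top and all split-region negatives in pure leaves at the bottom attains $\AUC^{ub}$, matching the ``best case'' discussion before the theorem.) With the leaf-count observation from the first paragraph this yields $b(d_{\fix},\x,\y)=1-\AUC^{ub}+\lambda H_d\le 1-\AUC_{\ch}(d')+\lambda H_{d'}=R(d',\x,\y)$.

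I do not anticipate a deep obstacle; the subtlety is entirely in the bookkeeping, and the one place that genuinely needs justification is the use of \eqref{eq:auc_loss} itself: one should check that ordering the leaves by decreasing $r_i$ already yields a concave ROC curve, so that no convex-hull step alters the value and the trapezoidal sum in \eqref{eq:auc_loss} really is $\AUC_{\ch}$. This holds because a segment of that curve has slope $\tfrac{n_i^+/N^+}{n_i^-/N^-}$, which is monotone in $n_i^+/n_i^-$ and therefore in $r_i$, so slopes are nonincreasing along the order; the same observation shows the value of \eqref{eq:auc_loss} is independent of how ties among the $r_i$ are resolved. A minor additional point is that $d'_{\fix}$ may be a strict superset of $d_{\fix}$, so some leaves refining $d_{\splitrm}$ are themselves ``fixed'' in $d'$ --- but the argument only used the partition of samples into the $K$ original fixed leaves and the remaining split region, so this changes nothing.
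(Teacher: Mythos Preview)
Your proposal is correct and takes a genuinely different route from the paper. The paper's proof rests on Lemma~\ref{lemma:auc}, a four-case geometric argument showing that splitting any impure leaf weakly increases $\AUC_{\ch}$; from this it infers that the supremum of $\AUC_{\ch}$ over refinements of $d_{\splitrm}$ is attained when all split-region leaves are pure, and then computes that supremal area by summing rectangles and triangles under the hypothetical ROCCH (Figure~\ref{fig:aucchderive}). Your argument bypasses Lemma~\ref{lemma:auc} entirely: you bound the trapezoidal sum \eqref{eq:auc_loss} term-by-term, using only $\sum_{j<i,\,j\in S}n_j^+\le N^+_{\splitrm}$ for fixed leaves and $2\sum_{j<i}n_j^++n_i^+\le 2N^+$ for split-region leaves. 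This is more elementary---no case analysis, no picture---and it makes the tightness claim (pure positives on top, pure negatives on bottom) an immediate corollary rather than the starting intuition. What the paper's route buys is Lemma~\ref{lemma:auc} itself, which is reused independently (e.g., in the proof of Theorem~\ref{thm: hierarchy_auc} and implicitly in Theorem~\ref{thm:leaf_supp_rank}); your argument gives Theorems~\ref{thm: lb_auc} and~\ref{thm: hierarchy_auc} in one stroke but does not supply that monotonicity lemma for other uses.

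One small caution: your claim that every $d'\in\sigma(d)$ satisfies $H_{d'}\ge H_d$ is not forced by the formal definition \eqref{eq:child}, which only requires $d_{\fix}\subseteq d'_{\fix}$; in principle the leaves covering the split region could be coarser than $d_{\splitrm}$. This is immaterial for Theorem~\ref{thm: lb_auc} (where $d'=d$) and for Theorem~\ref{thm: hierarchy_auc} (where $H_{d'}>H_d$ is a hypothesis), but if you keep the ``slightly stronger'' framing you should either invoke the intended reading of ``child tree'' as a refinement or simply drop the leaf-count argument and rely on $H_{d'}=H_d$ for the theorem as stated.
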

This leads directly to a hierarchical lower bound for the negative of the AUC convex hull.
\begin{theorem} \label{thm: hierarchy_auc}(Hierarchical objective lower bound for negative AUC convex hull)
Let $d = (d_{\fix}, r_{\fix}, d_{\splitrm}, r_{\splitrm}, K, H_d)$ be a tree with fixed leaves $d_{{\rm \fix}}$ and $d' = (d'_{{\rm \fix}}, r'_{\fix}, d'_{\splitrm}, r'_{\splitrm}, K', H_d') \in \sigma (d)$ be any child tree such that its fixed leaves $d'_{\fix}$ contain $d_{\fix}$, and $H_d' > H_d$, then $b(d_{{\rm \fix}},\x,\y) \leqslant R(d',\x,\y)$. 
\end{theorem}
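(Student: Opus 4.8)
The plan is to deduce Theorem~\ref{thm: hierarchy_auc} from Theorem~\ref{thm: lb_auc} together with a monotonicity property of the bound $b$ under the child relation $\sigma$. First I would apply Theorem~\ref{thm: lb_auc} to the child tree $d'$ itself: since $d'$ is again a partially grown tree with fixed leaves $d'_{\fix}$, it gives $b(d'_{\fix},\x,\y) \leqslant R(d',\x,\y)$. It then suffices to establish the monotonicity inequality $b(d_{\fix},\x,\y) \leqslant b(d'_{\fix},\x,\y)$, after which chaining the two inequalities proves the claim.

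To prove monotonicity, I would first reinterpret the right-hand side of \eqref{eq:auc_lb}. Up to the additive term $\lambda H_d$, $b(d_{\fix},\x,\y)$ equals $1-\AUC_{\ch}$ of the idealized leaf configuration $T$ obtained by keeping the $K$ fixed leaves with their true counts $n_i^+,n_i^-$ (ordered by decreasing positive ratio), placing one artificial pure-positive leaf holding all $N_{\splitrm}^{+}$ positives and no negatives at the top of the ranking, and one artificial pure-negative leaf holding all $N_{\splitrm}^{-}$ negatives and no positives at the bottom; this is verified by expanding the pair-counting formula \eqref{eq:auc_loss} for $T$. The same description applies to $b(d'_{\fix},\x,\y)$, with an idealized configuration $T'$ built from the $K'$ leaves of $d'_{\fix}$ and from the positive/negative counts of the split leaves $d'_{\splitrm}$; here one uses that the data covered by $d_{\splitrm}$ is exactly the data covered by $(d'_{\fix}\setminus d_{\fix}) \cup d'_{\splitrm}$, so that the counts of the leaves in $d'_{\fix}\setminus d_{\fix}$ are subtracted from $N_{\splitrm}^{+}$ and $N_{\splitrm}^{-}$ to obtain the split-leaf counts for $d'$.

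Next I would compare $T$ and $T'$ pair by pair over all positive/negative sample pairs. Let $A$ denote the samples covered by the newly fixed leaves $d'_{\fix}\setminus d_{\fix}$. In $T$ every $A$-positive lies in the top artificial leaf and every $A$-negative lies in the bottom artificial leaf, so every sample pair involving a member of $A$ is ranked correctly and contributes the maximal amount to $\AUC_{\ch}(T)$; on the other hand, every pair of two non-$A$ samples sits in the same relative order in $T$ and in $T'$, because the non-$A$ fixed leaves are untouched and the (shrunken) artificial leaves still sit at the two extremes, so such a pair contributes identically in both. Hence $\AUC_{\ch}(T') \leqslant \AUC_{\ch}(T)$, i.e.\ the idealized loss for $d'_{\fix}$ is at least that for $d_{\fix}$; combining this with $\lambda H_d' \geqslant \lambda H_d$ (which holds since $H_d' > H_d$ and $\lambda \geqslant 0$) yields $b(d_{\fix},\x,\y) \leqslant b(d'_{\fix},\x,\y)$, and the theorem follows.

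I expect the reinterpretation step to be the main obstacle: one must match \eqref{eq:auc_lb} exactly to the loss of the concrete configuration $T$, keep careful track of how $N_{\splitrm}^{\pm}$ is partitioned between the newly fixed leaves and the still-splittable leaves, and handle degenerate situations (an artificial leaf becoming empty, or a fixed leaf of ratio $0$ or $1$ tying with an artificial leaf). Once the pair-by-pair picture is in place the inequality is immediate. An alternative route that bypasses the reinterpretation is to note that the derivation of Theorem~\ref{thm: lb_auc} already yields $b(d_{\fix},\x,\y) \leqslant R(d'')$ for every $d'' \in \sigma(d)$ and that $d' \in \sigma(d)$; phrasing the proof through the monotonicity of $b$, however, keeps it self-contained relative to Theorem~\ref{thm: lb_auc} as stated.
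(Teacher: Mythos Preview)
Your proposal is correct and shares the paper's high-level structure: apply Theorem~\ref{thm: lb_auc} to the child tree $d'$ to obtain $b(d'_{\fix},\x,\y)\le R(d',\x,\y)$, then establish the monotonicity $b(d_{\fix},\x,\y)\le b(d'_{\fix},\x,\y)$ and chain. The difference lies in how monotonicity is argued. The paper invokes a separate lemma (Lemma~\ref{lemma:auc}: splitting an impure leaf never decreases $\AUC_{\ch}$), whose proof is a four-case geometric analysis on the ROC convex hull; it then observes that passing from the idealized configuration for $d'_{\fix}$ to that for $d_{\fix}$ amounts to splitting the newly-fixed (impure) leaves into pure pieces. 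You instead reinterpret $b$ explicitly as $1-\AUC_{\ch}$ of the idealized configuration with two artificial pure leaves and compare $T$ and $T'$ pair by pair, noting that every positive/negative pair involving a sample from $d'_{\fix}\setminus d_{\fix}$ is already perfectly ranked in $T$, while non-$A$ pairs contribute identically. This is a genuinely more direct and self-contained argument: it avoids the case analysis of Lemma~\ref{lemma:auc} entirely and makes the reason for monotonicity transparent. The paper's route, on the other hand, isolates a reusable statement (Lemma~\ref{lemma:auc}) that it also needs elsewhere. Your caveats about matching \eqref{eq:auc_lb} to the configuration $T$ and handling ties with the artificial leaves are well placed but routine.
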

This type of bound is the fundamental tool that we use to reduce the size of the search space: if we compare the lower bound $b(d_{{\rm \fix}},\x,\y)$ for partially constructed tree $d$ to the best current objective $R^{c}$, and find that $b(d_{{\rm \fix}},\x,\y)\geq R^{c}$, then there is no need to consider $d$ or any subtree of $d$, as it is provably non-optimal.  
The hierarchical lower bound dramatically reduces the size of the search space. However, we also have a collection of tighter bounds at our disposal, as summarized in the next subsection.

We leave the description of partial AUC to Appendix \ref{App: Obj_rank}. Given a parameter $\theta$, the partial AUC of the ROCCH focuses only on the left area of the curve, consisting of the top ranked leaves, whose FPR is smaller than or equal to $\theta$. This metric is used in information retrieval and healthcare.

\subsection{Summary of Bounds}\label{sec:summbounds}
Appendix \ref{App:Objectives} presents our bounds, which are crucial for reducing the search space.
 Appendix \ref{App:Objectives} presents the Hierarchical Lower Bound (Theorem \ref{thm:hierarchy}) for any objective (Equation \ref{eq:risk}) with an arbitrary monotonic loss function. This theorem is analogous to the Hierarchical Lower Bound for AUC optimization above.
Appendix \ref{App:Objectives} also contains the Objective Bound with One-Step Lookahead (Theorem \ref{thm:onestep}), Objective Bound for Sub-Trees (Theorem \ref{thm:hierarchy_subtree}), Upper Bound on the Number of Leaves (Theorem \ref{thm:ub_leaf_num}),
Parent-Specific Upper Bound on the Number of Leaves (Theorem \ref{thm:ub_leaf_num_ps}), Incremental Progress Bound to Determine Splitting
(Theorem \ref{thm:leaf_supp}), Lower Bound on Incremental Progress (Theorem \ref{thm:increm}), Leaf Permutation Bound (Theorem \ref{thm:perm}), Equivalent Points Bound (Theorem \ref{thm:equiv}), and General Similar Support Bound (Theorem \ref{thm:similar}). As discussed, no similar support bounds have been used successfully in prior work. In Section \ref{sec:SimSupp}, we show how a new \textit{Incremental Similar Support Bound} can be implemented within our specialized DPB (\textit{dynamic programming with bounds}) algorithm to make decision tree optimization for additive loss functions (e.g., weighted classification error) much more efficient. 
Bounds for AUC$_\ch$ and pAUC$_\ch$ are in Appendix \ref{App: Obj_rank}, including the powerful Equivalent Points Bound (Theorem \ref{thm:equiv_rank}) for AUC$_\ch$ and pAUC$_\ch$ and proofs for Theorem \ref{thm: lb_auc} and Theorem \ref{thm: hierarchy_auc}. In Appendix \ref{app:subset_bound}, we provide a new \textit{Subset Bound} implemented within DPB algorithm to effectively remove thresholds introduced by continuous variables.

Note that we do not use convex proxies for rank statistics, as is typically done in supervised ranking (learning-to-rank). Optimizing a convex proxy for a rank statistic can yield results that are far from optimal \citep[see][]{RudinWa18}. Instead, we optimize the original (exact) rank statistics directly on the training set, regularized by sparsity.


\section{Data Preprocessing Using Bucketization Sacrifices Optimality}\label{sec:DoesntWork}
As discussed, a preprocessing step common to DL8.5 \cite{nijssen2020} and BinOct \cite{verwer2019learning} reduces the search space, but as we will prove, also sacrifices accuracy; we refer to this preprocessing as \textit{bucketization}.

\noindent \textbf{Definition:} The \textit{bucketization} preprocessing step proceeds as follows. Order the observations according to any feature $j$. For any two neighboring positive observations, no split can be considered between them. For any two neighboring negative observations, no split can be considered between them. All other splits are permitted. While bucketization may appear innocuous, we prove it sacrifices optimality. 
\begin{theorem}
The maximum training accuracy for a decision tree on a dataset preprocessed with bucketization can be lower (worse) than the maximum accuracy for the same dataset without bucketization. 
\end{theorem}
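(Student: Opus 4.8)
The statement only asserts existence, so the plan is to exhibit one dataset on which bucketization strictly lowers the best attainable training accuracy, rather than to reason about arbitrary data. A preliminary remark guides the construction: if the number of leaves is unconstrained, bucketization cannot hurt, because one can always keep splitting until every leaf is pure, and bucketization never puts two observations of opposite label into the same bucket (a label change always forces a split point between them), so the two unconstrained optima coincide. Hence the example must be built for trees of bounded complexity -- I will use depth at most two, equivalently the objective of Equation~\ref{eq:risk} with $\lambda$ large enough to cap the leaf count -- which is also the regime in which the methods being criticized (DL8.5, BinOct) operate.

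Next I would write down a small dataset with one continuous feature $f_1$ and one binary feature $f_2$ (a third feature may be needed; see below), designed so that: (i) some depth-two tree whose root splits $f_1$ at a threshold $t$ attains training accuracy $\alpha$; (ii) when the observations are sorted by $f_1$, a block of at least two consecutive observations carrying a single label straddles $t$, so that bucketization forbids every split lying strictly inside that block, $t$ among them; and (iii) $f_2$ enters in an XOR-like way -- e.g. the label flips with $f_2$ on the high-$f_1$ side of $t$ but not on the low side -- so that a single split of $f_1$ near $t$ is genuinely the right choice at the root and cannot simply be replaced by splitting $f_2$ first.

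The verification then has two parts. The easy part is to display the tree from (i) and check by counting that it achieves accuracy $\alpha$ on the raw data. The hard part -- and the main obstacle -- is to show that on the bucketized data every depth-two tree errs on strictly more observations than this tree. This is a finite case analysis: the admissible root splits are $f_2$ together with the $f_1$-thresholds at label boundaries that survive bucketization, and for each choice one bounds the best accuracy achievable by the two depth-one subtrees below it. The delicate point is that bucketization leaves admissible thresholds immediately on either side of the forbidden $t$; one must rule out that shifting the root split to such a neighbor -- which moves a boundary observation to the wrong side -- or placing $f_2$ (or a third feature) at the root recovers the accuracy lost by discarding $t$. This is exactly why the straddling block must be a genuine run of like-labeled points (so no nearby split is as good) and why the $f_2$-interaction must be tuned so that every such re-routing misclassifies at least one point; getting all these cases to close simultaneously is the crux of the argument, and is the reason a minimal example is not completely trivial. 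Once the enumeration is complete, the two maxima differ and the theorem follows; finally I would note that replicating each observation and chaining several such blocks inflates the accuracy gap to any desired size, showing the phenomenon is not an artifact of a tiny dataset.
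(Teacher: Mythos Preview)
Your proposal is correct and takes essentially the same approach as the paper: both are proofs by explicit construction of a dataset on which a split forbidden by bucketization is needed for the optimal bounded-complexity tree. The paper's construction differs in execution: it uses two \emph{continuous} features rather than one continuous plus one binary, and it certifies the two optimal accuracies (93.5\% vs.\ 92.2\%) by running three optimal-tree algorithms (BinOCT, DL8.5, GOSDT) rather than by the hand case analysis you outline. Your preliminary observation---that bucketization cannot hurt when tree size is unconstrained, so the example must live in a bounded-depth/regularized regime---is a clarifying point the paper leaves implicit; it explains why the reported optima are below 100\% and why the criticized methods' depth limits are what make the phenomenon bite.
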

The proof is by construction. In Figure \ref{fig:bucket1} we present a data set such that optimal training with bucketization cannot produce an optimal value of the objective. In particular, the optimal accuracy without bucketization is 93.5\%, whereas the accuracy of training with bucketization is 92.2\%. These numbers were obtained using BinOCT, DL8.5, and GOSDT. 
Remember, the algorithms provide a proof of optimality; these accuracy values are optimal, and the same values were found by all three algorithms. 

\begin{figure}
  \centering
  \includegraphics[width=0.4\linewidth]{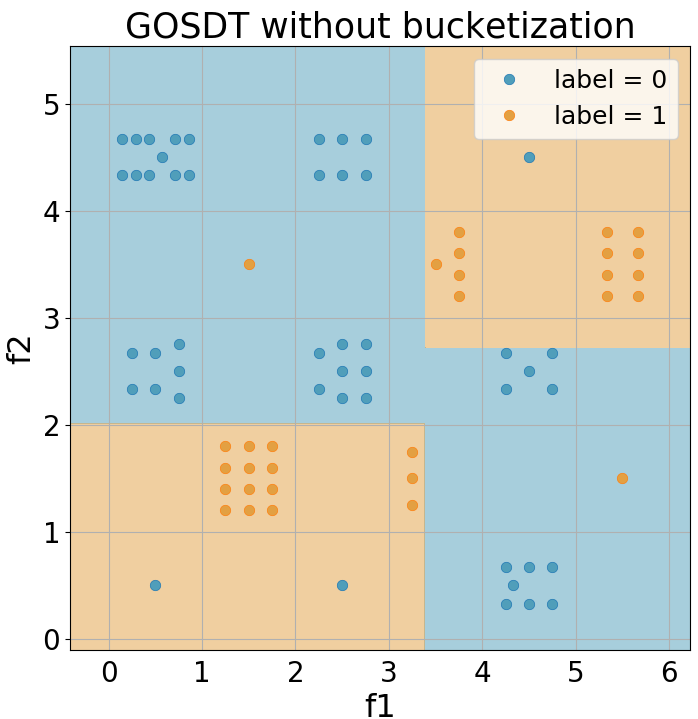}
  \includegraphics[width=0.4\linewidth]{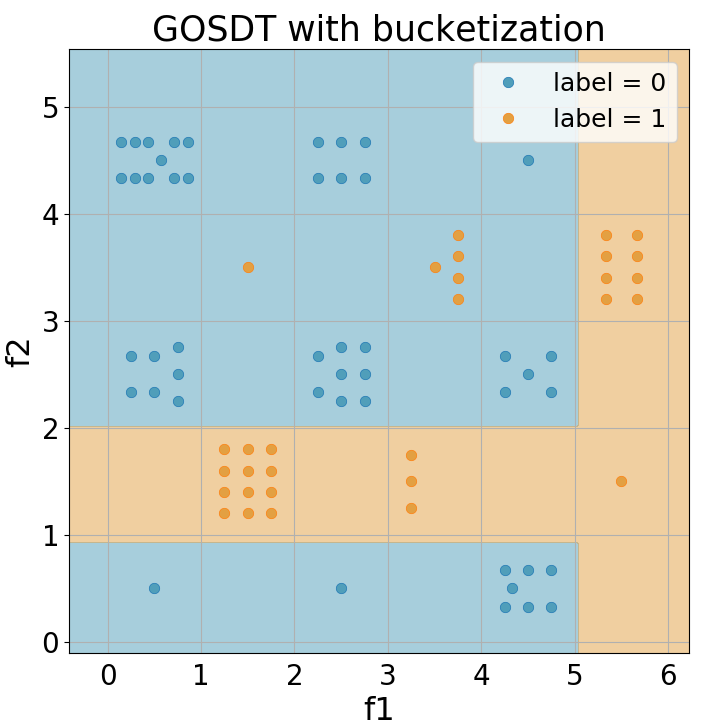}
  \caption{Proof by construction. Bucketization leads to suboptimality. The left plot's vertical split is not allowed in bucketization. 
  }\label{fig:bucket1}
\end{figure}  

Our dataset is two-dimensional. We expect the sacrifice to become worse with higher dimensions.

\section{GOSDT's DPB Algorithm} 
\label{Sec:DPB}
For optimizing non-additive loss function, we use PyGOSDT: a variant of GOSDT that is closer to OSDT \citep{HuRuSe2019}. For optimizing additive loss functions we use GOSDT which uses \textit{dynamic programming with bounds} (DPB) to provides a dramatic run time improvement. 
Like other dynamic programming approaches, we decompose a problem into smaller child problems that can be solved either recursively through a function call or in parallel by delegating work to a separate thread.
For decision tree optimization, these subproblems are the possible left and right branches of a decision node.

GOSDT maintains two primary data structures: a \emph{priority queue} to schedule problems to solve and a \emph{dependency graph} to store problems and their dependency relationships. We define a dependency relationship $dep(p_{\pi}, p_{c})$ between problems $p_{\pi}$ and $p_{c}$ if and only if the solution of $p_{\pi}$ depends on the solution of $p_{c}$. Each $p_c$ is further specified as $p_l^j$ or $p_r^j$ indicating that it is the left or right branch produced by splitting on feature $j$. 

Sections \ref{sec:support}, \ref{sec:SimSupp}, and \ref{sec:async} highlight two key differences between GOSDT and DL8.5 (since DL 8.5 also uses a form of dynamic programming):
(1) DL8.5 uniquely identifies a problem $p$ by the \textit{Boolean assertion} that is a conjunctive clause of all splitting conditions in its ancestry, while GOSDT represents a problem $p$ by the \textit{samples that satisfy the Boolean assertion}. This is described in Section \ref{sec:support}.
(2) DL8.5 uses blocking recursive invocations to execute problems, while GOSDT uses a \emph{priority queue} to schedule problems for later. This is described in Section \ref{sec:async}.
Section \ref{sec:vect} presents additional performance optimizations. Section \ref{sec:alg} presents a high-level summary of GOSDT. Note that GOSDT's DPB algorithm operates on weighted, additive, non-negative loss functions. For ease of notation, we use classification error as the loss in our exposition.

\subsection{Support Set Identification of Nodes}\label{sec:support}
GOSDT leverages the Equivalent Points Bound in Theorem \ref{thm:equiv} to save space and computational time.
To use this bound, we find the unique values of $\{x_1,...,x_N\}$, denoted by $\{z_1,...,z_U\}$, so that each $x_i$ equals one of the $z_u$'s. We store fractions $z_{u}^+$ and $z_{u}^-$, which are the fraction of positive and negative samples corresponding to each $z_u$.
\begin{eqnarray*}
    Z &=& \{z_u:z_u \in \textrm{unique}(X), 1 \leq u \leq U \leq N\}
    \\
    z_u^- &=& \frac{1}{N}\sum_{i=1}^{N}\mathds{1}[y_i = 0 \land x_i = z_u]
    \\ 
    z_u^+ &=& \frac{1}{N}\sum_{i=1}^{N}\mathds{1}[y_i = 1 \land x_i = z_u].
\end{eqnarray*}
Define $a$ as a Boolean assertion that is a conjunctive clause of conditions on the features of $z_u$ (e.g., $a$ is true if and only if the first feature of $z_u$ is 1 and the second feature of $z_u$ is 0). We define \textit{support set} $s_a$ as the set of $z_u$ that satisfy assertion $a$:
\begin{equation}
    s_a = \{ z_u:a(z_u)=\textrm{True}, 1\leq u\leq U \}.
\end{equation}
We implement each $s_a$ as a bit-vector of length $U$ and use it to \textit{uniquely identify} a \textit{problem} $p$ in the dependency graph. The bits $\{s_{au}\}_u$ of $s_a$ are defined as follows.
\begin{equation}
    s_{au} = 1 \iff z_u \in s_a.
\end{equation}
With each $p$, we track a set of values including its \textit{lower bound} ($lb$) and \textit{upper bound} ($ub$) of the optimal objective classifying its support set $s_a$.
 
In contrast, DL8.5 identifies each problem $p$ using Boolean assertion $a$ rather than $s_a$: this is an important difference between GOSDT and DL8.5 because many assertions $a$ could correspond to the same support set $s_a$. However, given the same objective and algorithm, an optimal decision tree for any problem $p$ depends only on the support set $s_a$. It does not depend on $a$ if one already knows $s_a$. That is, if two different trees both contain a leaf capturing the same set of samples $s$, the set of possible child trees for that leaf is the same for the two trees.
GOSDT solves problems identified by $s$. This way, it simultaneously solves all problems identified by any $a$ that produces $s$. 

\subsection{Connection to Similar Support Bound}
\label{sec:SimSupp}
No previous approach has been able to implement a similar support bound effectively. We provide GOSDT's new form of similar support bound, called the \textit{incremental similar support bound}. There are two reasons why this bound works where other attempts have failed:
(1) This bound works on \textit{partial trees}. One does not need to have constructed a full tree to use it.
(2) The bound takes into account the hierarchical objective lower bound, and hence leverages the way we search the space within the DBP algorithm. In brief, the bound effectively removes many similar trees from the search region by looking at only one of them (which does not need to be a full tree). 
We consider weighted, additive, non-negative loss functions: $\ell(d,\x,\y)=\sum_i \textrm{weight}_i \textrm{loss}(x_i,y_i)$. Define the maximum weighted loss: $\ell^{\max}=\max_{x,y} [\textrm{\rm weight}(x,y) \times \textrm{loss}(x,y)].$

\begin{theorem}\label{theorem:similar_support} (Incremental Similar Support Bound) 
Consider two trees $d=(d_{\fix}, d_{\splitrm}, K, H)$ and $D=(D_{\fix}, D_{\splitrm}, K, H)$ that differ only by their root node (hence they share the same $K$ and $H$ values). Further, the root nodes between the two trees are similar enough that the support going to the left and right branches differ by at most $\omega$ fraction of the observations. (That is, there are $\omega N$ observations that are captured either by the left branch of $d$ and right branch of $D$ or vice versa.)  Define $S_{\uncertain}$ as the maximum of the support within $d_{\splitrm}$ and $D_{\splitrm}$:
$
S_{\uncertain}=\max(\supp(d_{\splitrm}),\supp(D_{\splitrm})).
$
For any child tree $d' \in \sigma(d)$ grown from $d$ (grown from the nodes in $d_{\splitrm}$, that would not be excluded by the hierarchical objective lower bound) and for any child tree $D' \in \sigma(D)$ grown from $D$ (grown from nodes in $D_{\splitrm}$, not excluded by the hierarchical objective lower bound), we have:
\[
|R(d',\x,\y)-R(D',\x,\y)|\leq (\omega +2S_{\uncertain}) \ell^{\max}. 
\]
\end{theorem}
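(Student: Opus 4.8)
The plan is to split each child-tree risk into a \emph{fixed} contribution common to all children of a given parent and a \emph{splittable} contribution that varies with how the splittable leaves are grown, and then to bound the two contributions to $|R(d',\x,\y)-R(D',\x,\y)|$ separately via the triangle inequality. Because the loss is additive and the penalty is $\lambda$ per leaf, for any $d'\in\sigma(d)$ I will write
\[
R(d',\x,\y)=\underbrace{\ell_{\fix}(d)+\lambda K}_{R_{\fix}(d)}+\underbrace{\ell_{\splitrm}(d')+\lambda H_{\splitrm}(d')}_{R_{\splitrm}(d')},
\]
where $\ell_{\fix}(d)$ is the weighted loss of the observations captured by the $K$ fixed leaves $d_{\fix}$ (identical for every $d'\in\sigma(d)$, since those leaves and their labels are frozen), $\ell_{\splitrm}(d')$ is the weighted loss on the observations captured by the subtree grown out of $d_{\splitrm}$, and $H_{\splitrm}(d')$ is the number of leaves in that subtree (so $H_{d'}=K+H_{\splitrm}(d')$). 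An identical decomposition holds for $D'$, and then
\[
|R(d')-R(D')|\le|R_{\fix}(d)-R_{\fix}(D)|+|R_{\splitrm}(d')-R_{\splitrm}(D')|,
\]
with the first term to be bounded by $\omega\ell^{\max}$ and the second by $2S_{\uncertain}\ell^{\max}$.

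For the fixed term I would exploit the hypothesis that $d$ and $D$ differ only at the root, so below the root the two trees are structurally identical. Any observation that the two roots route to the same side then follows an identical path and lands in corresponding leaves with the same fixed/splittable status and the same frozen label, contributing equally to $\ell_{\fix}(d)$ and $\ell_{\fix}(D)$. Only the $\omega N$ observations routed to different sides by the two roots can land in different fixed leaves, and each contributes at most $\ell^{\max}$ in weighted loss. Since the $\lambda K$ terms are identical ($K$ is shared), this gives $|R_{\fix}(d)-R_{\fix}(D)|=|\ell_{\fix}(d)-\ell_{\fix}(D)|\le\omega\ell^{\max}$.

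For the splittable term I would bound the loss and the leaf penalty separately. The loss is immediate: $0\le\ell_{\splitrm}(d')\le\supp(d_{\splitrm})\,\ell^{\max}\le S_{\uncertain}\ell^{\max}$ (the subtree captures only the support of $d_{\splitrm}$, each captured observation contributing at most $\ell^{\max}$), and likewise for $D'$, so $|\ell_{\splitrm}(d')-\ell_{\splitrm}(D')|\le S_{\uncertain}\ell^{\max}$. The leaf penalty is the crux, and this is where the hypothesis that $d'$ is not excluded by the hierarchical objective lower bound becomes essential: it guarantees that every realized split inside the splittable subtree is justified, i.e.\ removes at least the per-leaf penalty $\lambda$ of loss (the incremental-progress guarantee of Theorem \ref{thm:increm}; a split failing it would be pruned). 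Since the total loss that the subtree can remove cannot exceed the loss initially present on $\supp(d_{\splitrm})$, which is at most $\supp(d_{\splitrm})\ell^{\max}$, the number of extra splits is at most $\supp(d_{\splitrm})\ell^{\max}/\lambda$. Hence $\lambda\big(H_{\splitrm}(d')-(H-K)\big)\le S_{\uncertain}\ell^{\max}$, the same for $D'$, and since both excess penalties lie in $[0,S_{\uncertain}\ell^{\max}]$ while the common baseline $\lambda(H-K)$ cancels, $\lambda|H_{\splitrm}(d')-H_{\splitrm}(D')|\le S_{\uncertain}\ell^{\max}$. Adding the two splittable bounds yields $|R_{\splitrm}(d')-R_{\splitrm}(D')|\le 2S_{\uncertain}\ell^{\max}$, and combining with the fixed term gives $(\omega+2S_{\uncertain})\ell^{\max}$, as claimed.

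The main obstacle is precisely the leaf-penalty step. This is the genuine difficulty of the \emph{uncoupled} claim: because $d'$ and $D'$ may grow to entirely different sizes, the $\lambda H_{d'}$ and $\lambda H_{D'}$ terms do not cancel, and without further control $\lambda|H_{d'}-H_{D'}|$ could be arbitrarily large (for instance, splitting already-pure leaves adds leaves while removing no loss). The non-exclusion hypothesis is therefore load-bearing, and the delicate point to make fully rigorous is translating ``not excluded by the hierarchical objective lower bound'' into the quantitative statement that each retained split removes at least $\lambda$ of loss, so that the excess leaf count is controlled by the available loss $\supp(d_{\splitrm})\ell^{\max}$ rather than being unbounded. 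I would also fix the normalization convention (whether $\omega$ and $S_{\uncertain}$ denote fractions or weighted masses of observations) up front, so that the per-observation factor $\ell^{\max}$ is applied consistently across all three bounds.
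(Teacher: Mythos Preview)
Your decomposition $R(d')=R_{\fix}(d)+R_{\splitrm}(d')$ and the fixed-term bound $|R_{\fix}(d)-R_{\fix}(D)|\le\omega\ell^{\max}$ are sound, and you correctly identify the leaf-penalty step as the crux. But your proposed resolution of that step has a real gap. You invoke Theorem~\ref{thm:increm} to claim that ``not excluded by the hierarchical objective lower bound'' forces every realized split to remove at least $\lambda$ of loss. Theorem~\ref{thm:increm}, however, is a statement about the \emph{optimal} tree $d^*$, not about an arbitrary child $d'\in\sigma(d)$ surviving the hierarchical bound. A tree $d'$ with $b(d'_{\fix},\x,\y)\le R^c$ can contain useless splits (say, splitting an already-pure leaf) as long as the accumulated penalty has not yet pushed $b(d'_{\fix},\x,\y)$ above $R^c$; so the per-split premise is false, and the counting argument ``extra splits $\le$ (available loss)$/\lambda$'' does not follow from it.

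The conclusion $\lambda(H_{d'}-H)\le S_{\uncertain}\ell^{\max}$ you are after is nevertheless correct, and the right route is to use the hierarchical bound \emph{globally} rather than per-split. Since the algorithm has evaluated $d$, one has $R^c\le R(d,\x,\y)$; together with ``not excluded'' this gives $\ell(d'_{\fix},\x,\y)+\lambda H_{d'}=b(d'_{\fix},\x,\y)\le R^c\le \ell(d,\x,\y)+\lambda H$. Since $d_{\fix}\subseteq d'_{\fix}$ implies $\ell(d'_{\fix},\x,\y)\ge\ell(d_{\fix},\x,\y)$, rearranging yields $\lambda(H_{d'}-H)\le\ell(d_{\splitrm},\x,\y)\le\supp(d_{\splitrm})\,\ell^{\max}\le S_{\uncertain}\ell^{\max}$, exactly what you need.

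This global use of the hierarchical bound is precisely the paper's Step~3. The paper avoids separating loss from leaf penalty altogether: it runs the triangle inequality through $R(d)$ and $R(D)$, bounding $|R(d')-R(d)|\le S_{\uncertain}\ell^{\max}$ in one direction via $b(d'_{\fix},\x,\y)=R(d',\x,\y)-\ell(d'_{\splitrm},\x,\y)\le R(d,\x,\y)$ and in the other by a direct additive-loss argument, and bounding $|R(d)-R(D)|\le\omega\ell^{\max}$ by your same ``move'' reasoning. Your fixed/splittable decomposition is a legitimate alternative route, but once you split $R_{\splitrm}$ into loss and penalty you are forced to recover the same inequality in disguise; keeping $R$ intact as the paper does is cleaner because $\lambda H_{d'}$ stays bundled inside $b$ and never has to be isolated.
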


The proof is in Appendix \ref{app:SimSupp}.
 Unlike the similar support bounds in CORELS and OSDT, which require pairwise comparisons of problems, this incremental similar support bound emerges from the support set representation. The descendants $\sigma(d)$ and $\sigma(D)$ share many of the same support sets. Because of this, the shared components of their upper and lower bounds are updated simultaneously (to similar values). This bound is helpful when our data contains continuous variables: if a split at value $v$ was already visited, then splits at values close to $v$ can reuse prior computation to immediately produce tight upper and lower bounds.

\subsection{Asynchronous Bound Updates}\label{sec:async}
GOSDT computes the objective values hierarchically by defining the minimum objective $R^*(p)$ of problem $p$ as an aggregation of minimum objectives over the child problems $p_l^j$ and $p_r^j$ of $p$ for $1 \leq j \leq M$.
\begin{equation}\label{eq:async}
    R^*(p) = \min_j(R^*(p_l^j) + R^*(p_r^j)).
\end{equation}
Since DL8.5 computes each $R^*(p_l^j)$ and $R^*(p_r^j)$ in a blocking call to the the child problems $p_l^j$ and $p_r^j$, it necessarily computes $R^*(p_l^j) + R^*(p_r^j)$ after solving $p_l^j$ and $p_r^j$, which is a disadvantage. In contrast, GOSDT computes bounds over $R^*(p_l^j) + R^*(p_r^j)$ that are available before knowing the exact values of $R^*(p_l^j)$ and $R^*(p_r^j)$. The bounds over $R^*(p_l^j)$ and $R^*(p_r^j)$ are solved asynchronously and possibly in parallel.
If for some $j$ and $j'$ the bounds imply that $R^*(p_l^j) + R^*(p_r^j) > R^*(p_l^{j'}) + R^*(p_r^{j'})$, then we can conclude that $p$'s solution no longer depends on $p_l^{j}$ and $p_r^{j}$. Since GOSDT executes asynchronously, it can draw this conclusion and focus on $R^*(p_l^{j'}) + R^*(p_r^{j'})$ without fully solving $R^*(p_l^j) + R^*(p_r^j)$.

To encourage this type of bound update, GOSDT uses the priority queue to send high-priority signals to each parent $p_\pi$ of $p$ when an update is available, prompting a recalculation of $R^*(p_\pi)$ using Equation \ref{eq:async}.

\subsection{Fast Selective Vector Sums}
\label{sec:vect}
New problems (i.e., $p_{l}$ and $p_{r}$) require initial upper and lower bounds on the optimal objective. We define the initial lower bound $lb$ and upper bound $ub$ for a problem $p$ identified by support set $s$ as follows. For $1 \leq u \leq U$, define:
\begin{equation}
    z_u^{\min} = \min(z_u^-,z_u^+). 
    \end{equation}
    This is the fraction of minority class samples in equivalence class $u$. Then,
    \begin{equation}
    lb = 2\lambda + \sum_u s_u z_u^{\min}. \label{eq:lb0}
\end{equation}
    This is a basic equivalence points bound, which predicts all minority class equivalence points incorrectly. Also,
\begin{equation}
    ub = \lambda + \min\left(\sum_u s_u z_u^-,\sum_u s_u z_u^+\right). \label{eq:ub0}
\end{equation}
This upper bound comes from a baseline algorithm of predicting all one class.

We use the \textit{prefix sum trick} in order to speed up computations of sums of a subset of elements of a vector. That is,
for any vector $z^{\textrm{vec}}$ (e.g., $z^{\min}$, $z^-$, $z^+$), we want to compute a sum of a subsequence of $z^{\textrm{vec}}$:
we precompute (during preprocessing) a prefix sum vector $z^{\cumulative}$ of the vector $z^{\textrm{vec}}$ defined by the cumulative sum $z^{\cumulative}_u = \sum_{j=1}^{u}z^{\textrm{vec}}_j$.
During the algorithm, 
to sum over ranges of contiguous values in $z^{\textrm{vec}}$, over some indices $a$ through $b$, we now need only take $z^{\cumulative}[b] - z^{\cumulative}[a-1]$. This reduces a linear time calculation to constant time.
This fast sum is leveraged over calculations with the support sets of input features--for example, quickly determining the difference in support sets between two features.


\subsection{The GOSDT Algorithm}
\label{sec:alg}
Algorithm \ref{alg:gosdt_summary} constructs and optimizes problems in the \textit{dependency graph} such that, upon completion, we can extract the optimal tree by traversing the dependency graph by greedily choosing the split with the lowest objective value. This extraction algorithm and a more detailed version of the GOSDT algorithm is provided in Appendix \ref{App:algorithm}.
We present the key components of this algorithm, highlighting the differences between GOSDT and DL8.5. Note that all features have been binarized prior to executing the algorithm.


\textbf{Lines 8 to 11}: Remove an item from the queue. If its bounds are equal, no further optimization is possible and we can proceed to the next item on the queue.

\textbf{Lines 12 to 18}: Construct new subproblems, $p_l$, $p_r$ by splitting on feature $j$. Use lower and upper bounds of $p_l$ and $p_r$ to compute new bounds for $p$. \textit{We key the problems by the bit vector corresponding to their support set $s$}. Keying problems in this way avoids processing the same problem twice; other dynamic programming implementations, such as DL8.5, will process the same problem multiple times.

\textbf{Lines 19 to 23}: Update $p$ with $lb'$ and $ub'$ computed using Equation \ref{eq:async} and propagate that update to all ancestors of $p$ by enqueueing them with high priority ($p$ will have multiple parents if two different conjunctions produce the same support set). This triggers the ancestor problems to recompute their bounds using Equation \ref{eq:async}. \textit{The high priority ensures that ancestral updates occur before processing $p_l$ and $p_r$.} This scheduling is one of the key differences between GOSDT and DL8.5; by eagerly propagating bounds up the dependency tree, GOSDT prunes the search space more aggressively.

\textbf{Lines 26 to 32}: Enqueue $p_l$ and $p_r$ only if the interval between their lower and upper bounds overlaps with the interval between $p$'s lower and upper bounds. This ensures that eventually the lower and upper bounds of $p$ converge to a single value.

\textbf{Lines 34 to 42}: Define  \textbf{FIND\_OR\_CREATE\_NODE}, which constructs (or finds an existing) problem $p$ corresponding to support set $s$, initializing the lower and upper bound and checking if $p$ is eligible to be split, using the Incremental Progress Bound to Determine Splitting (Theorem \ref{thm:leaf_supp}) and the Lower Bound on Incremental Progress (Theorem \ref{thm:increm}) (these bounds are checked in subroutine \textit{fails\_bounds}). \textit{get\_lower\_bound} returns $lb$ using Equation \ref{eq:lb0}. \textit{get\_upper\_bound} returns $ub$ using Equation \ref{eq:ub0}.

An implementation of the algorithm is available at: \url{https://github.com/Jimmy-Lin/GeneralizedOptimalSparseDecisionTrees}.

\definecolor{comment}{rgb}{0.2,0.4,0.2}
\def\comment#1{\textcolor{comment}{\textit{ // #1 }}}

\begin{algorithm}[H]
\caption{GOSDT$(R, x, y, \lambda)$ \label{alg:gosdt_summary}}
\begin{tabbing}
xxx \= xx \= xx \= xx \= xx \= xx \kill
1: \> \textbf{input:} $R$, $Z$, $z^-$, $z^+$, $\lambda$ \comment{risk, samples, regularizer} \\
2: \> $Q = \emptyset$ \comment{priority queue}\\
3: \> $G=\emptyset$ \comment{dependency graph}\\
4: \> $s_0 \leftarrow \{1,...,1\}$\comment{bit-vector of 1's of length $U$} \\
5: \> $p_0 \leftarrow$ FIND\_OR\_CREATE\_NODE($G, s_0$)\comment{root}\\
6: \> $Q.{\rm push}(s_0)$\comment{add to priority queue}\\
7: \> \textbf{while} $p_0.lb \neq p_0.ub$ \textbf{do}\\
8: \> \> $s\leftarrow Q.{\rm pop}()$\comment{index of problem to work on}\\
9: \> \> $p\leftarrow G.{\rm find}(s)$\comment{find problem to work on}\\
10: \> \> \textbf{if} $p.lb=p.ub$ \textbf{then}\\
11: \> \> \> \textbf{continue}\comment{problem already solved} \\
12: \> \> $(lb', ub') \leftarrow (\infty, \infty)$\comment{loose starting bounds}\\
13: \> \> \textbf{for} each feature $j \in [1,M]$ \textbf{do}\\
14: \> \> \> $s_l, s_r \leftarrow \text{split}(s,j,Z)$ \comment{create children} \\
15: \> \> \> $p_l^j\leftarrow$FIND\_OR\_CREATE\_NODE$(G,s_l)$\\
16: \> \> \> $p_r^j\leftarrow$FIND\_OR\_CREATE\_NODE$(G,s_r)$\\
\>\>\comment{create bounds as if $j$ were chosen for splitting}
\\
17: \> \> \> $lb' \leftarrow \min(lb', p_l^j.lb + p_r^j.lb)$ \\
18: \> \> \> $ub' \leftarrow \min(ub', p_l^j.ub + p_r^j.ub)$ \\
\> \> \comment{signal the parents if an update occurred} \\
19: \> \> \textbf{if} $p.lb \neq lb'$ \textbf{or} $p.ub \neq ub'$  \textbf{then} \\
20: \> \> \> $p.ub \leftarrow \min(p.ub, ub')$\\
21: \> \> \> $p.lb \leftarrow \min(p.ub, \max(p.lb, lb'))$\\
22: \> \> \> \textbf{for} $p_{\pi} \in G.{\rm parent}(p)$ \textbf{do} \\
\> \> \> \> \comment{propagate information upwards}\\
23: \> \> \> \> $Q.{\rm push}(p_{\pi}.{\rm id}, {\rm priority}=1)$\\
24: \> \> \textbf{if} $p.lb = p.ub$ \textbf{then} \\
25: \> \> \> \textbf{continue} \comment{problem solved just now} \\
\> \> \comment{loop, enqueue all children} \\

26: \> \> \textbf{for} each feature $j \in [1,M]$ \textbf{do} \\
\> \> \comment{fetch $p_l^j$ and $p_r^j$ in case of update}\\
27: \> \> \> repeat line 14-16\\
28: \> \> \> $lb' \leftarrow p_l^j.lb + p_r^j.lb$ \\
29: \> \> \> $ub' \leftarrow p_l^j.ub + p_r^j.ub$ \\
30: \> \> \> \textbf{if} $lb' < ub'$ \textbf{and} $lb' \le p.ub$ \textbf{then} \\
31: \> \> \> \> $Q.{\rm push}(s_{l}, {\rm priority}=0)$ \\
32: \> \> \> \> $Q.{\rm push}(s_{r}, {\rm priority}=0)$ \\
33: \> \textbf{return}\\
--------------------------------------------------------------------\\
34: \> \textbf{subroutine} FIND\_OR\_CREATE\_NODE(G,s)\\
35: \> \> \textbf{if} $G.{\rm find}(s) = {\rm NULL}$\comment{$p$ not yet in graph}\\
36: \> \> \> $p.id \leftarrow s$ \comment{identify $p$ by $s$}\\
37: \> \> \> $p.lb \leftarrow {\rm get\_lower\_bound}(s,Z,z^-,z^+)$\\
38: \> \> \> $p.ub \leftarrow {\rm get\_upper\_bound}(s,Z,z^-,z^+)$\\
39: \> \> \> \textbf{if} fails\_bounds$(p)$ \textbf{then}\\
40: \> \> \> \> $p.lb=p.ub$ \comment{no more splitting allowed}\\
41: \> \> \> G.insert(p) \comment{put $p$ in dependency graph}\\
42: \> \> \textbf{return} G.find(s)
\end{tabbing}
\end{algorithm}

\section{Experiments}
We present details of our experimental setup and datasets in Appendix \ref{App:Experiments}.
GOSDT's novelty lies in its ability to optimize a large class of objective functions and its ability to efficiently handle continuous variables without sacrificing optimality. Thus, our evaluation results: 1) Demonstrate our ability to optimize over a large class of objectives (AUC in particular), 2) Show that GOSDT outperforms other approaches in producing models that are both accurate and sparse, and 3) Show how GOSDT scales in its handling of continuous variables relative to other methods.
In Appendix \ref{app:exp} we show time-to-optimality results.





\textbf{Optimizing Many Different Objectives}: 
We use the FourClass dataset \citep{libsvm} to show optimal decision trees corresponding to different objectives. Figures \ref{fig:roc_train} and \ref{fig:tree_fc} show the training ROC of decision trees generated for six different objectives and optimal trees for accuracy, AUC and partial area under ROC convex hull. Optimizing different objectives produces different trees with different $FP$ and $FN$. (No other decision tree method is designed to fully optimize any objective except accuracy, so there is no comparison to other methods.)

\begin{figure}[h]
    \centering
    \includegraphics[width=1.0\linewidth, scale=1.0]{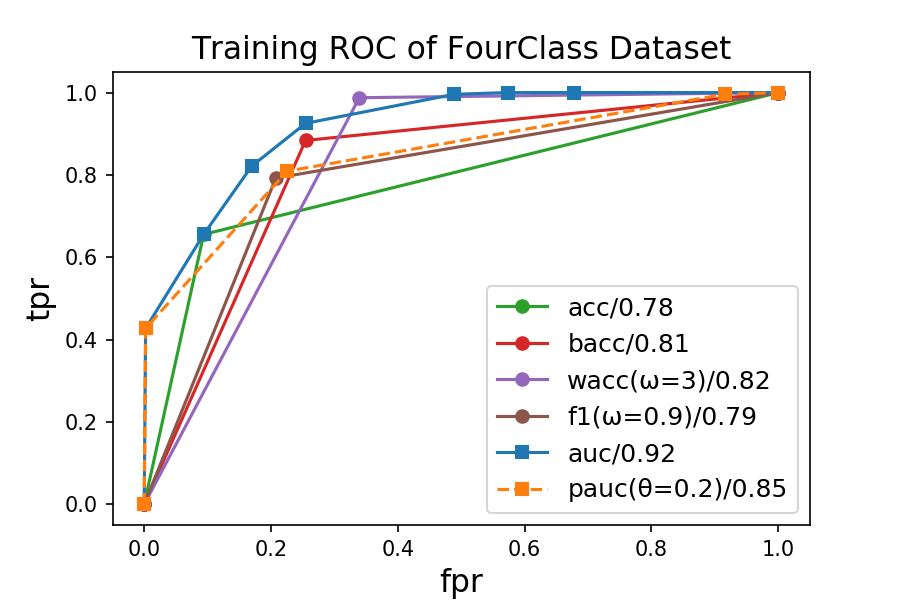}
    \caption{Training ROC of FourClass dataset ($\lambda=0.01$). A/B in the legend at the bottom right shows the objective and its parameters/area under the ROC.}
    \label{fig:roc_train}
    \vspace{0.2cm}
    \includegraphics[width=1\linewidth]{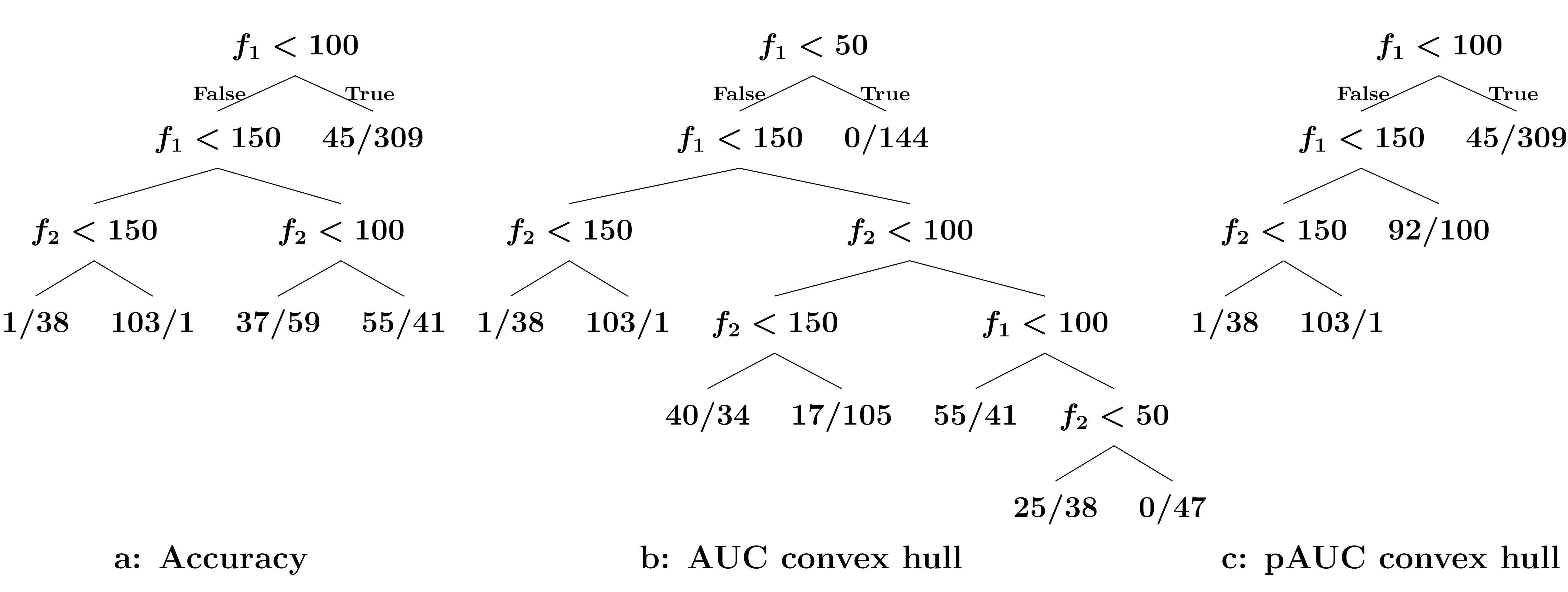}
    \caption{Trees for different objectives. A/B in the leaf node represents the number of positive/negative samples amongst the total samples captured in that leaf.}
    \label{fig:tree_fc}
\end{figure}
\begin{figure}[h]
  \centering
  \includegraphics[scale=0.4]{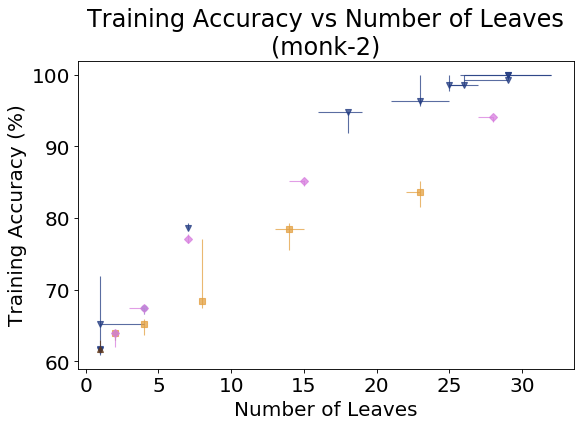}
  \includegraphics[scale=0.4]{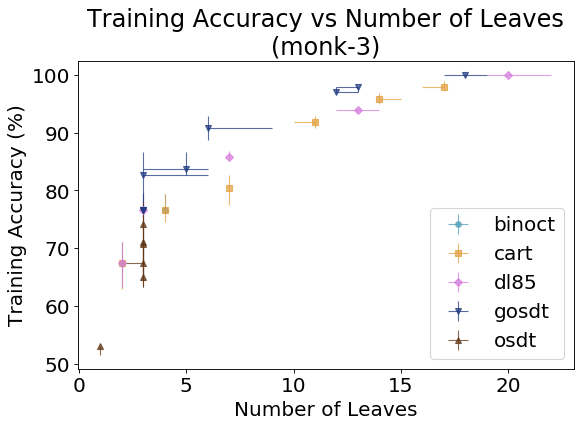}
  \caption{Training accuracy achieved by BinOCT, CART, DL8.5, GOSDT, and OSDT as a function of the number of leaves.  }
  \label{fig:train_acc}
\end{figure}
\begin{figure}[h]
  \centering
  \includegraphics[scale=0.4]{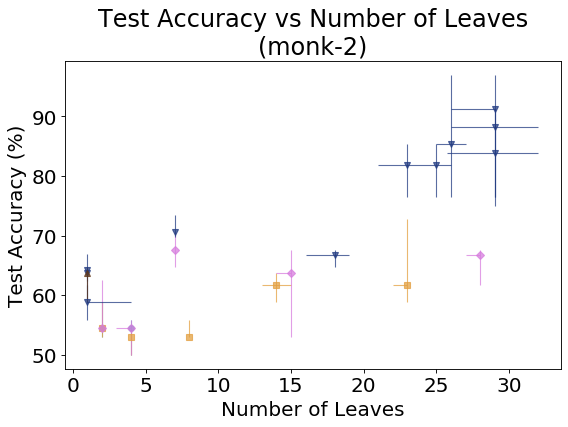}
  \includegraphics[scale=0.4]{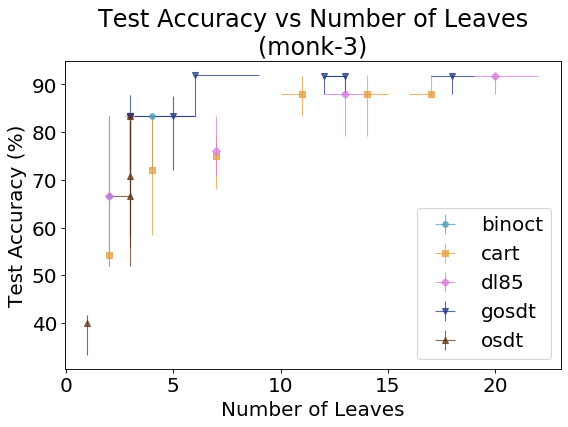}
  \caption{Test accuracy achieved by BinOCT, CART, DL8.5, GOSDT, and OSDT as a function of the number of leaves.  }
  \label{fig:test_acc}
\end{figure}
\begin{figure}[h]
  \centering
  \includegraphics[scale=0.4]{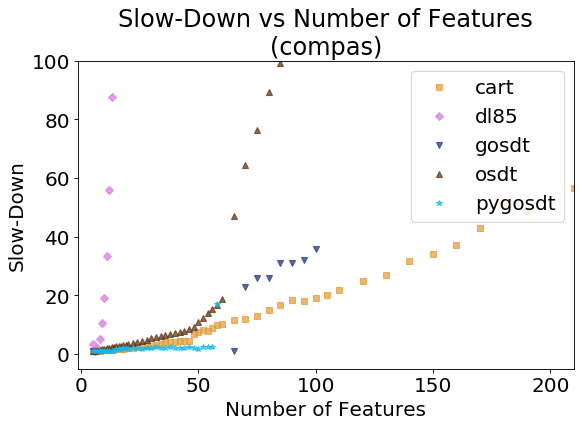}
  \includegraphics[scale=0.4]{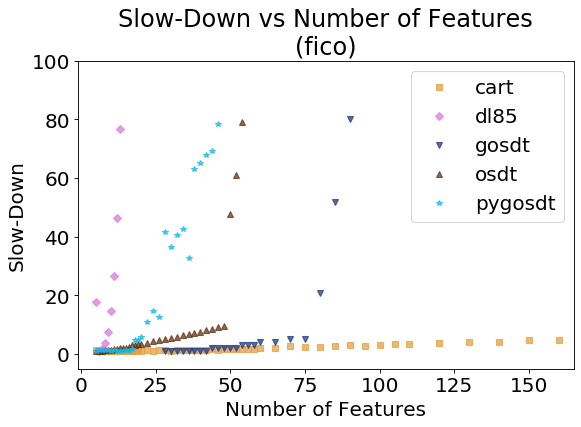}
  \caption{Training time of BinOCT, CART, DL8.5, GOSDT, and OSDT as a function of the number of binary features used to encode the continuous dataset ($\lambda$ = 0.3125 or max depth = 5).}
  \label{fig:time_feature}
\end{figure}

\textbf{Binary Datasets, Accuracy vs Sparsity:} 
We compare models produced from BinOCT \cite{verwer2019learning}, CART \citep{Breiman84}, DL8.5 \cite{nijssen2020}, OSDT \cite{HuRuSe2019}, and GOSDT. For each method, we select hyperparameters to produce trees of varying numbers of leaves and plot training accuracy against sparsity (number of leaves).
GOSDT directly optimizes the trade-off between training accuracy and number of leaves, producing points on the efficient frontier. Figure \ref{fig:train_acc} and \ref{fig:test_acc} show (1) that GOSDT typically produces excellent training and test accuracy with a reasonable number of leaves, and (2) that we can quantify how close to optimal the other methods are. Learning theory provides guarantees that training and test accuracy are close for sparse trees; 
more results are in Appendix \ref{app:exp}.

\textbf{Continuous Datasets, Slowdown vs Thresholds:} 
We preprocessed by encoding continuous variables as a set of binary variables, using all possible thresholds. We then reduced the number of binary variables by combining sets of $k$ variables for increasing values of $k$. Binary variables were ordered, firstly, in order of the indices of the continuous variable to which they correspond, and secondly, in order of their thresholds.
We measure the slow-down introduced by increasing the number of binary variables relative to each algorithm's own best time. 
Figure \ref{fig:time_feature} shows these results for CART, DL8.5, GOSDT, and OSDT. As the number of features increases (i.e., $k$ approaches 1), GOSDT typically slows down less than DL8.5 and OSDT. This relatively smaller slowdown allows GOSDT to handle more thresholds introduced by continuous variables.

Appendix \ref{App:Experiments} presents more results including training times several orders of magnitude better than the state-of-the-art.

An implementation of our experiments is available at:  \url{https://github.com/Jimmy-Lin/TreeBenchmark}.


\section{Discussion and Future Work}
GOSDT (and related methods) differs fundamentally from other types of machine learning algorithms. Unlike neural networks and typical decision tree methods, it provides a proof of optimality for highly non-convex problems. Unlike support vector machines, ensemble methods, and neural networks again, it produces sparse interpretable models without using convex proxies--it solves the exact problem of interest in an efficient and provably optimal way. As usual, statistical learning theoretic guarantees on test error are the tightest for simpler models (sparse models) with the lowest empirical error on the training set, hence the choice of accuracy, regularized by sparsity (simplicity). If GOSDT is stopped early, it reports an optimality gap, which allows the user to assess whether the tree is sufficiently close to optimal, retaining learning theoretic guarantees on test performance. GOSDT is most effective for datasets with a small or medium number of features. The algorithm scales well with the number of observations, easily handling tens of thousands.

There are many avenues for future work. Since GOSDT provides a framework to handle objectives that are monotonic in $FP$ and $FN$, one could create many more objectives than we have enumerated here. Going beyond these objectives to handle other types of monotonicity, fairness, ease-of-use, and cost-related soft and hard constraints are natural extensions. There are many avenues to speed up GOSDT related to exploration of the search space, garbage collection, and further bounds.


\bibliography{refs}
\bibliographystyle{icml2020} 

\onecolumn
\appendix
\section{Comparison Between Decision Tree Methods}
\label{app:ComparisonTable}
\begin{table}[h]\footnotesize
    \centering
    \begin{tabular}{|m{12.5em}|m{6.0em}|m{5em}|m{5.2em}|m{5.2em}|m{4.8em}|m{4.8em}|}
    \hline
     Attributes	&	GOSDT	&	DL8	&	DL8.5 &	BinOCT	&	DTC	&	CART\\
\hline\hline
Built-in Preprocessor &	\cellcolor{green!25}Yes	& No &	No	& \cellcolor{green!25}Yes	& No & No \\ \hline
Preprocessing Strategy	&	\cellcolor{green!25}	All values & 
\cellcolor{red!25}Weka Discretization	&	\cellcolor{red!25}Bucketization	&	\cellcolor{red!25}Bucketization	&		None	&		None	\\ \hline
Preprocessing Preserves Optimality?	&		\cellcolor{green!25} Yes	&	\cellcolor{red!25}	No	&	\cellcolor{red!25}	No	&	\cellcolor{red!25}	No	&	N/A	&	N/A	\\ \hline
Optimization Strategy	&	\cellcolor{blue!25}	DPB	&	\cellcolor{blue!25}	DP	&	\cellcolor{blue!25}	DPB	&		ILP	&		Greedy Search	&		Greedy Search	\\ \hline
Optimization Preserves Optimality?	&	\cellcolor{green!25}	Yes	&	\cellcolor{green!25}	Yes	&	\cellcolor{green!25}	Yes	&	\cellcolor{green!25}	Yes	&	\cellcolor{red!25}	No	&	\cellcolor{red!25}	No	\\ \hline
Applies Hierarchical Upper bound to Reduce Search Space?	&	\cellcolor{green!25}	Yes	&	\cellcolor{red!25}	No	&	\cellcolor{green!25}	Yes	&		N/A	&		N/A	&		N/A	\\ \hline
Uses Support Set Node Identifiers?	&	\cellcolor{green!25}	Yes	&	\cellcolor{red!25}	No	&	\cellcolor{red!25}	No	&		N/A	&		N/A	&		N/A	\\ \hline
Can use Multiple Cores?	&	\cellcolor{green!25}	Yes	&	\cellcolor{red!25}	No	&	\cellcolor{red!25}	No	&	\cellcolor{green!25}	Yes	&	\cellcolor{red!25}	No	&	\cellcolor{red!25}	No	\\ \hline
Can prune using updates from partial evaluation of subproblem?	&	\cellcolor{green!25}	Yes	&	\cellcolor{red!25}	No	&	\cellcolor{red!25}	No	&		Depends on (generic) solver	&		N/A	&		N/A	\\ \hline
Strategy for preventing overfitting	&	\cellcolor{green!25}	Penalize Leaves	&	\cellcolor{red!25}	Structural Constraints	&	\cellcolor{red!25}	Structural Constraints	&	\cellcolor{red!25}	Structural Constraints	&	\cellcolor{green!25}	MDL Criteria	&	\cellcolor{red!25}	Structural Constraints	\\ \hline
Can we modify this to use regularization?	&	\cellcolor{green!25}	N/A	&	\cellcolor{green!25}	Yes	&	\cellcolor{green!25}	Yes	&	\cellcolor{green!25}	Yes	&	\cellcolor{green!25}	N/A	&	\cellcolor{red!25}	No	\\ \hline
Does it address class imbalance?	&	\cellcolor{green!25}	Yes	&		Maybe	&		Maybe	&	
Maybe &		Maybe	&		Maybe	\\ \hline
    \end{tabular}
    \caption{Comparison of GOSDT, DL8 \citep{nijssen2007mining}, DL8.5 \cite{nijssen2020}, BinOCT \citep{verwer2019learning}, DTC \citep{GarofalakisHyRaSh03}, and CART \citep{Breiman84}. \colorbox{green!25}{Green} is a comparative advantage. \colorbox{red!25}{Red} is a comparative disadvantage. \colorbox{blue!25}{Blue} highlights dynamic programming-based methods. White is neutral.}
    \label{tab:comparisontable}
\end{table} 

\section{Objectives and Their Lower Bounds for Arbitrary Monotonic Losses}\label{App:Objectives}

Before deriving the bounds for arbitrary monotonic losses, we first introduce some notation. As we know, a leaf set $d=(l_1, l_2, ..., l_{H_d})$ contains $H_d$ distinct leaves, where $l_i$ is the classification rule of the leaf $i$. If a leaf is labeled, then $y_i^{(\leaf)}$ is the label prediction for all data in leaf $i$. Therefore, a labeled partially-grown tree $d$ with the leaf set $d = (l_1, l_2, ..., l_{H_d})$ could be rewritten as $d = (d_{\fix}, \delta_{\fix}, d_{\splitrm}, \delta_{\splitrm}, K, H_d)$, where $d_{\fix} = (l_1,l_2, ..., l_K)$ is a set of $K$ fixed leaves that are not permitted to be further split, $\delta_{\fix} = (y_1^{(\leaf)}, y_2^{(\leaf)},...,y_K^{(\leaf)}) \in \{0,1\}^K$ are the predicted labels for leaves $d_{\fix}$, $d_{\splitrm} = (l_{K+1},l_{K+2}...,l_{H_d})$ is the set of $H_d-K$ leaves that can be further split, and $\delta_{\splitrm} = (y_{K+1}^{(\leaf)}, y_{K+2}^{(\leaf)},...,y_{H_d}^{(\leaf)}) \in \{0,1\}^K$ are the predicted labels for leaves $d_{\splitrm}$.  

\subsection{Hierarchical objective lower bound for arbitrary monotonic losses}
\begin{theorem}\label{thm:hierarchy}(Hierarchical objective lower bound for arbitrary monotonic losses) Let loss function $\ell(d,\x,\y)$ be monotonically increasing in $FP$ and $FN$. We now change notation of the loss to be only a function of these two quantities, written now as $\Tilde{\ell}(FP, FN)$. Let $d = (d_{\fix}, \delta_{\fix}, d_{\splitrm}, \delta_{\splitrm}, K, H)$ be a labeled tree with fixed leaves $d_{\fix}$, and let $FP_{\fix}$ and $FN_{\fix}$ be the false positives and false negatives of $d_{\fix}$. Define the lower bound to the risk $R(d,\x,\y)$ as follows (taking the lower bound of the split terms to be 0):
\[R(d,\x,\y)\geq b(d_{\fix},\x,\y) = \ell(d_{\fix},\x,\y)+\lambda H = \tilde{\ell}(FP_{\fix},FN_{\fix})+\lambda H.\]
Let $d'=(d'_{\fix}, \delta'_{\fix}, d'_{\splitrm}, \delta'_{\splitrm}, K', H') \in \sigma (d)$ be any child tree of $d$ such that its fixed leaves $d'_{\fix}$ contain $d_{\fix}$ and $K'>K$ and $H'>H$. Then, $b(d_{\fix}, \x, \y) \leqslant R(d',\x,\y)$. 
\end{theorem}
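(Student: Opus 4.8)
The plan is to expand the risk of the child tree as $R(d',\x,\y)=\tilde\ell(FP',FN')+\lambda H'$, where $FP'$ and $FN'$ are the false positives and false negatives of the \emph{entire} tree $d'$, and to bound each summand below separately. The one structural fact I would invoke is that the leaves of any decision tree capture pairwise disjoint sets of training samples, so false positives and false negatives are additive over any partition of a leaf set. Splitting the leaves of $d'$ into $d'_{\fix}$ and $d'_{\splitrm}$ therefore gives $FP'=FP'_{\fix}+FP'_{\splitrm}\ge FP'_{\fix}$ and $FN'=FN'_{\fix}+FN'_{\splitrm}\ge FN'_{\fix}$, since every count on the right is non-negative; this is exactly the ``take the lower bound of the split terms to be $0$'' step referred to in the statement.

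Next I would use the hierarchy assumption $d_{\fix}\subseteq d'_{\fix}$ together with $H'>H$. Because every leaf of $d_{\fix}$ is still a leaf of $d'_{\fix}$ with the same classification rule and the same label, the portion of $FP'_{\fix}$ (resp.\ $FN'_{\fix}$) coming from those leaves equals $FP_{\fix}$ (resp.\ $FN_{\fix}$) exactly, while the leaves in $d'_{\fix}\setminus d_{\fix}$ --- obtained by splitting leaves that lay in $d_{\splitrm}$ --- add only a further non-negative amount; hence $FP'_{\fix}\ge FP_{\fix}$ and $FN'_{\fix}\ge FN_{\fix}$, and combining with the previous step, $FP'\ge FP_{\fix}$ and $FN'\ge FN_{\fix}$. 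Since $\tilde\ell$ is non-decreasing in each argument, $\tilde\ell(FP',FN')\ge\tilde\ell(FP_{\fix},FN_{\fix})$, and since $\lambda\ge 0$ with $H'>H$ we have $\lambda H'\ge\lambda H$. Adding these two inequalities yields $R(d',\x,\y)\ge\tilde\ell(FP_{\fix},FN_{\fix})+\lambda H=b(d_{\fix},\x,\y)$. The first displayed inequality in the statement, $R(d,\x,\y)\ge b(d_{\fix},\x,\y)$, is just the $d'=d$ instance of this same computation, since the fixed leaves of $d$ contain themselves and $H\ge H$.

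The step I expect to require the most care is the claim that the leaves shared between $d_{\fix}$ and $d'_{\fix}$ contribute identical false-positive and false-negative counts in the two trees. This is where the convention that a fixed leaf carries a definite prediction matters: passing from $d$ to a child $d'\in\sigma(d)$ must retain both the rule and the label $\delta_{\fix}$ on each already-fixed leaf, since relabeling a shared leaf could lower its contribution to $\tilde\ell$ and defeat the monotonicity argument. Once that bookkeeping is pinned down, the rest is the disjointness/additivity observation and a one-line appeal to monotonicity of $\tilde\ell$ and non-negativity of $\lambda$.
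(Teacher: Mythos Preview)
Your proposal is correct and follows essentially the same route as the paper: additivity of $FP$ and $FN$ over the disjoint leaf partition, the containment $d_{\fix}\subseteq d'_{\fix}$ to get $FP'_{\fix}\ge FP_{\fix}$ and $FN'_{\fix}\ge FN_{\fix}$, monotonicity of $\tilde\ell$, and $H'>H$ for the regularizer term. The paper organizes the chain slightly differently by passing through the intermediate quantity $b(d'_{\fix},\x,\y)$, but the underlying inequalities are identical to yours.
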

The importance of this result is that the lower bound works for all (allowed) child trees of $d$. Thus, if $d$ can be excluded via the lower bound, then all of its children can too.

\begin{proof}
Let $FP_{\fix}$ and $FN_{\fix}$ be the false positives and false negatives within leaves of $d_{\fix}$, and let $FP_{\splitrm}$ and $FN_{\splitrm}$ be the false positives and false negatives within leaves of $d_{\splitrm}$. Similarly, denote $FP_{\fix'}$ and $FN_{\fix'}$ as the false positives and false negatives of $d'_{\fix}$ and let $FP_{\splitrm'}$ and $FN_{\splitrm'}$ be the false positives and false negatives of $d'_{\splitrm}$. 
Since the leaves are mutually exclusive, $FP_d = FP_{\fix}+FP_{\splitrm}$ and $FN_d = FN_{\fix}+FN_{\splitrm}$. Moreover, since $\Tilde{\ell}(FP, FN)$ is monotonically increasing in $FP$ and $FN$, we have:
\begin{equation}
    R(d,\x,\y) = \Tilde{\ell}(FP_d, FN_d) + \lambda H \geqslant \Tilde{\ell}(FP_{\fix}, FN_{\fix}) + \lambda H = b(d_{\fix},\x,\y).
\end{equation}
Similarly, $R(d',\x,\y) \geqslant b(d'_{\fix},\x,\y)$. 
Since $d_{\fix} \subseteq d_{\fix'}$, $FP_{\fix'} \geqslant FP_{\fix}$ and $FN_{\fix'} \geqslant FN_{\fix}$, thus $\Tilde{\ell}(FP_{\fix'}, FN_{\fix'}) \geqslant \Tilde{\ell}(FP_{\fix}, FN_{\fix})$. Combined with $H' > H$, we have: 
\begin{equation}
    b(d_{\fix},\x,\y) = \Tilde{\ell}(FP_{\fix}, FN_{\fix}) + \lambda H \leqslant \Tilde{\ell}(FP_{\fix'}, FN_{\fix'}) + \lambda H' = b(d'_{\fix},\x,\y) \leqslant R(d',\x,\y).
\end{equation}
\end{proof}

Let us move to the next bound, which is the one-step lookahead. It relies on comparing the best current objective we have seen so far, denoted $R^c$, to the lower bound.
\begin{theorem}\label{thm:onestep} (Objective lower bound with one-step lookahead) Let $d$ be a $H$-leaf tree with $K$ leaves fixed and let $R^c$ be the current best objective. If $b(d_{\fix},\x,\y) + \lambda \geqslant R^c$, then for any child tree $d'\in \sigma(d)$, its fixed leaves $d'_{\fix}$ include $d_{\fix}$ and $H'>H$. It follows that $R(d',\x,\y) \geqslant R^c$. 
\end{theorem}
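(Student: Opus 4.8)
The plan is to obtain this as an immediate corollary of the Hierarchical Objective Lower Bound (Theorem~\ref{thm:hierarchy}), which already carries out the monotonicity argument; the only new ingredient is careful bookkeeping of the single extra $\lambda$ term. First I would apply Theorem~\ref{thm:hierarchy} to the child tree $d'$: since $d'\in\sigma(d)$ with $d_{\fix}\subseteq d'_{\fix}$ and $H'>H$, that theorem gives
\[
R(d',\x,\y)\;\geq\; b(d'_{\fix},\x,\y)\;=\;\tilde{\ell}(FP_{\fix'},FN_{\fix'})+\lambda H',
\]
where $FP_{\fix'},FN_{\fix'}$ are the false positives/negatives of $d'_{\fix}$.

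Next I would make the $\lambda$-accounting transparent by re-deriving the monotonicity step explicitly. Because the leaves are mutually exclusive and $d_{\fix}\subseteq d'_{\fix}$, every false positive and false negative counted in $d_{\fix}$ is also counted in $d'_{\fix}$, so $FP_{\fix'}\geq FP_{\fix}$ and $FN_{\fix'}\geq FN_{\fix}$; monotonicity of $\tilde{\ell}$ then yields $\tilde{\ell}(FP_{\fix'},FN_{\fix'})\geq\tilde{\ell}(FP_{\fix},FN_{\fix})$. Since leaf counts are integers, $H'>H$ forces $H'\geq H+1$, so
\[
b(d'_{\fix},\x,\y)\;\geq\;\tilde{\ell}(FP_{\fix},FN_{\fix})+\lambda(H+1)\;=\;\big(\tilde{\ell}(FP_{\fix},FN_{\fix})+\lambda H\big)+\lambda\;=\;b(d_{\fix},\x,\y)+\lambda.
\]
Chaining this with the hypothesis $b(d_{\fix},\x,\y)+\lambda\geq R^c$ gives $R(d',\x,\y)\geq b(d'_{\fix},\x,\y)\geq b(d_{\fix},\x,\y)+\lambda\geq R^c$, which is the claim.

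There is no real obstacle here; the step deserving a moment's care is the observation that the single extra regularization term $\lambda$ appearing in the hypothesis is exactly what is ``purchased'' by the guaranteed increase of at least one leaf ($H'\geq H+1$) in any admissible child tree, so the one-step lookahead is tight by precisely one split. I would also note that the argument needs only weak monotonicity of $\tilde{\ell}$ (no strictness), and that the statement is vacuously safe when no child tree with $H'>H$ exists. If desired, one could strengthen the bookkeeping to a $k$-step version by replacing $\lambda$ with $k\lambda$ whenever $H'\geq H+k$, but the one-step form suffices for the pruning rule used by the algorithm.
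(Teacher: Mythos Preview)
Your proposal is correct and follows essentially the same route as the paper: start from $R(d',\x,\y)\geq b(d'_{\fix},\x,\y)$, use monotonicity of $\tilde\ell$ together with $d_{\fix}\subseteq d'_{\fix}$ to compare the loss terms, and use the integer gap $H'\geq H+1$ to extract the extra $\lambda$. The paper's version is simply terser (it writes $\lambda H'=\lambda H+\lambda(H'-H)$ and folds the monotonicity step into a single inequality), but the content is identical.
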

\begin{proof}
According to definition of the objective lower bound, 
\begin{equation}
    \begin{split}
        R(d',\x,\y)\geqslant b(d'_{\fix},\x,\y) &= \Tilde{\ell}(FP_{\fix'}, FN_{\fix'}) + \lambda H' \\
        &= \Tilde{\ell}(FP_{\fix'}, FN_{\fix'}) + \lambda H + \lambda (H'-H)\\
        &\geqslant b(d_{\fix},\x,\y) + \lambda \geqslant R^c,
    \end{split}
\end{equation}
where on the last line we used that since $H'$ and $H$ are both integers, then $H'-H\geq 1$.
\end{proof}
According to this bound, even though we might have a tree $d$ whose fixed leaves $d_{\fix}$  obeys lower bound $b(d_{\fix},\x,\y) \leqslant R^c$, its child trees may still all be guaranteed to be suboptimal: if $b(d_{\fix},\x,\y) + \lambda \geqslant R^c$, none of its child trees can ever be an optimal tree. 

\begin{theorem}\label{thm:hierarchy_subtree}(Hierarchical objective lower bound for sub-trees and additive losses) Let loss functions $\ell(d,\x,\y)$ be monotonically increasing in $FP$ and $FN$, and let the loss of a tree $d$ be the sum of the losses of the leaves. Let $R^c$ be the current best objective. Let $d$ be a tree such that the root node is split by a feature, where two sub-trees $d_{\leftrm}$ and $d_{\rightrm}$ are generated with $H_{\leftrm}$ leaves for $d_{\leftrm}$ and $H_{\rightrm}$ leaves for $d_{\rightrm}$. The data captured by the left tree is $(\x_{\leftrm},\y_{\leftrm})$ and the data captured by the right tree is $(\x_{\rightrm},\y_{\rightrm})$. Let $b(d_{\leftrm},\x_{\leftrm},\y_{\leftrm})$ and $b(d_{\rightrm},\x_{\rightrm},\y_{\rightrm})$ be the objective lower bound of the left sub-tree and right sub-tree respectively such that $b(d_{\leftrm},\x_{\leftrm},\y_{\leftrm}) \leqslant \ell(d_{\leftrm},\x_{\leftrm},\y_{\leftrm}) + \lambda H_{\leftrm}$ and $b(d_{\rightrm},\x_{\rightrm},\y_{\rightrm}) \leqslant \ell(d_{\rightrm},\x_{\rightrm},\y_{\rightrm}) + \lambda H_{\rightrm}$. If $b(d_{\leftrm},\x_{\leftrm},\y_{\leftrm}) > R^c$ or $b(d_{\rightrm},\x_{\rightrm},\y_{\rightrm}) > R^c$ or $b(d_{\leftrm},\x_{\leftrm},\y_{\leftrm})+b(d_{\rightrm},\x_{\rightrm},\y_{\rightrm}) > R^c$, then the tree $d$ is not the optimal tree.
\end{theorem}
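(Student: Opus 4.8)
The plan is to reduce everything to the additive decomposition of the objective induced by the root split. First I would observe that splitting the root on a single feature partitions the training data into $(\x_{\leftrm},\y_{\leftrm})$ and $(\x_{\rightrm},\y_{\rightrm})$, and correspondingly partitions the leaf set of $d$ into the leaves of $d_{\leftrm}$ and the leaves of $d_{\rightrm}$. Hence $H_d = H_{\leftrm}+H_{\rightrm}$, and, since by hypothesis the tree loss is the sum of its leaves' losses, $\ell(d,\x,\y) = \ell(d_{\leftrm},\x_{\leftrm},\y_{\leftrm}) + \ell(d_{\rightrm},\x_{\rightrm},\y_{\rightrm})$. Combining these with the definition of the objective in Equation~\ref{eq:risk} gives
\[
R(d,\x,\y) \;=\; \big[\ell(d_{\leftrm},\x_{\leftrm},\y_{\leftrm})+\lambda H_{\leftrm}\big] \;+\; \big[\ell(d_{\rightrm},\x_{\rightrm},\y_{\rightrm})+\lambda H_{\rightrm}\big].
\]

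Next I would feed in the two hypothesized lower bounds. By assumption each bracket is at least the corresponding quantity $b(d_{\leftrm},\x_{\leftrm},\y_{\leftrm})$ or $b(d_{\rightrm},\x_{\rightrm},\y_{\rightrm})$, and each bracket is also non-negative (for the additive losses in scope, e.g.\ classification or weighted classification error, the per-leaf loss is non-negative, and $\lambda H_{\leftrm},\lambda H_{\rightrm}\ge 0$). Dropping the right bracket and applying the left bound gives $R(d,\x,\y)\ge b(d_{\leftrm},\x_{\leftrm},\y_{\leftrm})$; dropping the left bracket gives $R(d,\x,\y)\ge b(d_{\rightrm},\x_{\rightrm},\y_{\rightrm})$; and keeping both brackets and applying both bounds gives $R(d,\x,\y)\ge b(d_{\leftrm},\x_{\leftrm},\y_{\leftrm})+b(d_{\rightrm},\x_{\rightrm},\y_{\rightrm})$. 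Therefore, whichever of the three stated inequalities holds, we conclude $R(d,\x,\y) > R^c$.

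Finally I would close the argument: $R^c$ is the objective value of a tree already in hand, so the optimum satisfies $R^* \le R^c < R(d,\x,\y)$, and hence $d$ is not an optimal tree.

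I do not anticipate a genuine obstacle here. The only point requiring care is the bookkeeping behind the additive split $R(d,\x,\y) = (\text{left half-objective}) + (\text{right half-objective})$ — namely that the root split really does induce disjoint, exhaustive partitions of both the captured data and the leaf set, so that $\ell$ and $\lambda H$ each decompose — together with the observation that each half-objective is non-negative, which is exactly what makes the two single-branch cases ($b(d_{\leftrm})>R^c$ and $b(d_{\rightrm})>R^c$) go through. Both facts are immediate from the ``sum of leaf losses'' hypothesis and $\lambda \ge 0$.
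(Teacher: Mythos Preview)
Your proposal is correct and follows essentially the same route as the paper: decompose $R(d,\x,\y)$ additively across the root split into the two half-objectives, lower-bound each by its $b(\cdot)$, and compare with $R^c$. If anything, you are slightly more careful than the paper, which writes only $R(d,\x,\y)\ge b(d_{\leftrm})+b(d_{\rightrm})$ and then asserts all three cases at once; your explicit use of non-negativity of each half-objective is exactly what justifies the two single-branch cases.
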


This bound is applicable to any tree $d$, even if part of it has not been constructed yet. That is, if a partially-constructed $d$'s left lower bound or right lower bound, or the sum of left and right lower bounds, exceeds the current best risk $R^c$, then we do not need to construct $d$ since we have already proven it to be suboptimal from its partial construction. 

\begin{proof}
$R(d,\x,\y) = \ell(d,\x,\y) + \lambda H = \ell(d_{\leftrm},\x_{\leftrm},\y_{\leftrm}) + \ell(d_{\rightrm},\x_{\rightrm},\y_{\rightrm}) + \lambda H_{\leftrm}+\lambda H_{\rightrm} \geqslant b(d_{\leftrm},\x_{\leftrm},\y_{\leftrm}) + b(d_{\rightrm},\x_{\rightrm},\y_{\rightrm})$. If $b(d_{\leftrm},\x_{\leftrm},\y_{\leftrm}) > R^c$ or $b(d_{\rightrm},\x_{\rightrm},\y_{\rightrm}) > R^c$ or $b(d_{\leftrm},\x_{\leftrm},\y_{\leftrm})+b(d_{\rightrm},\x_{\rightrm}, \y_{\rightrm}) > R^c$, then $R(d,\x,\y) > R^c$. Therefore, the tree $d$ is not the optimal tree. 
\end{proof}

\subsection{Upper bound on the number of leaves}
\begin{theorem}\label{thm:ub_leaf_num}(Upper bound on the number of leaves)
For a dataset with $M$ features, consider a state space of all trees. Let $H$ be the number of leaves of tree $d$ and let $R^c$ be the current best objective. For any optimal tree $d^* \in  \argmin_d R(d,\x,\y)$, its number of leaves obeys:
\begin{equation}
    H^* \leqslant \min([R^c/\lambda], 2^M)
\end{equation}
where $\lambda$ is the regularization parameter. 
\end{theorem}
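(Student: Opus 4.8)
The plan is to establish the two bounds $H^* \le \lfloor R^c/\lambda\rfloor$ and $H^* \le 2^M$ separately and then take their minimum.

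For the first bound, I would use only non-negativity of the loss. Every objective introduced in Section~\ref{Sec:notation} yields a loss $\ell(d,\x,\y)\ge 0$ on the training set, so for an optimal tree $d^*$,
\[
R(d^*,\x,\y) = \ell(d^*,\x,\y) + \lambda H^* \ge \lambda H^*.
\]
Since $d^* \in \argmin_d R(d,\x,\y)$ and $R^c$ is the objective value attained by some feasible tree, we have $R(d^*,\x,\y) \le R^c$, hence $\lambda H^* \le R^c$, i.e.\ $H^* \le R^c/\lambda$. Because $H^*$ is a positive integer, this sharpens to $H^* \le \lfloor R^c/\lambda\rfloor$.

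For the second bound, I would argue structurally. With $M$ binary features, splitting twice on the same feature along a single root-to-leaf path forces one of the resulting child leaves to have a contradictory defining conjunction, so it captures no data; deleting such a redundant split strictly decreases the number of leaves while leaving the loss unchanged (the empty leaf contributes nothing to $\ell$), hence cannot increase the objective $R$. Therefore an optimal tree may be assumed to split on each feature at most once along any path, so its depth is at most $M$ and its number of leaves is at most $2^M$; thus $H^* \le 2^M$. Combining the two inequalities gives $H^* \le \min(\lfloor R^c/\lambda\rfloor, 2^M)$, as claimed.

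The only step that needs care is the $2^M$ bound: one must justify restricting attention to trees without redundant splits (equivalently, argue that some optimal tree has this form), which is the mild subtlety in an otherwise immediate argument. The $\lfloor R^c/\lambda\rfloor$ bound follows directly from non-negativity of the loss, optimality of $d^*$, and integrality of $H^*$, so I expect no obstacle there.
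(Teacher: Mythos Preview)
Your argument is correct and is the standard one. The paper itself does not give a proof of this theorem; it simply states that ``This bound adapts directly from OSDT \citep{HuRuSe2019}, where the proof can be found,'' so there is no detailed comparison to make. Your two ingredients---non-negativity of the loss together with optimality to get $\lambda H^*\le R(d^*,\x,\y)\le R^c$, and the observation that a redundant split along a path creates an empty subtree whose removal strictly lowers $H$ without changing $\ell$---are exactly what the OSDT argument uses. One small wording point: the empty branch created by a repeated split need not itself be a single leaf (it could be an entire empty subtree), but your conclusion is unaffected since deleting it still removes at least one leaf and leaves the loss unchanged.
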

\begin{proof}
This bound adapts directly from OSDT \citep{HuRuSe2019}, where the proof can be found.
\end{proof}

\begin{theorem}\label{thm:ub_leaf_num_ps}(Parent-specific upper bound on the number of leaves)
Let $d=(d_{\fix}, \delta_{\fix}, d_{\splitrm}, \delta_{\splitrm}, K, H)$ be a tree, $d'=(d'_{\fix}, \delta'_{\fix}, d'_{\splitrm}, \delta'_{\splitrm}, K', H')\in \sigma(d)$ be any child tree such that $d_{\fix} \subseteq d'_{\fix}$, and $R^c$ be the current best objective. If $d'_{\fix}$ has lower bound $b(d'_{\fix},\x,\y) < R^c$, then
\begin{equation}
    H' < \min\bigg(H+\bigg[\frac{R^c-b(d_{\fix},\x,\y)}{\lambda}\bigg], 2^M\bigg).
\end{equation}
where $\lambda$ is the regularization parameter. 
\end{theorem}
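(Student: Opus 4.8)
The plan is to mimic the first part of the proof of Theorem~\ref{thm:hierarchy}, but to retain the additive penalty term $\lambda(H'-H)$ that was discarded there and convert the hypothesis $b(d'_{\fix},\x,\y)<R^c$ into a numerical ceiling on $H'$.

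First I would write both lower bounds explicitly as functions of the monotonic loss:
\[
b(d_{\fix},\x,\y)=\tilde{\ell}(FP_{\fix},FN_{\fix})+\lambda H,\qquad b(d'_{\fix},\x,\y)=\tilde{\ell}(FP_{\fix'},FN_{\fix'})+\lambda H'.
\]
Because $d'\in\sigma(d)$ with $d_{\fix}\subseteq d'_{\fix}$, the fixed leaves of $d'$ consist of the fixed leaves of $d$ together with additional leaves carved out of the region previously held by $d_{\splitrm}$; the leaves of $d_{\fix}$ themselves are never modified. Since leaves are mutually exclusive and false-positive/false-negative counts are non-negative, these added leaves can only increase the error, so $FP_{\fix'}\geq FP_{\fix}$ and $FN_{\fix'}\geq FN_{\fix}$ — exactly the inequality invoked inside the proof of Theorem~\ref{thm:hierarchy}. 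Monotonicity of $\tilde{\ell}$ then yields $\tilde{\ell}(FP_{\fix'},FN_{\fix'})\geq\tilde{\ell}(FP_{\fix},FN_{\fix})$, hence
\[
b(d'_{\fix},\x,\y)\;\geq\;\tilde{\ell}(FP_{\fix},FN_{\fix})+\lambda H'\;=\;b(d_{\fix},\x,\y)+\lambda\,(H'-H).
\]

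Second, I would feed in the hypothesis $b(d'_{\fix},\x,\y)<R^c$, giving $b(d_{\fix},\x,\y)+\lambda(H'-H)<R^c$, i.e.
\[
H'<H+\frac{R^c-b(d_{\fix},\x,\y)}{\lambda}.
\]
Since $H'$ and $H$ are integers, the right-hand side may be replaced by $H+[(R^c-b(d_{\fix},\x,\y))/\lambda]$, which is the first branch of the claimed minimum. For the second branch, I would simply cite the combinatorial half of Theorem~\ref{thm:ub_leaf_num}: a decision tree over $M$ binary features has at most $2^M$ leaves, so $H'\leq 2^M$ regardless of the data. Taking the smaller of the two bounds completes the proof.

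I do not expect a real obstacle; the statement is essentially a quantitative sharpening of the hierarchical lower bound. The one point that needs care is the monotonicity step: one must observe that $d'_{\fix}$ is obtained by \emph{adjoining} leaves to $d_{\fix}$ rather than by altering existing fixed leaves, which is what makes $FP_{\fix'}\geq FP_{\fix}$, $FN_{\fix'}\geq FN_{\fix}$, and therefore $\tilde{\ell}(FP_{\fix'},FN_{\fix'})\geq\tilde{\ell}(FP_{\fix},FN_{\fix})$, hold for an arbitrary monotonic loss. The remaining effort is the minor bookkeeping of passing from a strict real inequality to the integer bracket $[\cdot]$, in the same sense as in Theorem~\ref{thm:ub_leaf_num}.
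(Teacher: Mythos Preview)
Your proposal is correct and is exactly the natural argument: compare the two lower bounds via monotonicity of $\tilde\ell$ to extract the additive $\lambda(H'-H)$ term, then invoke the hypothesis $b(d'_{\fix},\x,\y)<R^c$ and the trivial $2^M$ cap. The paper itself does not give a proof here; it simply cites OSDT \citep{HuRuSe2019} and states that the bound ``adapts directly'' from there, so there is nothing further to compare against.
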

\begin{proof}
This bound adapts directly from OSDT \citep{HuRuSe2019}, where the proof can be found.
\end{proof}

\subsection{Incremental Progress Bound to Determine Splitting and Lower Bound on Incremental Progress}
In the implementation, Theorem \ref{thm:leaf_supp} below is used to check if a leaf node within $d_{\splitrm}$ is worth splitting. If the bound is satisfied and the leaf can be further split, then we generate new leaves and Theorem \ref{thm:increm} is applied to check if this split yields new nodes or leaves that are good enough to consider in the future. Let us give an example to show how Theorem \ref{thm:leaf_supp} is easier to compute than Theorem \ref{thm:increm}. If we are evaluating a potential split on leaf $j$, Theorem \ref{thm:leaf_supp} requires $FP_j$ and $FN_j$ which are the false positives and false negatives for leaf $j$, but no extra information about the split we are going to make, whereas Theorem \ref{thm:increm} requires that additional information. Let us work with balanced accuracy as the loss function: for Theorem \ref{thm:leaf_supp} below, we would need to compute $\tau =\frac{1}{2}(\frac{FN_j}{N^+}+\frac{FP_j}{N^-})$ but for Theorem \ref{thm:increm} below we would need to calculate quantities for the new leaves we would form by splitting $j$ into child leaves $i$ and $i+1$. Namely, we would need $FN_i$, $FN_{i+1}$, $FP_i$, and $FP_{i+1}$ as well.


\begin{theorem}\label{thm:leaf_supp}(Incremental progress bound to determine splitting) 
Let $d^* = (d_{\fix}, \delta_{\fix}, d_{\splitrm}, \delta_{\splitrm}, K, H)$ be any optimal tree with objective $R^*$, i.e., $d^* \in \argmin_d R(d,\x,\y)$.
 Consider tree $d'$ derived from $d^*$ by deleting a pair of leaves $l_i$ and $l_{i+1}$ and adding their parent leaf $l_j$, $d' = (l_1, ..., l_{i-1}, l_{i+2}, ..., l_H, l_j)$. Let $\tau := \Tilde{\ell}(FP_{d'}, FN_{d'})-\Tilde{\ell}(FP_{d'}-FP_{l_j}, FN_{d'}-FN_{l_j})$. Then, $\tau$ must be at least $\lambda$. 
\end{theorem}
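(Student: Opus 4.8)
The plan is to argue by contradiction: suppose $d^*$ is optimal but the quantity $\tau$ associated with collapsing the sibling leaves $l_i, l_{i+1}$ into their parent $l_j$ satisfies $\tau < \lambda$. The idea is that this collapse is a ``profitable'' move — it reduces the leaf count by one (saving $\lambda$ in the penalty term) while increasing the loss by only $\tau < \lambda$ — so the collapsed tree $d'$ would have strictly smaller objective than $d^*$, contradicting optimality.

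First I would write out the objective of $d^*$ and of $d'$ explicitly. Since $d^*$ has $H$ leaves and $d'$ has $H-1$ leaves (two removed, one added), we have $R(d^*,\x,\y) = \tilde\ell(FP_{d^*}, FN_{d^*}) + \lambda H$ and $R(d',\x,\y) = \tilde\ell(FP_{d'}, FN_{d'}) + \lambda(H-1)$. The key structural observation is that every data point not captured by $l_i$ or $l_{i+1}$ is captured by exactly the same leaf in both trees with the same prediction, so the false positives and false negatives \emph{outside} the collapsed region are identical; that is, $FP_{d^*} - FP_{l_i} - FP_{l_{i+1}} = FP_{d'} - FP_{l_j}$ and likewise for $FN$. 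Combined with the fact that $l_i, l_{i+1}, l_j$ all capture the same set of samples (so $FP_{d'} - FP_{l_j} = FP_{d^*} - FP_{l_i} - FP_{l_{i+1}}$), the definition $\tau = \tilde\ell(FP_{d'}, FN_{d'}) - \tilde\ell(FP_{d'} - FP_{l_j}, FN_{d'} - FN_{l_j})$ becomes exactly the difference $\tilde\ell(FP_{d'},FN_{d'}) - \tilde\ell(FP_{d^*},FN_{d^*})$, i.e., the loss increase caused by the collapse.

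Then I would simply compute
\begin{equation}
R(d',\x,\y) - R(d^*,\x,\y) = \bigl[\tilde\ell(FP_{d'},FN_{d'}) - \tilde\ell(FP_{d^*},FN_{d^*})\bigr] - \lambda = \tau - \lambda.
\end{equation}
If $\tau < \lambda$ this difference is strictly negative, so $R(d',\x,\y) < R(d^*,\x,\y)$, contradicting $d^* \in \argmin_d R(d,\x,\y)$. Hence $\tau \geq \lambda$.

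The main obstacle — really a bookkeeping subtlety rather than a deep difficulty — is the handoff between ``$FP_{d'} - FP_{l_j}$'' (which is how $\tau$ is literally defined, in terms of the collapsed tree $d'$) and ``$FP_{d^*}$ minus the contributions of the two children'' (which is what makes the telescoping work). One must be careful that $l_j$'s optimal/assigned label in $d'$ need not agree with the labels of $l_i$ and $l_{i+1}$ in $d^*$, so $FP_{l_j}$ is not simply $FP_{l_i} + FP_{l_{i+1}}$; however, this does not matter, because $\tau$ is defined purely through $d'$ and only the samples \emph{outside} the collapsed region need to be matched up between the two trees, and those do match exactly. A secondary point to state clearly is that monotonicity of $\tilde\ell$ in $FP, FN$ is not actually needed for this particular bound (unlike the hierarchical bounds), since we are working with an exact equality for the objective difference; it suffices that $\ell$ is a well-defined function of $(FP,FN)$.
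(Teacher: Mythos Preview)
Your overall strategy (compare $R(d')$ to $R(d^*)$ and invoke optimality) matches the paper's, but there is a genuine error in the central identification step. You write that $\tau$ ``becomes exactly the difference $\tilde\ell(FP_{d'},FN_{d'}) - \tilde\ell(FP_{d^*},FN_{d^*})$,'' i.e., that $\tau = \ell(d',\x,\y)-\ell(d^*,\x,\y)$ exactly. This is false. You correctly note that $FP_{d'}-FP_{l_j} = FP_{d^*}-FP_{l_i}-FP_{l_{i+1}}$ (the errors \emph{outside} the collapsed region match), but $FP_{d^*}-FP_{l_i}-FP_{l_{i+1}}$ is not $FP_{d^*}$; the children $l_i,l_{i+1}$ may themselves commit errors. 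Concretely, for the additive loss $\tilde\ell(FP,FN)=FP+FN$ one has $\tau = FP_{l_j}+FN_{l_j}$ while $\ell(d')-\ell(d^*) = FP_{l_j}+FN_{l_j} - (FP_{l_i}+FP_{l_{i+1}}+FN_{l_i}+FN_{l_{i+1}})$, which coincide only when the children are perfect.

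What actually holds is the inequality $\ell(d')-\ell(d^*) \le \tau$, and this is precisely where monotonicity of $\tilde\ell$ enters (contrary to your last paragraph): since $FP_{d^*} = (FP_{d'}-FP_{l_j}) + FP_{l_i}+FP_{l_{i+1}} \ge FP_{d'}-FP_{l_j}$ and likewise for $FN$, monotonicity gives $\ell(d^*) \ge \tilde\ell(FP_{d'}-FP_{l_j},\,FN_{d'}-FN_{l_j})$, hence $\ell(d')-\ell(d^*) \le \tau$. The paper phrases this as ``$\tau$ is the maximal difference between $\ell(d')$ and $\ell(d^*)$,'' attained when $l_i,l_{i+1}$ classify perfectly. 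With the inequality in hand your contradiction still goes through: $0 \le R(d')-R(d^*) = \ell(d')-\ell(d^*)-\lambda \le \tau-\lambda$, so $\tau\ge\lambda$. So the fix is small, but the claim that monotonicity is unnecessary is wrong.
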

\begin{proof}
$\ell(d',\x,\y)=\Tilde{\ell}(FP_{d'}, FN_{d'})$ and $\ell(d^*,\x,\y)=\Tilde{\ell}(FP_{d'}+FP_{l_i}+FP_{l_{i+1}}-FP_{l_j}, FN_{d'}+FN_{l_i}+FN_{l_{i+1}}-FN_{l_j})$. The difference between $\ell(d^*,\x,\y)$ and $\ell(d',\x,\y)$ is maximized when $l_i$ and $l_{i+1}$ correctly classify all the captured data. Therefore, $\tau$ is the maximal difference between $\ell(d',\x,\y)$ and $\ell(d^*,\x,\y)$. 
Since $l(d',\x,\y) - \ell(d^*,\x,\y) \leqslant \tau$, we can get $\ell(d',\x,\y) + \lambda (H-1) \leqslant \ell(d^*,\x,\y) + \lambda (H-1) + \tau$, that is (and remember that $d^*$ is of size $H$ whereas $d'$ is of size $H-1$), $R(d',\x,\y) \leqslant R(d^*,\x,\y) - \lambda + \tau$. Since $d^*$ is optimal with respect to $R$, $0 \leqslant R(d',\x,\y) - R(d^*,\x,\y) \leqslant -\lambda + \tau$, thus, $\tau \geqslant \lambda$. \end{proof}
Hence, for a tree $d$, if any of its internal nodes contributes less than $\lambda$ in loss, even though $b(d_{\fix},\x,\y) \leqslant R^*$, it cannot be the optimal tree and none of its child tree could be the optimal tree. Thus, after evaluating tree $d$, we can prune it. 

\begin{theorem}\label{thm:increm}(Lower bound on incremental progress)
Let $d^{*}=(d_{\fix}, \delta_{\fix}, d_{\splitrm}, \delta_{\splitrm}, K, H)$ be any optimal tree with objective $R^*$, i.e., $d^* \in \argmin_d R(d,\x,\y)$. Let $d^*$ have leaves $d_{\fix} = (l_1, ..., l_H)$ and $\delta_{\fix}=(y_1^{(\leaf)}, y_2^{(\leaf)}, ..., y_H^{(\leaf)})$.
Consider tree $d'$ derived from $d^*$ by deleting a pair of leaves $l_i$ and $l_{i+1}$ with corresponding labels $y_i^{\leaf}$ and $y_{i+1}^{\leaf}$ and adding their parent leaf $l_j$ and its label $y_j^{\leaf}$. Define 
$a_i$ as the incremental objective of splitting $l_j$ to get $l_i, l_{i+1}$: $a_i := \ell(d',\x,\y) - \ell(d^*,\x,\y).$ 
In this case, $\lambda$ provides a lower bound s.t. $a_i \geqslant \lambda$. 
\end{theorem}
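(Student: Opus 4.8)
The plan is to derive the bound directly from the optimality of $d^*$, along the same lines as the proof of Theorem~\ref{thm:leaf_supp}, but keeping the \emph{actual} loss of $d'$ rather than relaxing to its best case. First I would verify that $d'$ is a legitimate competitor: since $l_i$ and $l_{i+1}$ are the two children of a single internal node of $d^*$, their parent $l_j$ corresponds to the conjunction of conditions on the path to that node, so $d' = (l_1,\dots,l_{i-1},l_{i+2},\dots,l_H,l_j)$ is a valid leaf set with exactly $H-1$ leaves, lying in the same search space over which $d^*$ is optimal. Hence $R(d^*,\x,\y)\leq R(d',\x,\y)$.

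Next I would expand both sides using the objective definition in~(\ref{eq:risk}): $R(d^*,\x,\y)=\ell(d^*,\x,\y)+\lambda H$ and $R(d',\x,\y)=\ell(d',\x,\y)+\lambda(H-1)$. Substituting into $R(d^*,\x,\y)\leq R(d',\x,\y)$ and cancelling $\lambda(H-1)$ from both sides yields $\ell(d^*,\x,\y)+\lambda\leq \ell(d',\x,\y)$, i.e. $\lambda \leq \ell(d',\x,\y)-\ell(d^*,\x,\y) = a_i$, which is the claim. The label $y_j^{\leaf}$ attached to $l_j$ enters the argument only through the value of $\ell(d',\x,\y)$: whatever label is chosen, $d'$ is still a feasible tree that $d^*$ must do at least as well as, so the inequality holds for every choice of $y_j^{\leaf}$ (in particular the loss-minimizing one).

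The only subtlety — and the only place this differs from Theorem~\ref{thm:leaf_supp} — is that one must \emph{not} route the argument through the quantity $\tau$ there. That $\tau$ measures the loss improvement one would obtain if $l_i$ and $l_{i+1}$ classified all their captured points perfectly, so $\tau \geq a_i$, and $\tau \geq \lambda$ does \emph{not} give $a_i \geq \lambda$. The resolution is that the comparison $R(d^*,\x,\y)\leq R(d',\x,\y)$ already involves the exact loss $\ell(d',\x,\y)$, so no best-case relaxation is used and the bound on $a_i$ falls out without invoking the previous theorem. I do not expect a genuine obstacle: the statement is an immediate consequence of optimality once $d'$ is recognized as an admissible $(H-1)$-leaf tree; the points requiring care are the bookkeeping of leaf counts ($H$ versus $H-1$) and the non-strictness of the conclusion (we obtain $a_i\geq\lambda$, with equality possible when $R$ has several minimizers).
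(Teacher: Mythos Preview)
Your proposal is correct and follows essentially the same argument as the paper: both use the optimality of $d^*$ to obtain $R(d^*,\x,\y)\le R(d',\x,\y)$, expand via $R=\ell+\lambda H$ with $d'$ having $H-1$ leaves, and rearrange to get $a_i\ge\lambda$. Your additional remarks distinguishing this from Theorem~\ref{thm:leaf_supp} and on the role of $y_j^{\leaf}$ are useful clarifications that the paper omits, but the core proof is identical.
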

\begin{proof}
Let $d' = (d'_{\fix}, \delta'_{\fix}, d'_{\splitrm}, \delta'_{\splitrm}, K', H')$ be the tree derived from $d^*$ by deleting a pair of leaves $l_i$ and $l_{i+1}$, and adding their parent leaf $l_j$. Then, 
\begin{equation}
    \begin{split}
        R(d',\x,\y) &= \ell(d',\x,\y) + \lambda (H-1) = a_i + l(d^*,\x,\y) + \lambda(H-1)\\
    &= a_i + R(d^*,\x,\y) - \lambda.\\
    \end{split}
\end{equation}
Since $0 \leqslant R(d',\x,\y)-R(d^*,\x,\y)$, then $a_i \geqslant \lambda$.
\end{proof}

In the implementation, we apply both Theorem \ref{thm:leaf_supp} and Theorem \ref{thm:increm}. If Theorem \ref{thm:leaf_supp} is not satisfied, even though $b(d_{\fix},\x,\y) \leqslant R^*$, it cannot be an optimal tree and none of its child trees could be an optimal tree. In this case, $d$ can be pruned, as we showed before. However, if Theorem \ref{thm:leaf_supp} is satisfied, we check Theorem \ref{thm:increm}. If Theorem \ref{thm:increm} is not satisfied, then we would need to further split at least one of the two child leaves--either of the new leaves $i$ or $i+1$--in order to obtain a potentially optimal tree. 

\subsection{Permutation Bound}
\begin{theorem}\label{thm:perm}(Leaf Permutation bound) Let $\pi$ be any permutation of $\{1, ..., H\}$. Let $d=(d_{\fix}, d_{\splitrm}, K, H)$ and $D=(D_{\fix},D_{\splitrm}, K, H)$ be trees with leaves $(l_1, ..., l_H)$ and $(l_{\pi(1)}, ..., l_{\pi(H)})$ respectively, i.e., the leaves in $D$ correspond to a permutation of the leaves in d. Then the objective lower bounds of $d$ and $D$ are the same and their child trees correspond to permutations of each other. 
\end{theorem}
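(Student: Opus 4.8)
The plan is to exploit the fact that everything GOSDT attaches to a tree---its objective, its hierarchical lower bound, and its collection of child trees---is a function of the \emph{set} of leaves (equivalently, of the collection of support sets of the leaves), not of the order in which the leaves are listed in the tuple representation. Once that is isolated, the theorem becomes the statement that a symmetric function is unchanged when its arguments are permuted, so I would organize the argument into three short steps followed by a remark on the one genuinely fiddly point.

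First I would show the objective is permutation-invariant. Since the leaves of a tree partition the data, $FP_d = \sum_{i=1}^{H} FP_{l_i}$ and $FN_d = \sum_{i=1}^{H} FN_{l_i}$, and these sums do not depend on the ordering of the leaves. For a monotonic loss we may write $\ell(d,\x,\y) = \tilde{\ell}(FP_d, FN_d)$ as in Theorem \ref{thm:hierarchy}, so $\ell$---and hence $R(d,\x,\y) = \ell(d,\x,\y) + \lambda H_d$---depends only on $\{l_1,\dots,l_H\}$ as a set; for the rank objectives of Section \ref{Sec:NewObjectives} the defining formula first re-sorts the leaves by $r_i$, so it is manifestly invariant to the incoming order. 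Because $D$'s leaf tuple is a permutation of $d$'s and both have $H$ leaves, this gives $R(D,\x,\y) = R(d,\x,\y)$.

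Next I would show the lower bounds coincide. By Theorem \ref{thm:hierarchy}, $b(d_{\fix},\x,\y) = \tilde{\ell}(FP_{\fix}, FN_{\fix}) + \lambda H$ with $FP_{\fix} = \sum_{i=1}^{K} FP_{l_i}$ and similarly for $FN_{\fix}$; again these are order-independent sums over the fixed leaves. Since $D$ shares the same $K$ and $H$ and $D_{\fix}$ is the corresponding permutation of $d_{\fix}$, the two sums agree and $b(D_{\fix},\x,\y) = b(d_{\fix},\x,\y)$. Then I would handle the child trees: recall $\sigma(d) = \{d' : d_{\fix} \subseteq d'_{\fix}\}$, so membership is determined by which leaves of $d$ are fixed and by how the splittable leaves $d_{\splitrm}$ are grown further---both set-level properties. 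The permutation $\pi$ carrying $d$ to $D$ induces a map $\Phi_\pi : \sigma(d) \to \sigma(D)$ that relabels leaf indices ($\pi$ acting on leaves inherited from $d$, the identity on leaves created by subsequent splits), and I would check that $\Phi_\pi$ preserves the fixed/splittable partition and the refinement relation, that $\Phi_{\pi^{-1}}$ inverts it, and that $d'$ and $\Phi_\pi(d')$ always have identical leaf sets---so, by the first two steps, identical objectives and lower bounds. This is precisely the claim that the child trees of $d$ and $D$ correspond to permutations of each other.

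The main obstacle is the bookkeeping in this last step: writing $\Phi_\pi$ down precisely for \emph{partially} grown trees---in particular, tracking where the two leaves produced by splitting some $l_j \in d_{\splitrm}$ should be inserted into the ordered representation of the corresponding child of $D$---and verifying that this really is a well-defined, objective-preserving bijection rather than merely a correspondence ``up to relabeling.'' Everything else reduces to the observation that $FP_d$, $FN_d$, $FP_{\fix}$, $FN_{\fix}$, and the sets $\sigma(\cdot)$ are symmetric in the leaves.
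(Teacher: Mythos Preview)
Your argument is correct and is the standard one: the paper itself does not give a proof here but simply defers to \citet{HuRuSe2019}, so there is no substantive comparison to make. Your decomposition into (i) permutation-invariance of $R$ via the symmetric sums $FP_d=\sum_i FP_{l_i}$ and $FN_d=\sum_i FN_{l_i}$, (ii) the same for $b(d_{\fix},\cdot,\cdot)$ over the fixed leaves, and (iii) the induced bijection $\Phi_\pi:\sigma(d)\to\sigma(D)$ is exactly how the OSDT result is established, and your remark that the rank-statistic objectives are \emph{a fortiori} invariant because they first re-sort by $r_i$ is a nice touch not needed in the original.

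One small point worth making explicit in your write-up: in step two you silently assume that $D_{\fix}$ is a permutation of $d_{\fix}$, i.e., that $\pi$ restricts to a bijection $\{1,\dots,K\}\to\{1,\dots,K\}$. The theorem statement is terse enough that a reader could misread $\pi$ as an arbitrary permutation of $\{1,\dots,H\}$ mixing fixed and splittable leaves, in which case $b(d_{\fix},\x,\y)\neq b(D_{\fix},\x,\y)$ in general. The intended reading---and the one under which the bound is actually used in the algorithm---is that $d$ and $D$ have the same \emph{set} of fixed leaves and the same \emph{set} of splittable leaves, merely listed in different orders; stating this hypothesis up front removes the ambiguity and makes your step two airtight.
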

\begin{proof}
This bound adapts directly from OSDT \citep{HuRuSe2019}, where the proof can be found.
\end{proof}
Therefore, if two trees have the same leaves, up to a permutation, according to Theorem \ref{thm:perm}, one of them can be pruned. This bound is capable of reducing the search space by all future symmetries of trees we have already seen. 

\subsection{Equivalent Points Bound}
As we know, for a tree $d=(d_{\fix}, \delta_{\fix}, d_{\splitrm}, \delta_{\splitrm}, K, H)$, the objective of this tree (and that of its children) is minimized when there are no errors in the split leaves: $FP_{\splitrm}=0$ and $FN_{\splitrm}=0$. In that case, the risk is equal to $b(d_{\fix},\x,\y)$. However, if multiple observations captured by a leaf in $d_{\splitrm}$ have the same features but different labels, then no tree, including those that extend $d_{\splitrm}$, can correctly classify all of these observations, that is $FP_{\splitrm}$ and $FN_{\splitrm}$ cannot be zero. In this case, we can apply the equivalent points bound to give a tighter lower bound on the objective.

Let $\Omega$ be a set of leaves. \textit{Capture} is an indicator function that equals 1 if $x_i$ falls into one of the leaves in $\Omega$, and 0 otherwise, in which case we say that $\mathrm{cap}(x_i,\Omega)=1$. We define a set of samples to be equivalent if they have exactly the same feature values. Let $e_u$ be a set of equivalent points and let $q_u$ be the minority class label that minimizes the loss among points in $e_u$. Note that a dataset consists of multiple sets of equivalent points. Let $\{e_u\}_{u=1}^U$ enumerate these sets. 

\begin{theorem}\label{thm:equiv}(Equivalent points bound) Let $d = (d_{\fix}, \delta_{\fix}, d_{\splitrm}, \delta_{\splitrm}, K, H)$ be a tree such that $l_k \in d_{\fix}$ for $k \in \{1,...,K\}$ and $l_k \in d_{\splitrm}$ for $k \in \{K+1,...,H\}$. For any tree $d' \in \sigma(d)$, 
\begin{equation}
    \ell(d',\x,\y) \geqslant \Tilde{\ell}(FP_{\fix} + FP_e, FN_{\fix} + FN_e), \ where
\end{equation}
\begin{equation}
\begin{split}
    FP_e &= \sum_{i=1}^N \sum_{u=1}^U \sum_{k=K+1}^H {\rm cap}(x_i, l_k) \wedge \mathds{1}[y_i=0]\wedge \mathds{1}[x_i \in e_u]\mathds{1}[y_i = q_u]\\
    FN_e &= \sum_{i=1}^N \sum_{u=1}^U \sum_{k=K+1}^H {\rm cap}(x_i, l_k) \wedge \mathds{1}[y_i=1] \wedge  \mathds{1}[x_i \in e_u]\mathds{1}[y_i = q_u].
\end{split}
\end{equation}
\end{theorem}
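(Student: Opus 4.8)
The plan is to split the loss of an arbitrary child tree $d'\in\sigma(d)$ into a part that is frozen by the definition of $\sigma(d)$ and a part over the region captured by $d_{\splitrm}$, and then to lower bound the latter by the errors that equivalent points make unavoidable. First I would note that, since $d_{\fix}\subseteq d'_{\fix}$, each leaf $l_1,\dots,l_K$ of $d_{\fix}$ is still a leaf of $d'$ (inherited with its captured samples and its label), so these leaves contribute exactly $FP_{\fix}$ false positives and $FN_{\fix}$ false negatives to $d'$. Writing $FP_{d'}=FP_{\fix}+FP_{\rm rest}$ and $FN_{d'}=FN_{\fix}+FN_{\rm rest}$, where ``${\rm rest}$'' ranges over the leaves of $d'$ obtained by further splitting $d_{\splitrm}$, it remains to show $FP_{\rm rest}\geq FP_e$ and $FN_{\rm rest}\geq FN_e$.

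The key structural fact is that a leaf is a conjunction of conditions on the features, so any two samples with identical feature vectors are captured by the same leaf of any tree; hence each equivalence class $e_u$ is captured by a single leaf of $d$, and therefore lies entirely within $d_{\fix}$ or entirely within $d_{\splitrm}$ (and likewise for every $d'\in\sigma(d)$). Fix an $e_u$ captured by $d_{\splitrm}$, with $n_u^+$ positive and $n_u^-$ negative samples. In $d'$ all of $e_u$ falls into one leaf, which issues a single prediction, so that leaf misclassifies at least $\min(n_u^+,n_u^-)$ points of $e_u$; this minimum is attained by predicting the majority label of $e_u$, and then the misclassified points are exactly those carrying the minority label $q_u$. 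Summing these forced errors over all equivalence classes inside $d_{\splitrm}$ and splitting the count according to whether $y_i=0$ or $y_i=1$ reproduces exactly the definitions of $FP_e$ and $FN_e$, so $FP_{\rm rest}\geq FP_e$ and $FN_{\rm rest}\geq FN_e$, and therefore $FP_{d'}\geq FP_{\fix}+FP_e$ and $FN_{d'}\geq FN_{\fix}+FN_e$. Invoking the monotonicity of $\tilde{\ell}$ in $FP$ and $FN$ then yields $\ell(d',\x,\y)=\tilde{\ell}(FP_{d'},FN_{d'})\geq\tilde{\ell}(FP_{\fix}+FP_e,FN_{\fix}+FN_e)$, which is the claimed bound.

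I expect the delicate step to be turning ``each equivalence class costs at least its minority count of errors'' into the two coordinatewise bounds $FP_{\rm rest}\geq FP_e$ and $FN_{\rm rest}\geq FN_e$. Two things need care here. First, one must establish that no $e_u$ straddles the boundary between $d_{\fix}$ and $d_{\splitrm}$, so that the global minority label $q_u$ is also the minority label of the part of $e_u$ seen by $d_{\splitrm}$ and errors cannot leak between the two regions; this is exactly what the ``same leaf for identical features'' observation buys us. Second, to aggregate the per-class minima coordinatewise one should label each newly created leaf by its majority class --- a relabeling that is without loss of optimality for the additive (e.g.\ weighted classification) losses targeted here --- since an occasional minority labeling can trade false positives for false negatives and break a single coordinate bound even though it raises the total error. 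The error decomposition in the first two steps and the closing monotonicity argument are otherwise routine.
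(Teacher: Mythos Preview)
Your decomposition and overall strategy coincide with the paper's: split $FP_{d'}$ and $FN_{d'}$ into the frozen contribution from $d_{\fix}$ plus a remainder over the leaves of $d'$ that refine $d_{\splitrm}$, lower bound that remainder coordinatewise by $FP_e$ and $FN_e$, and finish by monotonicity of $\tilde\ell$. The paper executes the middle step by writing the remainder as an indicator sum over the leaves $l'_{K+1},\dots,l'_{H'}$ of $d'$, asserting the termwise inequality $\mathds{1}[y_i\neq\hat y_k^{(\leaf)}]\wedge\mathds{1}[y_i=0]\ge \mathds{1}[y_i=0]\wedge\mathds{1}[y_i=q_u]$ (and its $FN$ analogue), and then using that the support of $d_{\splitrm}$ equals the support of these new leaves to rewrite the right-hand side over $l_{K+1},\dots,l_H$. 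So the architecture is identical; what differs is that you are explicit about where the argument is thin.

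Your caution is warranted. The termwise inequality the paper writes down says, in effect, that every sample carrying the minority label of its equivalence class is misclassified by its leaf in $d'$. That can fail when a leaf of $d'$ contains several equivalence classes and its single label happens to agree with the minority of one of them, and it can fail even under majority labeling of the leaf (e.g.\ $e_1$ with $100$ positives and $1$ negative together with $e_2$ with $1$ positive and $5$ negatives: the majority label is $1$, so $FN=0$, yet $FN_e=1$ from $e_2$). So neither the paper's bare assertion nor your proposed majority-relabeling actually secures the two coordinatewise bounds for an arbitrary $d'\in\sigma(d)$ and an arbitrary monotone $\tilde\ell$. What does survive is exactly what you isolate at the end: the per-class total-error bound $\min(n_u^+,n_u^-)$, which yields $FP_{\rm rest}+FN_{\rm rest}\ge FP_e+FN_e$ and hence the theorem for additive losses such as (weighted) misclassification. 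In that sense your proposal is at least as complete as the paper's proof and more candid about the scope of the argument.
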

\begin{proof}
Since $d' \in \sigma(d)$, $d' = (l'_1, ..., l'_K, l'_{K+1}, ...,l'_{K'},..., l_{H'})$ where we have both $l'_k \in d'_{\fix}$ for $k \in \{1,...,K'\}$, which are the fixed leaves and also $l'_k \in d'_{\splitrm}$ for $k \in \{K'+1,...,H'\}$. Note that for $k \in \{1,...,K\}$, $l'_k = l_k$. 

Let $\Delta = d'_{\fix}\backslash d_{\fix}$ which are the leaves in $d'_{\fix}$ that are not in $d_{\fix}$. Then $\ell(d',\x,\y) = \Tilde{\ell}(FP_{d'}, FN_{d'}) = \Tilde{\ell}(FP_{\fix}+FP_{\Delta}+FP_{\splitrm'}, FN_{\fix}+FN_{\Delta}+FN_{\splitrm'})$, where $FP_{\Delta}$ and $FN_{\Delta}$ are false positives and false negatives in $d'_{\fix}$ but not $d_{\fix}$ and $FP_{\splitrm'}$ and $FN_{\splitrm'}$ are false positives and false negatives in $d'_{\splitrm}$. For tree $d'$, its leaves in $\Delta$ are those indexed from $K$ to $K'$. Thus, the sum over leaves of $d'$ from $K$ to $H'$ includes leaves from $\Delta$ and leaves from $d_{\splitrm}'$. 
\begin{equation}
\begin{split}
    FP_{\Delta}+FP_{\splitrm'} &= \sum_{i=1}^N \sum_{u=1}^U \sum_{k=K+1}^{H'} \cap(x_i, l'_k) \wedge \mathds{1}[y_i \neq \hat{y}_k^{(\leaf)}] \wedge \mathds{1}[y_i = 0] \wedge \mathds{1}[x_i \in e_u] \\
    &\geqslant \sum_{i=1}^N \sum_{u=1}^U \sum_{k=K+1}^{H'} \cap(x_i, l'_k) \wedge \mathds{1}[y_i=0]\wedge \mathds{1}[x_i \in e_u]\mathds{1}[y_i = q_u] \\
    FN_{\Delta}+FN_{\splitrm'} &= \sum_{i=1}^N \sum_{u=1}^U \sum_{k=K+1}^{H'} \cap(x_i, l'_k) \wedge \mathds{1}[y_i \neq \hat{y}_k^{(\leaf)}] \wedge \mathds{1}[y_i = 1] \wedge \mathds{1}[x_i \in e_u] \\
    &\geqslant \sum_{i=1}^N \sum_{u=1}^U \sum_{k=K+1}^{H'} \cap(x_i, l'_k) \wedge \mathds{1}[y_i=1]\wedge \mathds{1}[x_i \in e_u]\mathds{1}[y_i = q_u].
\end{split}
\end{equation}
For $i \in \{1, ..., N\}$, the samples in $d_{\splitrm}$ are the same ones captured by either $\Delta$ or $d_{\splitrm}'$, that is $\sum_{k=K+1}^{H}\cap(x_i, l_k) = \sum_{k=K+1}^{H'}\cap(x_i, l'_k)$. Then 
\begin{equation}
    FP_{\Delta}+FP_{\splitrm'} \geqslant \sum_{i=1}^N \sum_{u=1}^U \sum_{k=K+1}^{H} \cap(x_i, l_k) \wedge \mathds{1}[y_i=0]\wedge \mathds{1}[x_i \in e_u]\mathds{1}[y_i = q_u] = FP_e.
\end{equation}
Similarly, $FN_{\Delta}+FN_{\splitrm'} \geqslant FN_e$. Therefore, 
\begin{equation}
\begin{split}
    \ell(d',\x,\y) = \Tilde{\ell}(FP_{\fix}+FP_{\Delta}+FP_{\splitrm'}, FN_{\fix}+FN_{\Delta}+FN_{\splitrm'}) \geqslant \Tilde{\ell}(FP_{\fix} + FP_e, FN_{\fix} + FN_e).
\end{split}
\end{equation}
\end{proof}

\subsection{Similar Support Bound}
Given two trees that are exactly the same except for one internal node split by different features $f_1$ and $f_2$, we can use the similar support bound for pruning. 
\begin{theorem}\label{thm:similar}(Similar support bound) Define $d = (d_{\fix}, \delta_{\fix}, d_{\splitrm}, \delta_{\splitrm}, K, H)$ and $D = (D_{\fix}, \Delta_{\fix}, D_{\splitrm}, \Delta_{\splitrm}, K, H)$ to be two trees that are exactly the same except for one internal node split by different features. Let $f_1$ and $f_2$ be the features used to split that node in $d$ and $D$ respectively. Let $t_1, t_2$ be the left and right sub-trees under the node $f_1$ in $d$ and let $T_1, T_2$ be the left and right sub-trees under the node $f_2$ in $D$.  Let $\omega$ be the  observations captured by only one of $t_1$ or $T_1$, i.e.,
\begin{equation}
    \omega \coloneqq 
    \{i:[{\rm cap}(x_i,t_1) \wedge \neg {\rm cap}(x_i, T_1) + \neg {\rm cap}(x_i, t_1) \wedge {\rm cap}(x_i, T_1)]\}.
\end{equation}
Let $FP_{-\omega}$ and $FN_{-\omega}$ be the false positives and false negatives of samples except $\omega$. The difference between the two trees' objectives is bounded as follows: 
\begin{equation}
    |R(d,\x,\y) - R(D,\x,\y)|\leqslant \gamma,\  where
\end{equation}
\begin{equation}
    \gamma := \max_{a\in \{0,...,|\omega|\}} [\Tilde{\ell}(FP_{-\omega}+a, FN_{-\omega}+|\omega|-a)] - \Tilde{\ell}(FP_{-\omega}, FN_{-\omega}).
\end{equation}
Then we have
\begin{equation}
   \left|\min_{d^+ \in \sigma{(d)}} R(d^+,\x,\y) - \min_{D^+ \in \sigma{(D)}} R(D^+,\x,\y)\right| \leqslant \gamma.
\end{equation}
\end{theorem}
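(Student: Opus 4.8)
The plan is to first prove the base inequality $|R(d,\x,\y)-R(D,\x,\y)|\le\gamma$ for the two given trees, and then lift it to the descendant sets $\sigma(d)$ and $\sigma(D)$ through a structure-preserving bijection. For the base trees, the key observation is that $d$ and $D$ are identical except at the single internal node that splits on $f_1$ versus $f_2$, so every observation \emph{not} in $\omega$ takes the same branch at that node in both trees and, following identical substructure everywhere else, is captured by corresponding leaves carrying (essentially) the same prediction. Hence the false positives and false negatives coming from the non-$\omega$ observations form a quantity common to both trees, namely $(FP_{-\omega},FN_{-\omega})$, and because the leaf count $H$ is the same the $\lambda H$ terms cancel in $R(d,\x,\y)-R(D,\x,\y)=\Tilde{\ell}(FP_d,FN_d)-\Tilde{\ell}(FP_D,FN_D)$. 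Writing $FP_d=FP_{-\omega}+p_d$ and $FN_d=FN_{-\omega}+q_d$ with $p_d,q_d\ge 0$ and $p_d+q_d\le|\omega|$, monotonicity of $\Tilde{\ell}$ gives $\Tilde{\ell}(FP_d,FN_d)\le\Tilde{\ell}(FP_{-\omega}+p_d,\,FN_{-\omega}+|\omega|-p_d)\le\max_{a\in\{0,\dots,|\omega|\}}\Tilde{\ell}(FP_{-\omega}+a,\,FN_{-\omega}+|\omega|-a)$, while $\Tilde{\ell}(FP_D,FN_D)\ge\Tilde{\ell}(FP_{-\omega},FN_{-\omega})$; subtracting yields $R(d,\x,\y)-R(D,\x,\y)\le\gamma$, and swapping the roles of $d$ and $D$ gives $|R(d,\x,\y)-R(D,\x,\y)|\le\gamma$.

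For the descendant claim I would build a bijection $\phi:\sigma(d)\to\sigma(D)$ by mirroring structure: any $d^{+}\in\sigma(d)$ is obtained from $d$ by replacing some leaves of $d_{\splitrm}$ with subtrees, and since $d$ and $D$ coincide except at the differing node their splittable leaves are in positional correspondence, so applying the \emph{same} additional splits (same features, same positions) to the corresponding leaves of $D$ produces $D^{+}=\phi(d^{+})\in\sigma(D)$. This $\phi$ is a bijection and preserves the leaf count. Crucially, the pair $(d^{+},D^{+})$ still differs only at that one node: growing leaves of $d_{\splitrm}$ does not touch the path to the node nor its split, so the left (resp.\ right) branch of the node in $d^{+}$ captures exactly the same observation set as it did in $d$, and likewise in $D^{+}$; therefore the symmetric-difference set for $(d^{+},D^{+})$ is again $\omega$ and the non-$\omega$ observations are still routed identically. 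The base argument then applies to every pair $(d^{+},\phi(d^{+}))$, giving $|R(d^{+},\x,\y)-R(\phi(d^{+}),\x,\y)|\le\gamma$. Since $\phi$ is a bijection, $\min_{d^{+}\in\sigma(d)}R(d^{+},\x,\y)\ge\min_{d^{+}\in\sigma(d)}\bigl(R(\phi(d^{+}),\x,\y)-\gamma\bigr)=\min_{D^{+}\in\sigma(D)}R(D^{+},\x,\y)-\gamma$, and symmetrically with $d$ and $D$ interchanged, which is the claim.

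The main obstacle is making rigorous the statement that the non-$\omega$ observations contribute the \emph{same} $(FP_{-\omega},FN_{-\omega})$ to both trees: two corresponding leaves capture the same non-$\omega$ observations but possibly different $\omega$ observations, and under optimal (majority) leaf labels this could flip a leaf's prediction, hence the correctness of its non-$\omega$ observations. I would handle this either by fixing a common leaf-labeling convention across the two trees, or by noting that per leaf the \emph{total} error $FP+FN$ changes by at most the number of $\omega$ observations in that leaf (summing to at most $|\omega|$) and then using monotonicity of $\Tilde{\ell}$ to absorb the split between $FP$ and $FN$ into the $\max_{a}$ defining $\gamma$. A secondary technical point is that passing to a descendant $d^{+}$ may \emph{decrease} the non-$\omega$ error relative to $d$, so one must check that the same $\gamma$ (computed from $d,D$) still bounds $|R(d^{+},\x,\y)-R(\phi(d^{+}),\x,\y)|$; this is immediate for the objectives of interest that are (piecewise-)linear in $(FP,FN)$—accuracy, balanced accuracy, weighted accuracy—since there $\gamma$ depends only on $|\omega|$ and not on the base point, and for F-score it is consistent with the monotonicity repair already applied to that objective.
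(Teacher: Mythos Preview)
Your approach is essentially the paper's: prove the base bound by arguing the worst case has one tree misclassifying all of $\omega$ while the other classifies it perfectly, then lift to descendants by taking the minimizer $d^*\in\sigma(d)$, passing to its structural counterpart $D'\in\sigma(D)$, and applying the base inequality to that pair (the paper does not spell out the full bijection, but the content is the same). The two subtleties you flag---that corresponding leaves could receive different majority labels because of $\omega$, and that $\gamma$ as stated depends on the base point $(FP_{-\omega},FN_{-\omega})$ and hence might differ for a descendant pair---are real, and the paper's proof does not address them either: it simply reuses the same $\gamma$ for $(d^*,D')$ without comment, so your caveat that the argument is clean only for losses where $\gamma$ depends on $|\omega|$ alone (accuracy, balanced/weighted accuracy) is an honest sharpening rather than a divergence from the paper.
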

\begin{proof}
The difference between the objectives of $d$ and $D$ is largest when one of them correctly classifies all the data in $\omega$ but the other misclassifies all of them. If $d$ classifies all the data corresponding to $\omega$ correctly while $D$ misclassifies them, 
\begin{equation} \label{eq:similar}
    R(d,\x,\y) - R(D,\x,\y) \geqslant \Tilde{\ell}(FP_{-\omega}, FN_{-\omega}) - \max_{a\in \{0,...,|\omega|\}} [\Tilde{\ell}(FP_{-\omega}+a, FN_{-\omega}+|\omega|-a)] = -\gamma.
\end{equation}
We can get $R(d,\x,\y)-R(D,\x,\y) \leqslant \gamma$ in the same way. Therefore, $\gamma \geqslant R(d,\x,\y)-R(D,\x,\y) \geqslant -\gamma$.

Let $d^*$ be the best child tree of $d$, i.e., $R(d^*,\x,\y) = \min_{d^+ \in \sigma{(d)}} R(d^+,\x,\y)$. Let $D' \in \sigma(D)$ be its counterpart which is exactly the same except for one internal node split by a different feature. Then, using Equation \ref{eq:similar},
\begin{equation}
    \min_{d^+ \in \sigma{(d)}} R(d^+,\x,\y) = R(d^*,\x,\y) \geqslant R(D',\x,\y) - \gamma \geqslant \min_{D^+ \in \sigma{(D)}} R(D^+,\x,\y) - \gamma.
\end{equation}
Similarly, using the symmetric counterpart to Equation \ref{eq:similar} and the same logic, $\min\limits_{D^+ \in \sigma{(D)}} R(D^+,\x,\y) + \gamma \geqslant \min\limits_{d^+ \in \sigma{(d)}} R(d^+,\x,\y)$.
\end{proof}

\section{Objectives and Their Lower Bounds for Rank Statistics}\label{App: Obj_rank}
In this appendix, we provide proofs for Theorem \ref{thm: lb_auc} and Theorem \ref{thm: hierarchy_auc}, and adapt the Incremental Progress Bound to Determine Splitting and the Equivalent Points Bound for the objective $\AUC_{\ch}$. The Upper Bound on the Number of Leaves, Parent-Specific Upper Bound on the Number of Leaves, Lower Bound of Incremental Progress, and Permutation Bound are the same as the bounds in Appendix \ref{App:Objectives}. We omit these duplicated proofs here. At the end of this appendix, we define the objective $\rm p\AUC_{\ch}$ and how we implement the derived bounds for this objective. As a reminder, we use notation $d=(d_{\fix}, r_{\fix}, d_{\splitrm}, r_{\splitrm}, K, H_d)$ to represent tree $d$. 

\begin{lemma}\label{lemma:auc}
Let $d=(d_{\fix},r_{\fix}, d_{\splitrm}, r_{\splitrm}, K, H_d)$ be a tree. The AUC convex hull does not decrease when an impure leaf is split. 
\end{lemma}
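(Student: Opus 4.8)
The plan is to show that splitting a single impure leaf can only move mass on the ROC convex hull outward (or leave it unchanged), so the area underneath cannot drop. First I would fix the tree $d$ and a leaf $l$ that is impure (so $0 < r_l < 1$), and let $d^\dagger$ be the tree obtained by replacing $l$ with its two children $l_a, l_b$, which partition the $n_l^+$ positives and $n_l^-$ negatives of $l$ into $(n_a^+, n_a^-)$ and $(n_b^+, n_b^-)$. All other leaves are untouched. The key observation is that the ROC convex hull of a labeled-leaf tree depends only on the multiset of points $\{(n_i^-, n_i^+)\}$ (equivalently, the slopes $r_i/(1-r_i)$ together with the leaf sizes), since the ROCCH is obtained by sorting leaves by their positive fraction $r_i$ in decreasing order and taking the upper-left staircase, as described just before Equation~\eqref{eq:auc_loss}.

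Next I would compare the two staircases directly. Going from $d$ to $d^\dagger$, every leaf other than $l$ contributes exactly the same ``step'' vector $(n_i^-/N^-,\, n_i^+/N^+)$; only the single step coming from $l$, namely $(n_l^-/N^-,\, n_l^+/N^+)$, is replaced by the two consecutive steps from $l_a$ and $l_b$, which sum to the same total displacement $(n_l^-/N^-,\,n_l^+/N^+)$. Without loss of generality order so that $r_a \geq r_b$. The ROCCH of $d^\dagger$ is the concave upper envelope of the path that uses steps sorted by decreasing slope; since $r_a \geq r_l \geq r_b$ (the parent's positive fraction is a weighted average of the children's), inserting the steeper step $l_a$ earlier and the shallower step $l_b$ later, in place of the single average step, produces a path that lies weakly above the old one at every abscissa, hence its concave envelope lies weakly above as well. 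Concretely, one shows that at the breakpoint between $l_a$ and $l_b$ the new path reaches a point that is on or above the chord it used to traverse, and at all other abscissas the two paths agree; therefore pointwise $\mathrm{TPR}_{d^\dagger}(\cdot) \geq \mathrm{TPR}_{d}(\cdot)$, and integrating gives $\AUC_{\ch}(d^\dagger) \geq \AUC_{\ch}(d)$. An alternative, perhaps cleaner, route is to argue combinatorially on misranked pairs: using Equation~\eqref{eq:auc_loss}, $\AUC_{\ch}$ counts correctly-ranked positive/negative pairs (with ties within a leaf counted as half). Splitting $l$ leaves all cross-leaf pairs untouched and all within-$l_a$ and within-$l_b$ pairs counted as before (still ties, half credit), but the pairs with one point in $l_a$ and one in $l_b$ were previously ties (half credit) and are now resolved in the direction $r_a \geq r_b$, i.e.\ ranked correctly — so they get at least half credit. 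Summing, the count does not decrease.

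The main obstacle I anticipate is handling the ROC convex hull correctly rather than the raw ROC curve: after a split, a leaf's point could fall strictly inside the previous hull, in which case the ``curve'' gains nothing but we must be sure the \emph{hull} also does not shrink — which it cannot, since the hull of a superset-informative configuration (same total mass, finer partition) always weakly contains the old hull because every old hull vertex is still attainable by an appropriate labeling $S_i$. Making this monotonicity of the convex hull under refinement precise — ideally via the pairwise-counting argument, which sidesteps geometry entirely and handles ties uniformly — is the crux; once that is in place, the inequality $\AUC_{\ch}(d^\dagger) \geq \AUC_{\ch}(d)$ is immediate, and since an arbitrary sequence of impure-leaf splits is a composition of such single steps, the lemma follows by induction on the number of splits.
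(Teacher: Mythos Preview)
Your approach is genuinely different from the paper's---the paper does an explicit four-case analysis based on where the two children $l_a, l_b$ land in the new ranking, computing $\Delta_{\AUC_{\ch}}$ as a signed sum of rectangle and triangle areas in each case. Your geometric/pairwise route is more conceptual and would be shorter if completed, but both of your sketches share the same gap, and it is exactly the phenomenon the paper's case analysis is handling.

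The gap is this: after the split, $l_a$ and $l_b$ need not stay adjacent in the sorted order. If some other leaf $m$ has $r_a > r_m > r_l$ (or $r_l > r_m > r_b$), then $l_a$ moves ahead of $m$ (or $l_b$ falls behind $m$) in the ranking. In your geometric argument, your ``concrete'' justification---``at the breakpoint between $l_a$ and $l_b$ the new path reaches a point on or above the chord, and at all other abscissas the two paths agree''---is valid only for the in-place replacement, before re-sorting. Once $l_a$ slides past $m$, the two curves differ at every abscissa in that stretch, and you have not argued that the re-sorted curve dominates. (It does: the missing step is that sorting a fixed multiset of step vectors by decreasing slope gives the pointwise-maximal path, provable by adjacent swaps. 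Add that and the two-stage argument ``in-place replacement $\leq$, then sort $\leq$'' goes through.) In your pairwise argument, the same issue appears as the claim ``cross-leaf pairs untouched'': pairs with one point in $m$ and one in the old $l$ are \emph{not} untouched when $r_m$ falls strictly between $r_b$ and $r_a$---the credit for $(p\in m,\,n\in l_a)$ drops from $1$ to $0$, for instance. The net change over all such pairs is $n_a^+ n_m^- - n_m^+ n_a^-$ (or the symmetric expression with $b$), which is nonnegative precisely because $r_a > r_m$ (resp.\ $r_m > r_b$), but you must actually check this; it does not come for free. This verification is essentially what the paper's Cases 1, 2, and 4 are doing leaf by leaf.
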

\begin{proof}
Let $l_i$ be the impure leaf that we intend to split, where $i \in \{1,...,H_d\}$. Let $n_i^+$ be the positive samples in $l_i$ and $n_i^-$ negative samples. Suppose $l_i$ is ranked in position ``pos.'' If the leaf is split once, it will generate two leaves $l_{i_1}$ and $l_{i_2}$ such that $r_{i_1} \geq r_{i} \geq r_{i_2}$ without loss of generality. Let $d'$ be the tree that consists of the leaf set $(l_1,...,l_{i-1},  l_{i+1},...,l_{H_d}, l_{i_1}, l_{i_2})$. If $r_{i_1} = r_{i} = r_{i_2}$, then the rank order of leaves (according to the $r_i$'s) will not change, so $\AUC_{\ch}$ will be unchanged after the split. Otherwise (if the rank order of leaves changes when introducing a child) we can reorder the $H+1$ leaves, leading to the following four cases. 
For the new leaf set $(l_1,...,l_{i-1},  l_{i+1},...,l_{H_d}, l_{i_1}, l_{i_2})$, either:
\begin{enumerate}[leftmargin=*, topsep=0pt, noitemsep]
    \item The rank of $l_{i_1}$ is smaller than $\rm pos$ and the rank of $l_{i_2}$ equals $\rm pos+1$ (requires $r_{i_1}>r_{i}$ and $r_{i} \geq r_{i_2}$);
    \item The rank of $l_{i_1}$ is smaller than $\rm pos$ and the rank of $l_{i_2}$ is larger than $\rm pos+1$ (requires $r_{i_1}>r_i$ and $r_i>r_{i_2}$); 
    \item The rank of $l_{i_1}$ is equal to $\rm pos$ and the rank of $l_{i_2}$ is equal to $\rm pos+1$ (requires $r_{i_1}\geq r_i$ and $r_i \geq r_{i_2}$); 
    \item The rank of $l_{i_1}$ is $\rm pos$ and the rank of $l_{i_2}$ is larger than $\rm pos+1$ (requires $r_{i_1}\geq r_i$ and $r_i > r_{i_2}$).
\end{enumerate}
\begin{figure}[]
    \centering
    \includegraphics[scale=0.4]{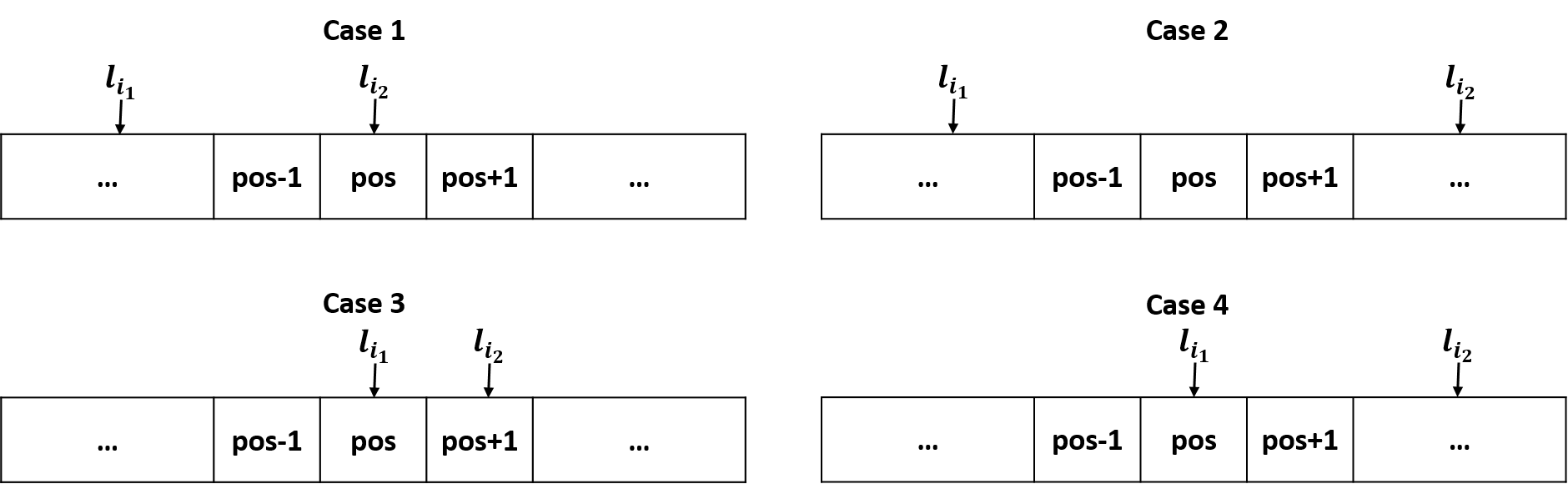}
    \caption{Four cases of positions of $l_{i_1}$ and $l_{i_2}$}
    \label{fig:pos}
\end{figure}
\begin{figure}[h]
    \centering
    \includegraphics[width=0.78\linewidth]{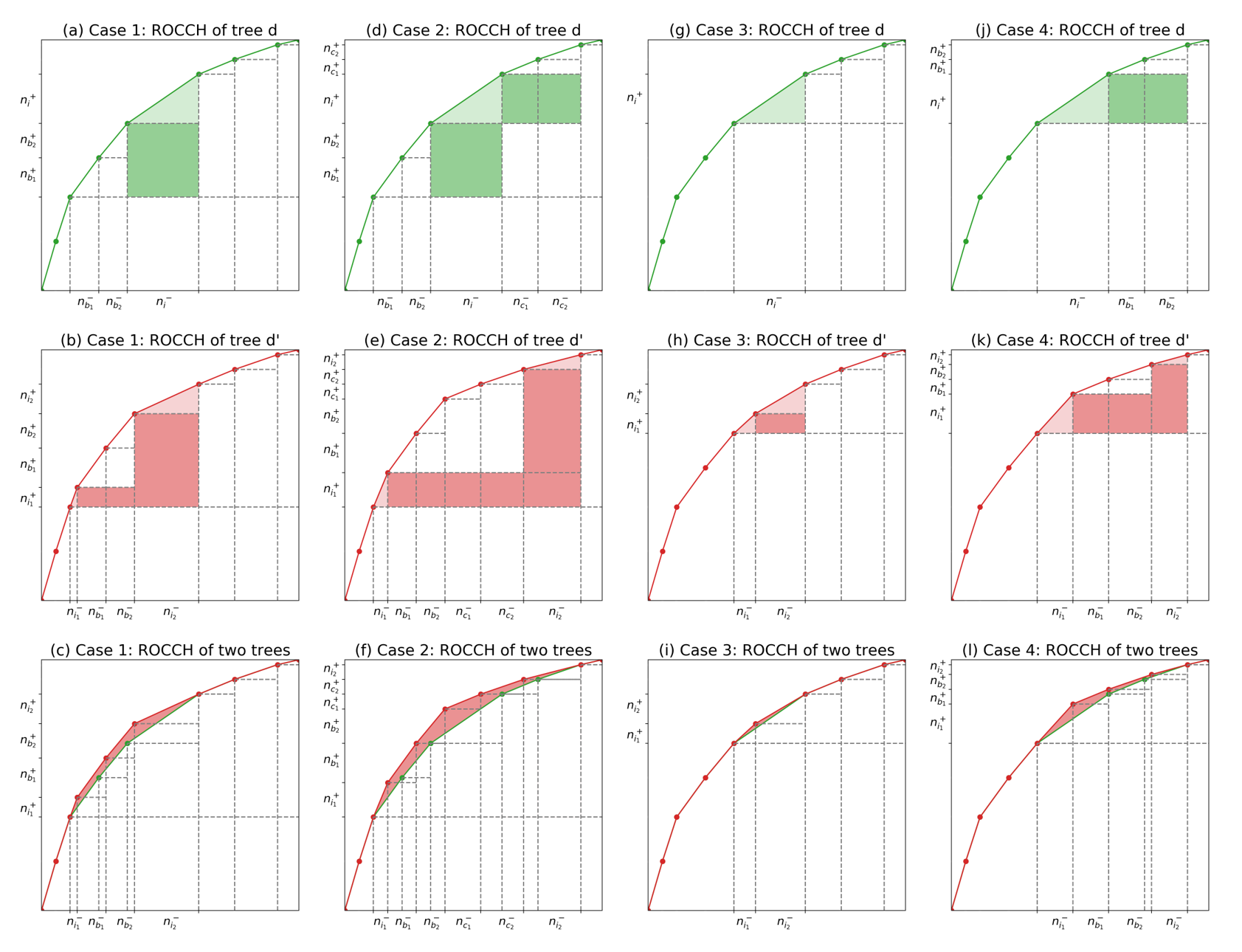}
    \caption{Four different cases of changing the rank orders after introducing a child. Each column represents a case. Subplots in the first row show the ROCCH of tree $d$ and subplots in the second row indicate the ROCCH of tree $d'$ corresponding to the different positions of two child leaves. Subplots in the third row present the change of $\AUC_{\ch}$ after introducing the child leaves.}
    \label{fig:c1}
\end{figure}
Figure \ref{fig:pos} shows four cases of the positions of $l_{i_1}$ and $l_{i_2}$. 

Let us go through these cases in more detail.

For the new leaf set after splitting $l_i$, namely $(l_1,...,l_{i-1}, l_{i+1},...,l_{H_d}, l_{i_1}, l_{i_2})$, we have that:
\begin{enumerate}[leftmargin=*]
    \item $l_{i_1}$ has rank smaller than $\rm pos$ (which requires $r_{i_1}>r_{i}$) and $l_{i_2}$ has rank $\rm pos+1$ (which requires $r_{i} \geq r_{i_2}$).
    
    Let $A = \{l_{a_1}, l_{a_2},...l_{a_U}\}$ be a collection of leaves ranked before $l_{i_1}$ and let $B = \{l_{b_1}, l_{b_2},...l_{b_V}\}$ be a collection of leaves ranked after $l_{i_1}$ but before $\rm pos+1$. In this case, recalling Equation (\ref{eq:auc_loss}), a change in the $\AUC_{\ch}$ after splitting on leaf $i$ is due only to a subset of leaves, namely $l_{i_1}, l_{b_1}, ..., l_{b_V}, l_{i_2}$. Then we can compute the change in the $\AUC_{\ch}$ as follows: 
    \begin{equation}\label{eq:c1_1}
        \Delta_{\AUC_{\ch}} = \frac{1}{N^+N^-}\bigg( \frac{n_{i_1}^-n_{i_1}^+}{2} +
        \bigg(\sum_{v=1}^V n_{b_v}^-\bigg) n_{i_1}^+  
        +n_{i_2}^-\bigg[n_{i_1}^++\bigg(\sum_{v=1}^V n_{b_v}^+\bigg) +\frac{n_{i_2}^+}{2}\bigg]
        - n_i^-\bigg[\bigg(\sum_{v=1}^V n_{b_v}^+\bigg) +\frac{n_{i}^+}{2}\bigg]
        \bigg)
    \end{equation}
To derive the expression for $\Delta_{\AUC_{\ch}}$, we first sum shaded areas of rectangles and triangles under the ROC curves' convex hull for both tree $d$ and its child tree $d'$, and then calculate the difference between the two shaded areas, as indicated in Figure \ref{fig:c1} (a-c). This figure shows where each of the terms arises within the $\Delta_{\AUC_{\ch}}$: terms $n_{b_v}^-n_{i_1}^+$ and $n_{i_2}^-[n_{i_1}^++(\sum_{v=1}^V n_{b_v}^+)]$ come from the area of rectangles colored in dark pink in Figure \ref{fig:c1} (b). Terms $\frac{n_{i_1}^-n_{i_1}^+}{2}$ and $\frac{n_{i_2}^-n_{i_2}^+}{2}$ handle the top triangles colored in light pink. Term $n_i^-(\sum_{v=1}^V n_{b_v}^+)$ represents the rectangles colored in dark green in Figure \ref{fig:c1} (a) and term $\frac{n_i^-n_{i}^+}{2}$ deals with the triangles colored in light green. Subtracting green shaded areas from red shaded areas, we get $\Delta_{\AUC_{\ch}}$, which is represented by the (remaining) pink area in Figure \ref{fig:c1} (c).  
    
Simplifying Equation (\ref{eq:c1_1}), we get \begin{equation}
        \Delta_{\AUC_{\ch}} = \frac{1}{N^+N^-}\bigg(n_{i_1}^+ \bigg(\sum_{v=1}^V n_{b_v}^- \bigg) +n_{i_2}^- \bigg(\sum_{v=1}^V n_{b_v}^+ \bigg)- n_i^- \bigg(\sum_{v=1}^V n_{b_v}^+\bigg)  + \frac{n_{i_1}^- n_{i_1}^+ + 2 n_{i_2}^- n_{i_1}^+ + n_{i_2}^-n_{i_2}^+ - n_i^-n_i^+}{2}
        \bigg). \end{equation}
Recall that $n_i^- = n_{i_1}^-+n_{i_2}^-$ and $n_i^+ = n_{i_1}^+ + n_{i_2}^+$. Then, simplifying,         
        \begin{equation}
        \Delta_{\AUC_{\ch}}
        = \frac{1}{N^+N^-}\bigg(n_{i_1}^+\bigg(\sum_{v=1}^V n_{b_v}^-\bigg)-n_{i_1}^-\bigg(\sum_{v=1}^V n_{b_v}^+\bigg) + \frac{n_{i_1}^+n_{i_2}^- - n_{i_1}^-n_{i_2}^+}{2} 
        \bigg).
    \end{equation}
    Since $r_{i_1} > r_{b_1} > r_{b_2} > ... > r_{b_V}$, $\forall v \in \{1, 2, ..., V\}, \frac{n_{i_1}^+}{n_{i_1}^+ + n_{i_1}^-} > \frac{n_{b_v}^+}{n_{b_v}^++n_{b_v}^-}$. Then we can get $n^+_{i_1}n^-_{b_v} > n^+_{b_v}n^-_{i_1}$. Hence $n^+_{i_1}(\sum_{v=1}^V n_{b_v}^-)-n_{i_1}^-(\sum_{v=1}^V n_{b_v}^+)> 0$. Similarly, because $r_{i_1}>r_{i_2}$, $\frac{n_{i_1}^+}{n_{i_1}^++n_{i_1}^-} > \frac{n_{i_2}^+}{n_{i_2}^++n_{i_2}^-}$. Then $n_{i_1}^+n_{i_2}^- > n_{i_1}^-n_{i_2}^+$. Therefore, $\Delta_{\AUC_{\ch}} > 0$.

    \item $l_{i_1}$ has a ranking smaller than $\rm pos$ (which requires $r_{i_1} > r_i$) and $l_{i_2}$ has a ranking larger than $\rm pos+1$ (which requires $r_i > r_{i_2}$).
    
    Let $A = \{l_{a_1},...l_{a_U}\}$ be a collection of leaves that ranked before $l_{i_1}$ and $B = \{l_{b_1},...l_{b_V}\}$ be a collection of leaves that ranked after $l_{i_1}$ but before $\rm pos+1$, and $C=\{l_{c_1}, ..., l_{c_W}\}$ be a collection of leaves that ranked after $\rm pos+1$ but before the rank of $l_{i_2}$. In this case, the change is caused by $l_{i_1}, l_{b_1}, ..., l_{b_V}, l_{c_1}, ..., l_{c_W}, l_{i_2}$. Then we can compute the change in the $\AUC_{\ch}$ as follows: 
\begin{equation}\label{eq:c1_2}
    \begin{aligned}
    \Delta_{\AUC_{\ch}} &= \frac{1}{N^+N^-} \bigg( \frac{n_{i_1}^-n_{i_1}^+}{2} 
    +\bigg(\sum_{v=1}^V n_{b_v}^-\bigg)n_{i_1}^+ 
    +\bigg(\sum_{w=1}^W n_{c_w}^-\bigg)n_{i_1}^+
    +n_{i_2}^-\bigg[n_{i_1}^+ + \bigg(\sum_{v=1}^{V}n_{b_v}^+\bigg)+ \bigg(\sum_{w=1}^{W}n_{c_w}^+\bigg) +\frac{n_{i_2}^+}{2} \bigg]\\
    &\ \ \ \ - n_i^-\bigg[\bigg(\sum_{v=1}^{V}n_{b_v}^+\bigg) +\frac{n_i^+}{2} \bigg] - \bigg(\sum_{w=1}^W n_{c_w}^-\bigg)n_i^+\bigg).
    \end{aligned}
\end{equation}
Similar to the derivation proposed in case 1, we first sum shaded areas of rectangles and triangles under the ROC curves' convex hull for both tree $d$ and its child tree $d'$ and then calculate the difference between two shaded areas as indicated in Figure \ref{fig:c1} (d-f). These three subfigures show where each of the terms arises within the $\Delta_{\AUC_{\ch}}$: terms $n_{b_v}^-n_{i_1}^+$, $n_{c_w}^-n_{i_1}^+$, and $n_{i_2}^-[n_{i_1}^++(\sum_{v=1}^V n_{b_v}^+)+(\sum_{w=1}^W n_{c_w}^+)]$ come from the area of rectangles colored in dark pink in Figure \ref{fig:c1} (e). Terms $\frac{n_{i_1}^-n_{i_1}^+}{2}$ and $\frac{n_{i_2}^-n_{i_2}^+}{2}$ handle the top triangles colored in light pink. Terms $n_i^-n_{b_v}^+$ and $n_i^+n_{c_w}^-$ represent the rectangles colored in dark green in Figure \ref{fig:c1} (d) and term $\frac{n_i^-n_{i}^+}{2}$ deals with the triangles colored in light green. Subtracting green shaded areas from red shaded areas, we can get $\Delta_{\AUC_{\ch}}$, which is represented by the light red area in Figure \ref{fig:c1} (f). 

Recall that $n_i^- = n_{i_1}^-+n_{i_2}^-$ and $n_i^+ = n_{i_1}^+ + n_{i_2}^+$. Then, simplifying Equation (\ref{eq:c1_2}), we get 
    \begin{equation}
    \begin{aligned}
        \Delta_{\AUC_{\ch}} &= \frac{1}{N^+N^-}\bigg(\bigg(\sum_{v=1}^V n_{b_v}^-\bigg)n_{i_1}^+ 
    +\bigg(\sum_{w=1}^W n_{c_w}^-\bigg)n_{i_1}^+ +  \bigg(\sum_{v=1}^{V}n_{b_v}^+\bigg)n_{i_2}^-+  \bigg(\sum_{w=1}^{W}n_{c_w}^+\bigg)n_{i_2}^- -  \bigg(\sum_{v=1}^{V}n_{b_v}^+\bigg)n_i^-\\
    &\ \ \ \ -\bigg(\sum_{w=1}^W n_{c_w}^-\bigg)n_i^+  +\frac{n_{i_1}^-n_{i_1}^+ + 2n_{i_2}^- n_{i_1}^++n_{i_2}^-n_{i_2}^+ - n_{i}^-n_{i}^+}{2}\bigg)\\
    & = \frac{1}{N^+N^-}\bigg( \bigg(\sum_{v=1}^{V}n_{b_v}^-\bigg)n_{i_1}^+ - \bigg(\sum_{v=1}^{V}n_{b_v}^+\bigg) n_{i_1}^- +  \bigg(\sum_{w=1}^{W}n_{c_w}^+\bigg)n_{i_2}^- -\bigg(\sum_{w=1}^{W}n_{c_w}^-\bigg)n_{i_2}^+ + \frac{n_{i_2}^-n_{i_1}^+ - n_{i_1}^-n_{i_2}^+}{2}
        \bigg).
    \end{aligned}
    \end{equation}
    Since $r_{i_1} > r_{b_1}>...>r_{b_V}$, $\forall v \in \{1, ..., V\}$, $\frac{n_{i_1}^+}{n_{i_1}^++n_{i_1}^-} > \frac{n_{b_v}^+}{n_{b_v}^+n_{b_v}^-}$. Then we get $n^+_{i_1}n^-_{b_v} > n^+_{b_v}n^-_{i_1}$. Thus, $(\sum_{v=1}^{V}n_{b_v}^-)n_{i_1}^+ - (\sum_{v=1}^{V}n_{b_v}^+) n_{i_1}^->0$. Similarly, since $r_{c_1} >...>r_{c_W} > r_{i_2}$, $\forall w \in \{1, ..., W\}, n^+_{c_w}n^-_{i_2} > n^+_{i_2}n^-_{c_w}$. Thus, $(\sum_{w=1}^{W}n_{c_w}^+)n_{i_2}^- -(\sum_{w=1}^{W}n_{c_w}^-)n_{i_2}^+ >0$. Moreover, because $r_{i_1}>r_{i_2}$, $n_{i_2}^-n_{i_1}^+ > n_{i_1}^-n_{i_2}^+$.
    Hence, $\Delta_{\AUC_{\ch}} > 0$.
    
    \item $l_{i_1}$ has a ranking same as $\rm pos$ (which requires $r_{i_1} \leq r_i$) and $l_{i_2}$ has a ranking $\rm pos+1$ (which requires $r_i \leq r_{i_2}$). 
    
    In this case, the change of $\AUC_{\ch}$ is caused by $l_{i_1}$ and $l_{i_2}$. Then we compute the change in the $\AUC_{\ch}$ as follows:  
    \begin{equation}\label{eq:c1_3}
        \Delta_{\AUC_{\ch}} = \frac{1}{N^+N^-} \bigg(\frac{n_{i_1}^-n_{i_1}^+}{2} + n_{i_2}^-n_{i_1}^+ + \frac{n_{i_2}^-n_{i_2}^+}{2} - \frac{n_i^-n_i^+}{2}\bigg).
    \end{equation}
We derive the expression in the similar way as case 1 and case 2. Term $n_{i_2}^-n_{i_1}^+$ comes from the area of rectangle colored in dark pink in Figure \ref{fig:c1} (h) and terms $\frac{n_{i_1}^-n_{i_1}^+}{2}$ and $\frac{n_{i_2}^-n_{i_2}^+}{2}$ handle the top triangles colored in light pink. Term $\frac{n_{i}^-n_{i}^+}{2}$ deals with the top triangle colored in light green in Figure \ref{fig:c1} (g). Subtracting green shaded areas from pink shaded areas, we get $\Delta_{\AUC_{\ch}}$, which is represented by the (remaining) pink area in Figure \ref{fig:c1} (i). 

Recall $n_i^- = n_{i_1}^- + n_{i_2}^-$ and $n_i^+ = n_{i_1}^++n_{i_2}^+$. Simplifying Equation (\ref{eq:c1_3}), we get 
\begin{equation}
    \Delta_{\AUC_{\ch}} = \frac{1}{N^+N^-}\bigg(\frac{n^+_{i_1}n^-_{i_2}-n^+_{i_2}n^-_{i_1}}{2} \bigg)
\end{equation}
    Since $r_{i_1} > r_{i_2}$, $n^+_{i_1}n^-_{i_2}-n^+_{i_2}n^-_{i_1}$. Therefore, $\Delta_{\AUC_{\ch}} >0$. 
    
    \item $l_{i_1}$ has a ranking same as $\rm pos$ (which requires $r_{i_1}\leq r_i$) and $l_{i_2}$ has a ranking larger than $\rm pos+1$ (which requires $r_i > r_{i_2}$). 
    
    Let $A = \{l_{a_1},...,l_{a_U}\}$ be a collection of leaves that ranked before $l_{i_1}$ and $B = \{l_{b_1}, ...,l_{b_V}\}$ be a collection of leaves that ranked after $l_{i_1}$ but before $l_{i_2}$. In this case the change of $\AUC_{\ch}$ is caused by $l_{i_1}, l_{b_1}, ..., l_{b_V}, l_{i_2}$. Then we can compute the change as follows: 
    \begin{equation}\label{eq:c1_4}
        \Delta_{\AUC_{\ch}} = \frac{1}{N^+N^-} \bigg(\frac{n_{i_1}^-n_{i_1}^+}{2} + \bigg(\sum_{v=1}^Vn_{b_v}^-\bigg)n_{i_1}^+ + n_{i_2}^-\bigg[n_{i_1}^+ + \bigg(\sum_{v=1}^Vn_{b_v}^+\bigg) + \frac{n_{i_2}^+}{2}\bigg] - \frac{n_i^-n_i^+}{2} -\bigg(\sum_{v=1}^Vn_{b_v}^-\bigg)n_i^+ \bigg)
    \end{equation}
The Figure \ref{fig:c1} (j-l) show where each of the terms arises within the $\Delta_{\AUC_{\ch}}$: terms $n_{b_v}^-n_{i_1}^+$ and $n_{i_2}^-[n_{i_1}^+ + (\sum_{v=1}^V n_{b_v}^+)]$ come from the area of rectangles colored in dark pink in Figure \ref{fig:c1} (k) and terms $\frac{n_{i_1}^-n_{i_1}^+}{2}$ and $\frac{n_{i_2}^-n_{i_2}^+}{2}$ handle triangles colored in light pink. Term $n_{b_v}^-n_i^+$ represents the rectangle colored in dark green in Figure \ref{fig:c1} (j) and term $\frac{n_i^-n_i^+}{2}$ deals with the triangle colored in light green. Subtracting green shaded areas from pink shaded areas, we get $\Delta_{\AUC_{\ch}}$, which is represented by the (remaining) pink area in Figure \ref{fig:c1} (l). 

Recall $n_i^- = n_{i_1}^- + n_{i_2}^-$ and $n_i^+ = n_{i_1}^++n_{i_2}^+$. Simplifying Equation (\ref{eq:c1_4}), we get 
\begin{equation}
\begin{aligned}
    \Delta_{\AUC_{\ch}} &= \frac{1}{N^+N^-} \bigg( \bigg(\sum_{v=1}^Vn_{b_v}^-\bigg)n_{i_1}^++ \bigg(\sum_{v=1}^Vn_{b_v}^+\bigg) n_{i_2}^- - \bigg(\sum_{v=1}^Vn_{b_v}^-\bigg)n_i^+
    +\frac{n_{i_1}^-n_{i_1}^+ + 2n_{i_2}^-n_{i_1}^+ + n_{i_2}^-n_{i_2}^+ - n_i^-n_i^+}{2}
    \bigg)\\
    &=\frac{1}{N^+N^-}\bigg( \bigg(\sum_{v=1}^Vn_{b_v}^+\bigg) n_{i_2}^- - \bigg(\sum_{v=1}^Vn_{b_v}^-\bigg)n_{i_2}^+ + \frac{n_{i_2}^-n_{i_1}^+ - n_{i_1}^-n_{i_2}^+}{2}
    \bigg)
\end{aligned}
\end{equation}
Since $r_{b_1}>...>r_{b_v} > r_{i_2}$, $\forall v \in \{1, ..., V\}, n^+_{b_v}n^-_{i_2} > n^+_{i_2}n^-_{b_v}$. Thus, $(\sum_{v=1}^Vn_{b_v}^+) n_{i_2}^- > (\sum_{v=1}^Vn_{b_v}^-)n_{i_2}^+$. Since $r_{i_1} > r_{i_2}$, $n^+_{i_1}n^-_{i_2}-n^+_{i_2}n^-_{i_1}$. Therefore, $\Delta_{\AUC_{\ch}} >0$.
\end{enumerate}
Therefore, once an impure leaf is split, the $\AUC_{\ch}$ doesn't decrease. If the split is leading to the change of the rank order of leaves, then $\AUC_{\ch}$ increases.  
\end{proof}

\subsection{Proof of Theorem \ref{thm: lb_auc}}
\begin{proof}
Let tree $d = (d_{\fix}, r_{\fix}, d_{\splitrm}, r_{\splitrm}, K, H_d)$ and leaves of tree $d$ are mutually exclusive. According to Lemma \ref{lemma:auc}, for leaves that can be further split, we can do 
splits to increase the $\AUC_{\ch}$.
In the best possible hypothetical case, all new generated leaves are pure (contain only positive or negative samples). In this case, $\AUC_{\ch}$ is maximized for $d_{\splitrm}$. In this case, we will show that $b(d_{\fix},\x,\y) =1-\frac{1}{N^+N^-}\bigg (\sum\limits_{i=1}^K n_i^-\bigg[N_{\splitrm}^+ + \bigg(\sum\limits_{j=1}^{i-1} n_j^+\bigg) +\frac{1}{2}n_i^+\bigg]+N^+N_{\splitrm}^- \bigg) + \lambda H_d$ as defined by Equation (\ref{eq:auc_lb}). Then $b(d_{\fix},\x,\y) \leq R(d,\x,\y)$. 

To derive the expression for $b(d_{\fix},\x,\y)$, we sum areas of rectangles and triangles under the ROC curve's convex hull. Figure \ref{fig:aucchderive} shows where each of the terms arises within this sum: the first term in the sum, which is $n_i^-N_{\splitrm}^+$, comes from the area of the lower rectangle of the ROC curve's convex hull, colored in green. This rectangle arises from the block of $N_{\splitrm}^+$ positives at the top of the ranked list (within the split leaves, whose hypothetical predictions are 1). The $n_i^-\bigg(\sum\limits_{j=1}^{i-1} n_j^+\bigg)$ term handles the areas of the growing rectangles, colored in blue in Figure \ref{fig:aucchderive}. The areas of the triangles comprising the top of the ROCCH account for the third term $\frac{1}{2}n_i^-n_i^+$. The final term $N^+N_{\splitrm}^-$ comes from the rectangle on the right, colored red, stemming from the split observations within a hypothetical purely negative leaf.

\begin{figure}[]
    \centering
    \includegraphics[scale=0.3]{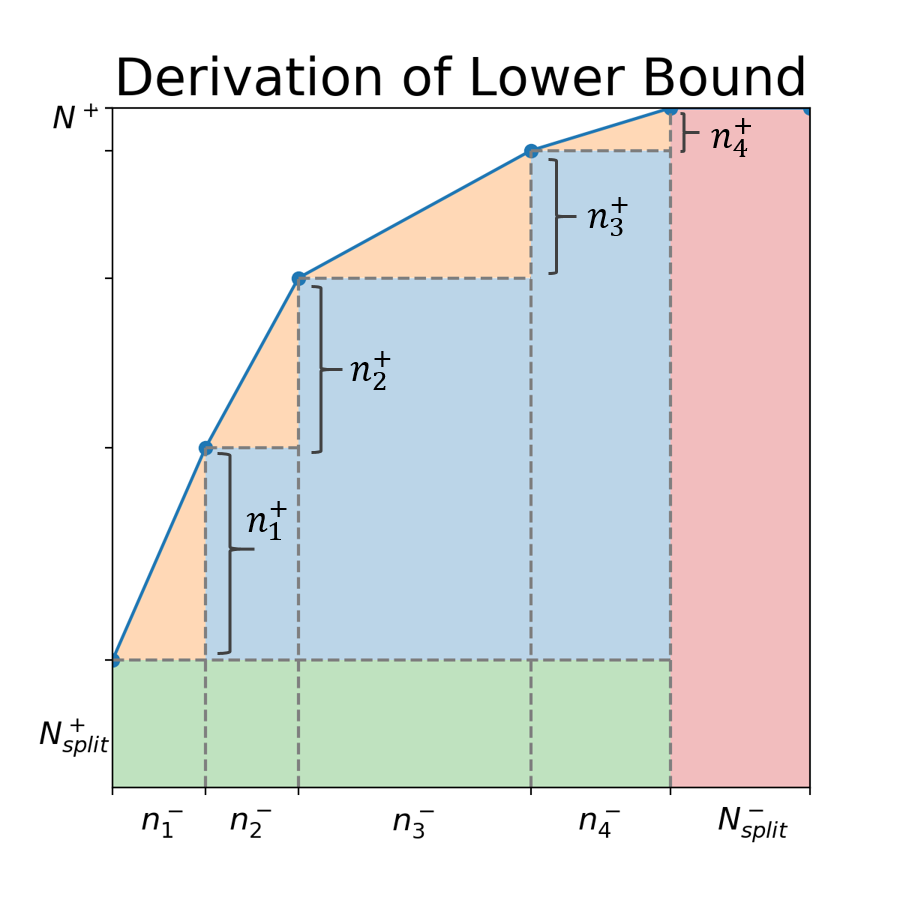}
    \caption{Example derivation of the hierarchical objective lower bound for rank statistics. $N^+$ at top left is the number of positive samples in the dataset. $N^+_{\splitrm}$ at bottom left is the number of positive samples ($N^-_{\splitrm}$ is the number of negative samples) captured by leaves that can be further split.}
    \label{fig:aucchderive}
\end{figure}

\end{proof}

\subsection{Proof of Theorem \ref{thm: hierarchy_auc}}
\begin{proof}
According to Theorem \ref{thm: lb_auc}, $R(d',\x,\y) \geqslant b(d'_{\fix},\x,\y)$ and $R(d,\x,\y) \geqslant b(d_{\fix},\x, \y)$. Since $d_{\fix} \subseteq d'_{\fix}$, leaves in $d_{\splitrm}$ but not in $d'_{\splitrm}$ can be further split to generate pure leaves for tree $d$ but fixed for tree $d'$. Based on Lemma \ref{lemma:auc} and $H_d \leqslant H_{d'}$, $b(d_{\fix},\x,\y) \leqslant b(d'_{\fix},\x,\y)$. Therefore, $b(d_{\fix},\x,\y) \leqslant R(d',\x,\y)$. 
\end{proof}

\subsection{Incremental Progress Bound to Determine Splitting for Rank Statistics}
\begin{theorem}\label{thm:leaf_supp_rank}(Incremental progress bound to determine splitting for rank statistics)
Let $d^* = (d_{\fix}, r_{\fix}, d_{\splitrm}, r_{\splitrm}, K, H_d)$ be any optimal tree with objective $R^*$, i.e., $d^* \in \argmin_d R(d,\x,\y)$. Consider tree $d'$ derived from $d^*$ by deleting a pair of leaves $l_i$ and $l_{i+1}$ and adding their parent leaf $l_j$, $d'=(l_1,...,l_{i-1}, l_{i+2},...,l_{H_d}, l_j)$. Let $n^+_j$ be the number of positive samples ($n^-_j$ be the number of negative samples) in leaf $j$. Calculate $\ell(d',\x,\y)$ as in Equation (\ref{eq:auc_loss}). Define $d'_{-j}$ as the tree $d'$ after dropping leaf $j$, adding two hypothetical pure leaves (i.e., one has all positives and the other has all negatives), and reordering the remaining $H_d-2$ leaves based on the fraction of positives. 
Then, we can calculate the loss of the tree $d'_{-j}$ as
\begin{equation}
    \ell(d'_{-j},\x,\y) = 1-\frac{1}{N^+N^-}\bigg(\sum_{i=1}^{H_d-2} n_i^-\bigg[n_j^+ + \bigg(\sum_{v=1}^{i-1} n_v^+\bigg)+\frac{1}{2}n_i^+\bigg]+N^+n_j^-\bigg).
\end{equation}
Let $\tau := \ell(d',\x,\y)-\ell(d'_{-j},\x,\y)$. Then $\tau$ must be at least $\lambda$. 
\end{theorem}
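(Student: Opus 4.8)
The plan is to follow the structure of the proof of Theorem~\ref{thm:leaf_supp}, replacing the monotonicity argument used there by Lemma~\ref{lemma:auc}. Two things must be shown: the closed form for $\ell(d'_{-j},\x,\y)$, and the bound $\tau\geq\lambda$.

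First I would obtain the formula for $\ell(d'_{-j},\x,\y)$ by substituting the rank order of the leaves of $d'_{-j}$ into Equation~(\ref{eq:auc_loss}). By construction, $d'_{-j}$ consists of a pure-positive leaf carrying the $n_j^+$ positives of $l_j$ (positive fraction $1$, hence ranked first), the $H_d-2$ leaves of $d'$ other than $l_j$ reordered so that $r_1\geq\cdots\geq r_{H_d-2}$, and a pure-negative leaf carrying the $n_j^-$ negatives of $l_j$ (positive fraction $0$, hence ranked last). Plugging these into Equation~(\ref{eq:auc_loss}): the leading pure-positive leaf contributes nothing because it has no negatives; the $i$-th middle leaf contributes $n_i^-\big[\,n_j^+ + \sum_{v=1}^{i-1}n_v^+ + \tfrac12 n_i^+\,\big]$, where the $n_j^+$ term records the positives in the leading pure leaf; and the trailing pure-negative leaf has all positives ranked before it, and there are $n_j^+ + \sum_{i=1}^{H_d-2}n_i^+ = N^+$ of them (every positive of $d^*$ lies in $l_j$ or in one of the other $H_d-2$ leaves of $d'$), so it contributes $n_j^-N^+$. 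Collecting the terms reproduces the stated expression; this part is routine bookkeeping.

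Next I would establish $\ell(d'_{-j},\x,\y)\leq\ell(d^*,\x,\y)$. Starting from $d^*$, whose sibling leaves $l_i$ and $l_{i+1}$ jointly capture the samples of $l_j$, I would (hypothetically) split every impure descendant of $l_i$ and $l_{i+1}$ until all the resulting leaves are pure; by Lemma~\ref{lemma:auc} no such split decreases $\AUC_{\ch}$, so the loss never increases. I would then merge all the pure-positive leaves thereby created into a single leaf (with $n_j^+$ positives) and all the pure-negative ones into a single leaf (with $n_j^-$ negatives); since merging leaves of equal positive fraction introduces no new vertex of the ROC convex hull, $\AUC_{\ch}$, and hence the loss, is unchanged. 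The leaf set obtained is precisely that of $d'_{-j}$, so $\ell(d'_{-j},\x,\y)\leq\ell(d^*,\x,\y)$.

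Finally I would invoke optimality of $d^*$: since $d'$ has $H_d-1$ leaves while $d^*$ has $H_d$, the inequality $R(d',\x,\y)\geq R(d^*,\x,\y)$ expands to $\ell(d',\x,\y)+\lambda(H_d-1)\geq\ell(d^*,\x,\y)+\lambda H_d$, that is, $\ell(d',\x,\y)-\ell(d^*,\x,\y)\geq\lambda$; combined with the previous step, $\tau=\ell(d',\x,\y)-\ell(d'_{-j},\x,\y)\geq\ell(d',\x,\y)-\ell(d^*,\x,\y)\geq\lambda$. The main obstacle is the middle step: arguing rigorously that the ``pure-positive leaf at the top, pure-negative leaf at the bottom'' configuration is a best case for the samples of $l_j$ under $\AUC_{\ch}$, which is exactly where Lemma~\ref{lemma:auc} together with the invariance of the ROC convex hull under merging equal-ratio leaves is needed. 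The derivation of the closed form (first step) and the optimality argument (last step) are straightforward.
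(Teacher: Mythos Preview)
Your proposal is correct and follows essentially the same route as the paper's proof: establish $\ell(d'_{-j},\x,\y)\leq\ell(d^*,\x,\y)$ and then use optimality of $d^*$ exactly as in Theorem~\ref{thm:leaf_supp}. The paper is terser---it simply points to the counting argument behind Theorem~\ref{thm: lb_auc} to justify that $d'_{-j}$ realizes the best possible $\AUC_{\ch}$ given the fixed leaves---whereas you spell out the closed-form bookkeeping and make the role of Lemma~\ref{lemma:auc} (plus the harmless merging of equal-ratio pure leaves) explicit; the added detail is sound and the final optimality step is identical.
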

\begin{proof}
By the way $\ell(d'_{-j},\x,\y)$ is defined (using the same counting argument we used in the proof of Theorem \ref{thm: lb_auc}), it is a lower bound for $\ell(d^*,\x,\y)$. These two quantities are equal when the split leaves are all pure. So, we have $\ell(d'_{-j},\x,\y) \leq \ell(d^*,\x,\y)$.
Since  $\ell(d',\x,\y) - \ell(d^*,\x,\y) \leqslant \ell(d',\x,\y)-\ell(d'_{-j},\x,\y) = \tau$, we can get $\ell(d',\x,\y) + \lambda (H_d-1) \leqslant \ell(d^*,\x,\y) + \lambda (H_d-1) + \tau$, that is (and remember that $d^*$ is of size $H_d$ whereas $d'$ is of size $H_{d}-1$), $R(d',\x,\y) \leq R(d^*,\x,\y)-\lambda +\tau$. Since $d^*$ is optimal with respect to $R$, then $0 \leqslant R(d',\x,\y) - R(d^*,\x,\y) \leqslant -\lambda + \tau$, thus $\tau \geqslant \lambda$.
\end{proof}
Similar to the Incremental Progress Bound to Determine Splitting for arbitrary monotonic losses, for a tree $d$, if any of its internal node contributes less than $\lambda$ in loss, it is not the optimal tree. Thus, after evaluating tree $d$, we can prune it. 

\subsection{Equivalent points bound for rank statistics}
Similar to the equivalent points bound for arbitrary monotonic losses, for a tree $d=(d_{\fix}, r_{\fix}, d_{\splitrm}, r_{\splitrm}, K, H_d)$, the objective of this tree and its children is minimized when leaves that can be further split to generate pure leaves. In the case when it is possible to split the data into pure leaves, the risk could be equal to $b(d_{\fix},\x,\y)$. However, if multiple observations captured by a leaf in $d_{\splitrm}$ have the same features but different labels, then no tree, including those that extend $d_{\splitrm}$, can correctly classify all of these observations; that is, leaves in $d_{\splitrm}$ can not generate pure leaves. In this case, we can leverage these equivalent points to give a tighter lower bound for the objective. We use the same notation for capture and set of equivalent points as in Appendix \ref{App:Objectives}, and minority class label is simply the
label with fewer samples. 

Let $d=(d_{\fix}, r_{\fix}, d_{\splitrm}, r_{\splitrm}, K, H_d)$ be a tree. Data in leaves from $d_{\splitrm}$ can be separated into $U$ equivalence classes. 
For $u=1,...,U$, let $N_{u} = \sum_{i=1}^N \cap(x_i, d_{\splitrm}) \wedge \mathds{1}[x_i \in e_u]$, that is, the number of samples captured by $d_\splitrm$ belonging to equivalence set $u$, and let $\delta_{u} = \sum_{i=1}^N \cap(x_i, d_{\splitrm}) \wedge \mathds{1}[x_i \in e_u] \mathds{1}[y_i = q_u]$ be the number of minority-labeled samples captured by $d_{\splitrm}$ in equivalence point set $u$. Then we define $\tilde{r}_u$ as the classification rule we would make on each equivalence class separately (if we were permitted):
\[\tilde{r}_u = \left\{ \begin{array}{rcl}
\frac{\delta_u}{N_u} & \mbox{if}
& \delta_u \geq 0 \text{ and } q_u = 1 \\ \frac{N_u-\delta_u}{N_u} & \mbox{if} & \delta_u \geq 0 \text{ and } q_u = 0.
\end{array}\right.\]

Let us combine this with $d_{\fix}$ to get a bound.
We order the combination of leaves in $d_{\fix}$ and equivalence classes $u \in \{1,...,U\}$ by the fraction of positives in each, Sort$(r_1,...,r_i,...,r_K, \tilde{r}_1,...,\tilde{r}_u,..., \tilde{r}_U)$ from highest to lowest. Let us re-index these sorted $K+U$ elements by index $\tilde{i}$.   
Denote $n^+_{\tilde{i}}$ as the number of positive samples in either the leaf or equivalence class corresponding to $\tilde{i}$. (Define
$n^-_i$ to be the number of negative samples analogously). We define our new, tighter, lower bound as: 
\begin{equation} \label{eq:lb_equiv}
    b(d_{\rm equiv},\x,\y) = 1-\frac{1}{N^+N^-}\sum_{{\tilde{i}}=1}^{K+U}n^-_{\tilde{i}}\bigg[\bigg(\sum_{j=1}^{{\tilde{i}}-1} n^+_{j}\bigg) +\frac{1}{2}n^+_{\tilde{i}}\bigg] + \lambda H_d.
\end{equation}
\begin{theorem}\label{thm:equiv_rank}(Equivalent points bound for rank statistics)
For a tree $d=(d_{\fix}, r_{\fix}, d_{\splitrm}, r_{\splitrm},K,H_d)$, 
Let tree $d'=(d'_{\fix}, r'_{\fix}, d'_{\splitrm}, r'_{\splitrm}, K', H_{d'}) \in \sigma(d)$ be any child tree such that its fixed leaves $d'_{\fix}$ contain $d_{\fix}$ and $H_{d'} \geqslant H_d$. Then $b(d_{\rm equiv},\x,\y) \leqslant R(d',\x,\y)$, where $b(d_{\rm equiv},\x,\y)$ is defined in Equation (\ref{eq:lb_equiv}). 
\end{theorem}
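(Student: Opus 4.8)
The plan is to run the same argument as in the proof of Theorem~\ref{thm: lb_auc}, but to replace the optimistic ``all split leaves become pure'' idealization by the finest refinement the data actually permits. Since $d' \in \sigma(d)$ with $d_{\fix} \subseteq d'_{\fix}$, the leaf set of $d'$ decomposes into the $K$ leaves of $d_{\fix}$ (present unchanged in $d'$) together with the leaves in $(d'_{\fix} \setminus d_{\fix}) \cup d'_{\splitrm}$, and this latter collection is a partition of exactly the samples captured by $d_{\splitrm}$. Because equivalent points (identical feature vectors) can never be separated by any Boolean assertion, every leaf in this collection is a union of entire equivalence sets $e_u$ restricted to $d_{\splitrm}$; hence the collection is a coarsening of the equivalence-class partition $\{e_u\cap d_{\splitrm}\}_{u=1}^U$.

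First I would introduce the idealized object $d_{\rm equiv}$ consisting of the $K$ leaves of $d_{\fix}$ together with $U$ ``leaves,'' one per equivalence set, the $u$-th having positive fraction $\tilde r_u$; by the definition of $\tilde r_u$ this is precisely the true positive fraction of equivalence set $u$ within $d_{\splitrm}$. Rewriting Equation~(\ref{eq:auc_loss}) by absorbing the factor of two, the AUC-loss of $d_{\rm equiv}$ — with its $K+U$ groups sorted by positive fraction exactly as in Equation~(\ref{eq:lb_equiv}) — equals $1-\frac{1}{N^+N^-}\sum_{\tilde i=1}^{K+U} n^-_{\tilde i}\big[\big(\sum_{j<\tilde i} n^+_j\big)+\tfrac12 n^+_{\tilde i}\big]$, i.e.\ the loss term of $b(d_{\rm equiv},\x,\y)$. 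Here I would check the normalization: the $K+U$ groups partition all $N$ samples, so their positive and negative counts sum to $N^+$ and $N^-$, matching the $\frac{1}{N^+N^-}$ in the AUC formula.

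Next I would show $\ell(d_{\rm equiv},\x,\y) \le \ell(d',\x,\y)$. Passing from $d'$ to $d_{\rm equiv}$ can be realized as a finite sequence of single-leaf binary splits: any leaf that is a union of two or more equivalence sets can be separated (the two sets differ on some feature), so iterating reaches the equivalence-class partition, leaving the leaves of $d_{\fix}$ untouched. By Lemma~\ref{lemma:auc}, each such split does not decrease $\AUC_{\ch}$; in the degenerate cases the value is simply unchanged — splitting a pure leaf keeps $n^-=0$ (or $n^+=0$) on both pieces and does not alter the positive counts accumulated below it, and a split producing two pieces of equal positive fraction is the tie case already handled inside the proof of Lemma~\ref{lemma:auc}. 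Telescoping over the sequence gives $\AUC_{\ch}(d_{\rm equiv}) \ge \AUC_{\ch}(d')$, hence $\ell(d_{\rm equiv},\x,\y) \le \ell(d',\x,\y)$. Finally, using the hypothesis $H_{d'} \ge H_d$ and $\lambda \ge 0$,
\[
R(d',\x,\y) = \ell(d',\x,\y) + \lambda H_{d'} \ \ge\ \ell(d_{\rm equiv},\x,\y) + \lambda H_d \ =\ b(d_{\rm equiv},\x,\y).
\]

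I expect the main obstacle to be the bookkeeping in the splitting step: verifying that the reduction of $d'$'s split portion to equivalence-class leaves is a legitimate chain of splits each covered by Lemma~\ref{lemma:auc} (including the degenerate pure-leaf and tied-fraction cases, which are slightly outside its literal statement and need a one-line direct check), and confirming that the sorted interleaving of the $K$ fixed leaves with the $U$ new leaves coincides with the ordering underlying Equation~(\ref{eq:lb_equiv}).
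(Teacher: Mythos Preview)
Your proposal is correct and aligns with the paper's own approach: the paper's proof of this theorem is a single sentence stating that it is ``similar to the proof of Theorem~\ref{thm: lb_auc} and Theorem~\ref{thm: hierarchy_auc},'' and your argument is precisely a careful fleshing-out of that hint, using Lemma~\ref{lemma:auc} as the engine to pass from $d'$ down to the finest attainable refinement $d_{\rm equiv}$, just as Theorem~\ref{thm: lb_auc} used it to pass to the (unattainable) fully-pure refinement. The bookkeeping points you flag (degenerate cases of Lemma~\ref{lemma:auc}, realizability of the refinement as a chain of feature splits, and matching the sorted order in Equation~(\ref{eq:lb_equiv})) are genuine but routine, exactly as you describe.
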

\begin{proof}
The proof is similar to the proof of Theorem \ref{thm: lb_auc} and Theorem \ref{thm: hierarchy_auc}.
\end{proof}



\subsection{Partial area under the ROCCH}
We discuss the partial area under the ROC convex hull in this section. The ROCCH for a decision tree is defined in Section \ref{Sec:NewObjectives}, where leaves are rank-ordered by the fraction of positives in the leaves. 
Given a threshold $\theta$, the partial area under the ROCCH ($\rm pAUC_{\ch}$) looks at only the leftmost part of the ROCCH, that is focusing on the top ranked leaves. This measure is important for  applications such as information retrieval and maintenance \citep[see, e.g., ][]{RudinWa18}. In our implementation, all bounds derived for the objective $\AUC_{\ch}$ can be adapted directly for $\rm p\AUC_{\ch}$, where all terms are calculated only for false positive rates smaller or equal to $\theta$. 

In our code, we implement all of the rank statistics bounds, with one exception for the partial AUCCH -- the equivalent points bound. We do not implement the equivalent points bound for partial AUCCH since the $\rm p\AUC_{\ch}$ statistic is heavily impacted by the leaves with high fraction of positives, which means that the leaves being repeatedly calculated for the objective tend not to be impure, and thus the equivalence points bound is less effective.

\section{Optimizing F-score with Decision Trees} \label{app:Fscore}
For a labeled tree $d = (d_{\fix}, \delta_{\fix}, d_{\splitrm}, \delta_{\splitrm}, K, H)$, the F-score loss is defined as
\begin{equation}\label{eq:f-score}
    \ell(d,\x,\y)= \frac{FP+FN}{2N^++FP-FN}. 
\end{equation}
For objectives like accuracy, balanced accuracy and weighted accuracy, the loss of a tree is the sum of loss in each leaf. For F-score loss, however, $FP$ and $FN$ appear in both numerator and denominator, thus the loss no longer can be calculated using a sum over the leaves. 
\begin{lemma}\label{lemma:f1_label}
The label of a single leaf depends on the labels of other leaves when optimizing F-score loss. 
\end{lemma}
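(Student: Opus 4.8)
The plan is to prove this by exhibiting an explicit counterexample: a small dataset and a leaf set in which the label that minimizes the F-score loss for one particular leaf changes when the label of a \emph{different} leaf is changed. This is in deliberate contrast with the additive objectives (accuracy, balanced accuracy, weighted accuracy), where the total loss is a sum of per-leaf contributions, so each leaf's optimal label is determined solely by that leaf's own counts $(n_i^+, n_i^-)$.

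First I would set up the comparison in general. Fix a leaf $l_1$ with $n_1^+ = a$ positives and $n_1^- = b$ negatives, and let $(FP_{-1}, FN_{-1})$ denote the false positives and false negatives contributed by all other leaves under some fixed labeling of them. Labeling $l_1$ positive yields total $(FP, FN) = (FP_{-1}+b,\, FN_{-1})$, while labeling it negative yields $(FP_{-1},\, FN_{-1}+a)$. Substituting into $\ell(d,\x,\y) = \frac{FP+FN}{2N^+ + FP - FN}$ and clearing the (always positive) denominators, the preference between the two labels reduces to a linear inequality in $(FP_{-1}, FN_{-1}, N^+)$ — so it manifestly depends on the other leaves through $FP_{-1}$ and $FN_{-1}$. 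To turn this observation into a proof I would pin down a second free leaf $l_2$ with counts $(c,d)$ whose label toggles $(FP_{-1}, FN_{-1})$ between $(d,0)$ and $(0,c)$, plus enough pure leaves (whose labels contribute nothing to $FP$ or $FN$) to set $N^+$ to an integer of my choosing. Working through the two resulting comparisons, the requirement becomes $b(a+c) < N^+(b-a) < a(b+d)$, which is feasible precisely when $ad > bc$ and $N^+$ can be chosen in the resulting open interval, whose length is $\frac{ad-bc}{b-a}$.

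Concretely I would take $l_1$ with $(n_1^+, n_1^-) = (1,2)$, $l_2$ with $(n_2^+, n_2^-) = (1,4)$, a pure-positive leaf with $3$ positives labeled positive, and a pure-negative leaf with some negatives labeled negative, giving $N^+ = 5$. A direct check then shows: when $l_2$ is labeled positive, labeling $l_1$ positive gives loss $\tfrac{6}{16}$ versus $\tfrac{5}{13}$ for negative, so positive is optimal for $l_1$; when $l_2$ is labeled negative, labeling $l_1$ negative gives loss $\tfrac{2}{8}$ versus $\tfrac{3}{11}$ for positive, so negative is optimal for $l_1$. Hence the loss-minimizing label of $l_1$ is positive or negative according to how $l_2$ is labeled, which is exactly the claim. (If one prefers a fully realizable dataset rather than an abstract leaf set, a handful of binary features suffices to carve out the four leaves.)

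The main obstacle is arithmetic bookkeeping rather than anything conceptual: because $N^+$ must be a positive integer, one cannot simply place it inside the target interval — one must enlarge the dataset with pure leaves to slide $N^+$ into range, and to guarantee that the open interval $\bigl(\tfrac{b(a+c)}{b-a},\, \tfrac{a(b+d)}{b-a}\bigr)$ contains an integer one needs $ad - bc > b-a$, which forces the specific small numbers above. Everything else is a routine evaluation of four instances of the F-score loss and two comparisons.
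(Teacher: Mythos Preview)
Your proof is correct, and it takes a somewhat different route from the paper's. The paper's argument is purely algebraic: it fixes the labels of the first $H_d-1$ leaves, writes $A = FP_{H_d-1}+FN_{H_d-1}$ and $B = 2N^+ + FP_{H_d-1} - FN_{H_d-1}$, computes the two candidate losses for the last leaf, and shows the sign of their difference is governed by $\Delta = 2n^+_{H_d}n^-_{H_d} + An^+_{H_d} + Bn^+_{H_d} + An^-_{H_d} - Bn^-_{H_d}$, which visibly involves $A$ and $B$. It then stops, concluding that the optimal label ``depends on $A$ and $B$.''

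You instead build an explicit four-leaf witness and verify the four loss values directly, so that the optimal label of $l_1$ provably flips with the label of $l_2$. This is arguably the stronger argument: the paper's derivation shows $\Delta$ \emph{contains} $A$ and $B$ but does not exhibit feasible values making $\Delta$ change sign, so a skeptic could object that the dependence might be only formal. Your construction closes that gap. On the other hand, the paper's algebraic form is what they actually use downstream (it motivates replacing exact F-score labeling by a thresholded rule $\omega n_i^+ \le n_i^-$), so their version is better suited to the narrative even if yours is the cleaner proof of the lemma as stated.
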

\begin{proof}
Let $l_1,...,l_{H_d}$ be the leaves of tree $d$. Suppose $H_d-1$ leaves are labeled. Let $FP_{H_d-1}$ and $FN_{H_d-1}$ be the number of false positives and false negatives of these $H_d-1$ leaves, respectively. Let $A=FP_{H_d-1}+FN_{H_d-1}$ and $B=2N^++FP_{H_d-1}-FN_{H_d-1}$, and by these definitions, we will always have $A \leq B$. Let $n^+_{H_d}$ be the number of positive samples in leaf $H_d$ and $n^-_{H_d}$ be the number of negative samples. The leaf's predicted label can be either positive or negative. The loss of the tree depends on this predicted label as follows: 
\begin{eqnarray}
    \text{If}\  \hat{y}^{(\leaf)}_{H_d} = 1, \textrm{there can be only false positives, thus} \ \  \label{eq:f_1} \ell(d,\x,\y) &=& \frac{A+n^-_{H_d}}{B+n^-_{H_d}}. \\
   \text{If} \ \hat{y}^{(\leaf)}_{H_d}=0, \textrm{there can be only false negatives, thus} \ \ \ell(d,\x,\y) &=& \frac{A+n^+_{H_d}}{B-n^+_{H_d}}. \label{eq:f_0}
\end{eqnarray}
Calculating loss (\ref{eq:f_0}) minus loss (\ref{eq:f_1}):  
\begin{equation} \label{eq:minus}
\frac{A+n^+_{H_d}}{B-n^+_{H_d}} - \frac{A+n^-_{H_d}}{B+n^-_{H_d}} = \frac{(A+n_{H_d}^+)\times(B+n_{H_d}^-) - (A+n_{H_d}^-)\times (B-n_{H_d}^+)}{(B-n^+_{H_d})(B+n^-_{H_d})}.
\end{equation}
Denote $\Delta$ as the numerator of (\ref{eq:minus}), that is 
$$(A+n_{H_d}^+)\times(B+n_{H_d}^-) - (A+n_{H_d}^-)\times (B-n_{H_d}^+).$$
Then we can get 
\begin{eqnarray}
\Delta&=&AB+An_{H_d}^- + Bn_{H_d}^+ + n_{H_d}^+n_{H_d}^- - AB+An_{H_d}^+ - Bn_{H_d}^- + n_{H_d}^-n_{H_d}^+ \\
&=& 2n^+_{H_d}n^-_{H_d}+An^+_{H_d}+Bn^+_{H_d}+An^-_{H_d}-Bn^-_{H_d}.
\end{eqnarray}
The value of $\Delta$ depends on $A$, $B$, $n_{H_d}^+$ and $n_{H_d}^-$. Hence, in order to minimize the loss, the predicted label of leaf $H_d$ is 0 if $\Delta \leq 0$ and 1 otherwise. Therefore, the predicted label of a single leaf depends on $A$ and $B$, which depend on the labels of the other samples, as well as the positive and negative samples captured by that leaf. 
\end{proof}

\begin{theorem}\label{thm:f1_label} (Optimizing F-score Poses a Unusual Challenge) Let $l_1,...,l_{H_d}$ be the leaves of tree $d$ and let $N^+$ be the number of positive samples in the dataset. Let $\Gamma_1$ and $\Gamma_2$ be two predicted labelings for the first $H_d-1$ leaves. Leaf $H_d$ has a fixed predicted label. Suppose the loss for the F-score (Equation (\ref{eq:f-score})) of the first $H_d-1$ leaves based on labeling method $\Gamma_1$  is smaller than the loss based on labeling method $\Gamma_2$ (where in both cases, leaf $H_d$ has the same predicted label). It is not guaranteed that the $\rm F_1$ loss of the tree $d$ based on the first labeling $\Gamma_1$ is always smaller than the loss based on the second labeling $\Gamma_2$. 

\end{theorem}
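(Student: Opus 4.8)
The plan is a proof by explicit construction: exhibit a dataset, a decision tree, a fixed label for leaf $H_d$, and two labelings $\Gamma_1,\Gamma_2$ of the first $H_d-1$ leaves such that $\Gamma_1$ beats $\Gamma_2$ on the partial tree but loses once leaf $H_d$ is appended. First I would reduce this to an arithmetic statement using Lemma \ref{lemma:f1_label}. Writing $A_k = FP_k+FN_k$ and $B_k = 2N^+ + FP_k - FN_k$ for the totals produced by $\Gamma_k$ over the first $H_d-1$ leaves, the partial-tree $\mathrm{F}_1$ loss under $\Gamma_k$ is $A_k/B_k$, with $0\le A_k\le B_k$. If leaf $H_d$ has fixed label $1$ and captures $c:=n^-_{H_d}\ge 1$ negatives (and no positives, so $N^+$ is unchanged), then by Equation (\ref{eq:f_1}) the loss of the full tree $d$ under $\Gamma_k$ equals $(A_k+c)/(B_k+c)$. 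So the theorem asserts exactly that $A_1/B_1 < A_2/B_2$ need not imply $(A_1+c)/(B_1+c) < (A_2+c)/(B_2+c)$.

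Clearing denominators, the second inequality reads $A_1B_2 - A_2B_1 < c\big[(A_2-A_1)+(B_1-B_2)\big]$. The left-hand side is negative by hypothesis, so the inequality can fail (for large enough $c$) precisely when the bracket is also negative, i.e.\ $B_2-B_1 > A_2-A_1$; substituting the definitions of $A_k,B_k$, this is just $FN_2 < FN_1$. Thus the mechanism is: pick $\Gamma_1,\Gamma_2$ with $\Gamma_2$ having strictly higher partial loss but strictly fewer false negatives, then take $c$ large enough --- any $c$ exceeding $(A_2B_1-A_1B_2)/\big[(B_2-B_1)-(A_2-A_1)\big]$ --- to reverse the ordering.

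For a concrete instance I would take $N^+=10$ and a three-leaf tree whose splittable leaves are $\ell_a$ (5 positives, 0 negatives) and $\ell_b$ (5 positives, 11 negatives), with leaf $H_d$ holding 0 positives and $c=2$ negatives, labeled $1$. Let $\Gamma_1$ label $\ell_a$ positive and $\ell_b$ negative, so $(FP_1,FN_1)=(0,5)$ and $A_1/B_1 = 5/15$; let $\Gamma_2$ label both positive, so $(FP_2,FN_2)=(11,0)$ and $A_2/B_2=11/31$. Then $5/15 < 11/31$ on the partial tree, while on the full tree the losses become $7/17$ and $13/33$ with $7/17 > 13/33$, reversing the order. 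Since the data are ours to choose, $\ell_a,\ell_b,H_d$ are straightforwardly realized as the leaves of an honest decision tree (e.g.\ a depth-two tree whose two binary features isolate the samples of $H_d$ and then separate $\ell_a$ from $\ell_b$), which finishes the argument. The one genuine subtlety --- and the part to be careful about --- is that $\Gamma_1$ and $\Gamma_2$ must label the \emph{same} leaf set, so one cannot simply prescribe $(A_1,B_1)$ and $(A_2,B_2)$ independently; the leaf counts must be arranged so that a single relabeling (here, flipping $\ell_b$) converts enough false negatives into false positives to make $FN_2 < FN_1$ while keeping $A_2/B_2 > A_1/B_1$.
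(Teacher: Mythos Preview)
Your argument is correct and follows the same algebraic route as the paper: introduce $A_k=FP_k+FN_k$, $B_k=2N^++FP_k-FN_k$, fix the label of leaf $H_d$ to be $1$, and study the sign of the numerator of $\ell^{(2)}-\ell^{(1)}$ after adding the last leaf's $FP_{H_d}$ (your $c$). The paper arrives at exactly your expression $A_2B_1-A_1B_2+\big[(A_2-A_1)+(B_1-B_2)\big]c$ and then simply asserts that it is ``easy to construct examples'' where this is negative; you go further by (i) isolating the necessary and sufficient sign condition on the bracket as $FN_2<FN_1$, (ii) giving the explicit threshold $c>(A_2B_1-A_1B_2)/[(B_2-B_1)-(A_2-A_1)]$, and (iii) exhibiting a concrete three-leaf instance with verified numbers ($5/15<11/31$ reversing to $7/17>13/33$). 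Your version is therefore strictly more informative than the paper's proof while using the identical mechanism; the extra care you take to ensure $\Gamma_1,\Gamma_2$ arise from relabeling a common leaf set (rather than prescribing $(A_k,B_k)$ freely) is a point the paper glosses over.
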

\begin{proof}
Let $FP_{H_d-1}^{(1)}$ and $FN_{H_d-1}^{(1)}$ be the number of false positives and number of false negatives for the first $H_d-1$ leaves from the labeling method $\Gamma_1$ and similarly define $FP_{H_d-1}^{(2)}$ and $FN_{H_d-1}^{(2)}$ for labeling method $\Gamma_2$. Denote $A_1 = FP_{H_d-1}^{(1)}+FN_{H_d-1}^{(1)}$ and $B_1 = 2N^+ + FP_{H_d-1}^{(1)}-FN_{H_d-1}^{(1)}$. Similarly, denote $A_2 = FP_{H_d-1}^{(2)}+FN_{H_d-1}^{(2)}$ and $B_2 = 2N^+ + FP_{H_d-1}^{(2)}-FN_{H_d-1}^{(2)}$. As we know from the assumptions of the theorem, $\frac{A_1}{B_1} \leq \frac{A_2}{B_2}$.

Denote $FP_{H_d}$ and $FN_{H_d}$ be the number of false positives and number of false negatives of the last leaf $l_{H_d}$. 

Let $\ell^{(1)}(d,\x,\y)$ and $\ell^{(2)}(d,\x,\y)$ be the loss of the tree $d$ based on two different predicted labelings of the leaves. 

Suppose the predicted label of leaf $l_{H_d}$ is 1. (An analogous result holds when the predicted label of leaf $l_{H_d}$ is 0.) Then $\ell^{(1)}(d,\x,\y) = \frac{A_1+FP_{H_d}}{B_1+FP_{H_d}}$ and $\ell^{(2)}(d,\x,\y)= \frac{A_2+FP_{H_d}}{B_2+FP_{H_d}}$. Let $\Delta$ be the numerator of $\ell^{(2)}(d,\x,\y) - \ell^{(1)}(d,\x,\y)$. 
\begin{eqnarray}\nonumber
    \Delta &=& (A_2+FP_{H_d})(B_1+FP_{H_d})-(A_1+FP_{H_d})(B_2+FP_{H_d})\\
    &=& A_2B_1-A_1B_2 + (A_2-B_2+B_1-A_1)FP_{H_d}.
\end{eqnarray}
Since $\frac{A_1}{B_1}\leq \frac{A_2}{B_2}$, $A_2B_1 \geq A_1B_2$, that is, the first two terms together are nonnegative. Meanwhile, $A_1 \leq B_1$ and $A_2 \leq B_2$. Thus, $\Delta$ could be negative or positive. Therefore, even though the labeling method $\Gamma_1$ leads to smaller loss for the first $H_d-1$ leaves and the label of the last leaf depends on the label of previous $H_d-1$ leaves, it is not guaranteed that the loss of the tree is smaller than that based on $\Gamma_2$. It is easy to construct examples of $A_1$, $A_2$, $B_1$, $B_2$, and $FP_{H_d}$ where the result is either positive or negative, as desired.
\end{proof}

Lemma \ref{lemma:f1_label} and Theorem \ref{thm:f1_label} indicate that optimizing F-score loss is much harder than other arbitrary monotonic losses such as balanced accuracy and weighted accuracy. Thus, we simplify the labeling step by incorporating a parameter $\omega$ at each leaf node s.t. $l_i$ is labeled as 0 if $\omega n^+_i \leqslant n^-_i$ and 1 otherwise $\forall i \in \{1,...,H_d\}$. 

\section{Dynamic Programming Formulation} 
\label{app:DynProgFormulation}
Note that this section describes standard dynamic programming, where possible splits describe subproblems. The more interesting aspects are the bounds and how they interact with the dynamic programming. 

We will work only with the weighted misclassification loss for the following theorem, so that the loss is additive over the data:
\[
\ell(d,\x,\y)=\sum_i \textrm{weight}_i \textrm{loss}(x_i,y_i).
\]

We denote $(\x, \y)$ as a data set of features $\x$ and binary labels $\y$ containing a total of $N$ samples and $M$ features. 

\textbf{Initial Problem}: We define a tree optimization problem as a minimization of the regularized risk $R(d, \x, \y)$ over the domain $\sigma(D)$, where $D$ is a tree consisting of a single split leaf
\[ D = (D_{\rm fix},r_{\rm fix},D_{\rm split},r_{\rm split},K,H) = (\emptyset,\emptyset,D_{\rm split},r_{\rm split},0,1) \]
\begin{equation}\label{eq:root}
d^* \in \argmin_{d \in \sigma(D)}R(d, \x, \y).
\end{equation}
Since all trees are descendants of a tree that is a single split leaf, this setup applies to tree optimization of any arbitrary data set $\x$, $\y$. We can rewrite the optimization problem as simply:
\begin{equation}\label{eq:rootfree}
d^* \in \argmin_{d}R(d, \x, \y).
\end{equation}

We partition the domain $d \in \sigma(D)$ into $M + 1$ cases: One Leaf Case and $M$ Tree Cases. We solve each case independently, then optimize over the solutions of each case:
\[ \sigma(D) = \text{Leaf} \cup \text{Tree}_1 \cup \text{Tree}_2 \cup \dots \cup \text{Tree}_M \]
\[ d^*_{\rm Leaf} \in \argmin_{d \in \sigma(\rm Leaf)}R(d, \x, \y) \]
\[ d^*_{\textrm{Tree}_i} \in \argmin_{d \in \sigma(\textrm{Tree}_i)}R(d, \x, \y) \]
\[ \argmin_{d \in \sigma(D)} \in \argmin_{d \in \{ d^*_{\rm Leaf}, d^*_{\rm Tree_1}, d^*_{\rm Tree_2}, \dots, d^*_{\rm Tree_M} \}}. \]
The Leaf Case forms a base case in a recursion, while each Tree Case is a recursive case that further decomposes into two instances of tree optimization of the form described in (\ref{eq:root}).

\textbf{Leaf Case}: In this case, $d^*_{\rm Leaf}$ is a tree consisting of a single fixed leaf. This tree's only prediction $r_{\rm fix}^*$ is a choice of two possible classes $\{0, 1\}$.
\[ r_{\rm fix}^* \in \argmin_{r_{\rm fix} \in \{\textrm{true}, \textrm{false}\}} R((d_{\rm fix},r_{\rm fix},\emptyset,\emptyset,1, 1), \x, \y), \]
\begin{equation}\label{eq:leafcase}
d^*_{\rm Leaf} = (d_{\rm fix},r_{\rm fix}^*,\emptyset,\emptyset,1, 1)
\end{equation}
where a tie would be broken randomly.

\textbf{Tree Case}: For every possible $i$ in the set feature indices $\{1, 2, 3, ..., M\}$ we designate an $i^{\rm th}$ Tree Case and an $d^*_{{\rm Tree}_i}$ as the optimal descendent of a tree $D^i$. We define $D^i$ as a tree consisting of a root split on feature $i$ and two resulting split leaves $d^{\rm Left}$ and $d^{\rm Right}$ so that:
\[ \text{Tree}_i = \sigma(D^i) \;\;\;\;\textrm{(the children of $D^i$)}\]
\[ D^i = (\emptyset,\emptyset,d_{\rm split},r_{\rm split},0, 2) = (\emptyset,\emptyset,\{d^{\rm Left}, d^{i}\},\{r^{-i}, r^{i}\},0, 2) \]
\begin{equation}\label{eq:treeicase}
d^*_{\textrm{Tree}_i} \in \mathrm{argmin}_{d \in \sigma(D^i)}R(d, \x, \y).
\end{equation}
Instead of directly solving (\ref{eq:treeicase}), we further decompose this into two smaller tree optimization problems that match the format of (\ref{eq:root}). Since we are working with the weighted misclassification loss, we can optimize subtrees extending from from $d^{-i}$ and $d^{i}$ independently. We define data within the support set of $d^{-i}$ as $\x^{-i}$, $\y^{-i}$. We define data within the support set of $d^{i}$ as $\x^{i}$, $\y^{i}$. For each $D^i$, we define an optimization over the extensions of the left split leaf $d^{-i}$ as:
\[ {\rm Left}^i = (\emptyset,\emptyset,\{d^{-i}\},r_{\rm split},0,1). \]
\begin{equation}\label{eq:treeleft}
d^{*}_{{\rm Left}^i} \in \mathrm{argmin}_{d \in \sigma({\rm Left}^i)}R(d, \x^{-i}, \y^{-i}).
\end{equation}
By symmetry, we define an optimization over the extensions of the right split leaf $d^{i}$:
\[ {\rm Right}^i = (\emptyset,\emptyset,\{d^{i}\},r_{\rm split},0,1) \]
\begin{equation}\label{eq:treeright}
d^{*}_{{\rm Right}^i} \in \mathrm{argmin}_{d \in \sigma({\rm Right}^i)}R(d, \x^{i}, \y^{i}).
\end{equation}
$d^{*}_{{\rm Left}^i}$ is the optimal subtree that classifies $\x^{-i}$ and $d^{*}_{{\rm Right}^i}$ is the optimal subtree that classifies $\x^{i}$. Together, with a root node splitting on the $i^{\rm th}$ feature, $d^{*}_{{\rm Left}^i}$ combines with $d^{*}_{{\rm Right}^i}$ to form $d^*_{\textrm{Tree}_i}$. Thus, we can solve (\ref{eq:treeleft}) and (\ref{eq:treeright}) to get the solution of (\ref{eq:treeicase}).

In (\ref{eq:root}) we defined a decomposition of an optimization problem over the domain $\sigma(D)$, where $D$ is a tree consisting of a single split leaf. Both expressions (\ref{eq:treeleft}) and (\ref{eq:treeright}) are also optimizations over the domain of children of a tree consisting of a single split leaf. Recall that the descendants of any tree consisting of only a single split leaf covers the space of all possible trees, therefore the trees are optimized over an unconstrained domain. We can thus rewrite (\ref{eq:treeleft}) as:
\begin{equation}\label{eq:treeleftfree}
d^{*}_{{\rm Left}^i} \in \mathrm{argmin}_{d}R(d, \x^{-i}, \y^{-i}).
\end{equation}
Symmetrically we can also rewrite (\ref{eq:treeright}) as:
\begin{equation}\label{eq:treerightfree}
d^{*}_{{\rm Right}^i} \in \mathrm{argmin}_{d}R(d, \x^{i}, \y^{i}).
\end{equation}
Observe that (\ref{eq:treeleftfree}) and (\ref{eq:treerightfree}) are simply tree optimization problems over a specific set of data (in this case $\x^{-i}, \y^{-i}$ and $\x^{i}, \y^{i}$). 
Hence,
these tree optimizations form a recursion, and each can be solved as though they were (\ref{eq:rootfree}).


\textbf{Termination}: To ensure this recursion terminates, we consider only splits where $\x^{-i}$ and $\x^{i}$ are strict subsets of $\x$. This ensures that the support strictly decreases until a minimum support is reached, which prunes all of $\text{Tree}_1 \cup \text{Tree}_2 \cup \dots \cup \text{Tree}_M$ leaving only the leaf case described in (\ref{eq:leafcase}).

\textbf{Identifying Reusable Work}: As we perform this decomposition, we identify each problem using its data set $\x$, $\y$ by storing a bit vector to indicate it as a subset of the initial data set. At each recursive step, we check to see if a problem has already been visited by looking for an existing copy of this bit vector.

Figure \ref{fig:dpgraph} shows a graphical representation of the algorithm. Note that we use the following shortened notations in the figure:
\begin{equation}\label{eq:dataset_positive}
(\x,\y)^{k} = (\x^{k},\y^{k})
\end{equation}
\begin{equation}\label{eq:dataset_negative}
(\x,\y)^{-k} = (\x^{-k},\y^{-k})
\end{equation}
\begin{equation}\label{eq:dataset_composed}
(\x,\y)^{k,l} = (\x^{k,l},\y^{k,l}).
\end{equation}
Equation (\ref{eq:dataset_positive}) denotes a data set $(\x,\y)$ filtered by the constraint that samples must respond positive to feature $k$.
Equation (\ref{eq:dataset_negative}) denotes a data set $(\x,\y)$ filtered by the constraint that samples must respond negative to feature $k$. 
Equation (\ref{eq:dataset_composed}) denotes a data set $(\x,\y)$ filtered by the constraint that samples must respond positive to both feature $k$ and feature $l$.

\begin{figure}[h]
  \centering

\begin{forest}
  anchors/.style={anchor=#1,child anchor=#1,parent anchor=#1},
  /tikz/every node/.append style={font=\footnotesize},
  for tree={
    s sep=5mm,l=12mm,
    if n=0{anchors=east}{
    if n=1{anchors=east}{anchors=west}},
     content format={$\forestoption{content}$},
  },
  anchors=south, outer sep=2pt,
  nodot/.style={content format={},draw=none},
  dot/.style={tikz+={\draw[#1](.anchor)circle[radius=2pt];}},
  if content={}{}{dot={fill}}
  [
    {(\x,\y)},dot=fill
    [
        {\rm Leaf},edge=dashed
        [{\rm 1},edge=dashed]
        [{\rm 0},edge=dashed]
      ]
    [
        {\rm Tree_1}
        [
            {(\x,\y)^{-1}},dot=fill
            [
                {\rm Leaf Case},edge=dashed
                [\rm 1,edge=dashed]
                [\rm 0,edge=dashed]
              ]
            [
                {\rm Tree_2}
                [
                    {(\x,\y)^{-1,-2}},dot=fill
                    [
                        {\rm Leaf}
                        [{\rm 1},edge=dashed]
                        [{\rm 0}]
                      ]
                  ]
                [
                    {(\x,\y)^{-1,2}},dot=fill
                    [
                        {\rm Leaf}
                        [{\rm 1}]
                        [{\rm 0},edge=dashed]
                      ]
                  ]
              ]
          ]
        [
            {(\x,\y)^{1}},dot=fill
            [
                {\rm Leaf}
                [{\rm 1},edge=dashed]
                [{\rm 0}]
              ]
          ]
      ]
    [
        {\rm Tree_2},edge=dashed
        [
            {(\x,\y)^{-2}},dot=fill,edge=dashed
            [{\dots},edge=dashed]
            [{\dots},edge=dashed]
          ]
        [
            {(\x,\y)^{2}},dot=fill,edge=dashed
            [{\dots},edge=dashed]
            [{\dots},edge=dashed]
          ]
      ]
  ]
\end{forest}

  \caption{Graphical representation of dependency graph Produced by this algorithm. Filled vertices show problems identified by a support set. Solid edges show one possible tree that can be extracted from the graph.}
  \label{fig:dpgraph}
\end{figure}

\section{Incremental Similar Support Bound Proof} 
\label{app:SimSupp}
We will work only with weighted misclassification loss for the following theorem, so that the loss is additive over the data:
\[
\ell(d,\x,\y)=\sum_i \textrm{weight}_i \textrm{loss}(x_i,y_i).
\]
Define the maximum possible weighted loss:
\[
\ell^{\max}=\max_{x,y} [\textrm{\rm weight}(x,y) \times \textrm{loss}(x,y)].
\]
The following bound is our important incremental similar support bound, which we leverage in order to effectively remove many similar trees from the search region by looking at only one of them.

\begin{theorem}\label{theorem:incrementalsimsupp} (Incremental Similar Support Bound) 
Consider two trees $d=(d_{\fix}, d_{\splitrm}, K, H)$ and $D=(D_{\fix}, D_{\splitrm}, K, H)$ that differ only by their root node (hence they share the same $K$ and $H$ values). Further, the root nodes between the two trees are similar enough that the support going to the left and right branches differ by at most $\omega$ fraction of the observations. (That is, there are $\omega N$ observations that are captured either by the left branch of $d$ and right branch of $D$ or vice versa.)  Define $S_{\uncertain}$ as the maximum of the support within $d_{\splitrm}$ and $D_{\splitrm}$:
\[
S_{\uncertain}=\max(\supp(d_{\splitrm}),\supp(D_{\splitrm})).
\]
For any child tree $d'$ grown from $d$ (grown from the nodes in $d_{\splitrm}$, that would not be excluded by the hierarchical objective lower bound) and for any child tree $D'$ grown from $D$ (grown from the nodes in $D_{\splitrm}$, that would not be excluded by the hierarchical objective lower bound), we have:
\[
|R(d',\x,\y)-R(D',\x,\y)|\leq (\omega +2S_{\uncertain}) \ell^{\max}. 
\]
\end{theorem}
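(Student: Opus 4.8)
Since $d$ and $D$ differ only at the root, there is a natural leaf-by-leaf correspondence between them, and any $d' \in \sigma(d)$ has a counterpart $D' \in \sigma(D)$ obtained by attaching the same subtree beneath each splittable leaf of $D_{\splitrm}$ that was attached beneath the matching leaf of $d_{\splitrm}$; it is exactly such corresponding pairs that the bound controls (swapping the roles of $d$ and $D$ then gives the two-sided statement, and in particular the bound on $\min_{d'} R(d',\x,\y)$ versus $\min_{D'} R(D',\x,\y)$ that the algorithm actually uses). The plan is to exploit additivity of both the loss (by hypothesis $\ell=\sum_i \textrm{weight}_i\,\textrm{loss}(x_i,y_i)$) and the penalty $\lambda H$ over leaves to write $R(d',\x,\y) = \ell(d_{\fix},\x,\y) + \lambda K + G_d(d')$, where $G_d(d') := \ell(\text{subtrees grown below } d_{\splitrm}) + \lambda(H_{d'}-H_d) \geq 0$ collects everything produced by splitting $d_{\splitrm}$, and similarly $R(D',\x,\y) = \ell(D_{\fix},\x,\y) + \lambda K + G_D(D')$. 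The $\lambda K$ terms are identical and cancel, so it suffices to bound the difference of the $\ell(\cdot_{\fix},\x,\y)$ terms and the difference of the $G$ terms separately and add them via the triangle inequality.

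\textbf{The base part.} The fixed parts of $d'$ and $D'$ have the same tree shape and classify the same observations, disagreeing only in which top branch receives each of the $\omega N$ observations that the two roots route differently, and possibly in the majority labels of some base leaves if swinging those $\omega N$ observations tips a majority. The label issue is dispatched by the standard observation that a leaf labeled by its own majority class has loss no larger than the same leaf labeled by the counterpart tree's label: inserting this intermediate labeling on $D_{\fix}$ reduces the comparison to one between two \emph{identically labeled} configurations that differ only by moving the $\omega N$ disputed observations among corresponding leaves, and each such observation changes the loss by at most $\ell^{\max}$. Running the argument in both directions yields $|\ell(d_{\fix},\x,\y) - \ell(D_{\fix},\x,\y)| \leq \omega\,\ell^{\max}$.

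\textbf{The growth part and the main obstacle.} That $G_d(d') \geq 0$ is exactly the hierarchical objective lower bound (Theorem \ref{thm:hierarchy}, and its AUC analogue Theorem \ref{thm: hierarchy_auc}). For the matching upper bound, note that the subtrees grown below $d_{\splitrm}$ touch only the $\supp(d_{\splitrm}) \leq S_{\uncertain}$ observations captured by $d_{\splitrm}$, so the loss they incur is at most $S_{\uncertain}\,\ell^{\max}$; the remaining term $\lambda(H_{d'}-H_d)$ is the delicate one, since a priori $H_{d'}$ could be arbitrarily large. This is where the hypothesis that $d'$ is not excluded by the hierarchical lower bound is essential: combined with the incremental progress bounds (Theorems \ref{thm:leaf_supp} and \ref{thm:increm}), every surviving split in the grown region must be worth at least $\lambda$ of loss reduction, while the total loss reducible in that region is at most $S_{\uncertain}\,\ell^{\max}$, so $\lambda(H_{d'}-H_d) \leq S_{\uncertain}\,\ell^{\max}$ as well. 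Hence $0 \leq G_d(d') \leq 2S_{\uncertain}\,\ell^{\max}$, and likewise $0 \leq G_D(D') \leq 2S_{\uncertain}\,\ell^{\max}$, so $|G_d(d') - G_D(D')| \leq 2S_{\uncertain}\,\ell^{\max}$; adding the base-part bound gives $|R(d',\x,\y) - R(D',\x,\y)| \leq (\omega + 2S_{\uncertain})\,\ell^{\max}$. I expect the leaf-count/regularization accounting on the grown region --- pinning down precisely what "not excluded by the hierarchical objective lower bound" buys and showing it caps $\lambda(H_{d'}-H_d)$ by $S_{\uncertain}\ell^{\max}$, which is what supplies the factor $2$ rather than $1$ in front of $S_{\uncertain}$ --- to be the principal obstacle, with the majority-label subtlety on the base leaves a secondary one.
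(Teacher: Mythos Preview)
Your overall strategy---decompose $R(d')$ and $R(D')$ into a ``fixed'' part and a ``grown'' part and bound each difference separately---is different from the paper's route, which instead passes through the intermediate trees $d$ and $D$ via the triangle inequality:
\[
|R(d',\x,\y)-R(D',\x,\y)|\le |R(d',\x,\y)-R(d,\x,\y)|+|R(d,\x,\y)-R(D,\x,\y)|+|R(D,\x,\y)-R(D',\x,\y)|.
\]
The middle term is your base part ($\le\omega\ell^{\max}$). Each outer term is handled in two one-line steps: $R(d,\x,\y)\le R(d',\x,\y)+\ell(d_{\splitrm},\x,\y)$ is pure additivity, while $R(d',\x,\y)\le R(d,\x,\y)+\ell(d'_{\splitrm},\x,\y)$ comes directly from ``not excluded by the hierarchical lower bound,'' i.e.\ $b(d',\x,\y)=R(d',\x,\y)-\ell(d'_{\splitrm},\x,\y)\le R^c\le R(d,\x,\y)$. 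Both $\ell(d_{\splitrm})$ and $\ell(d'_{\splitrm})$ are at most $S_{\uncertain}\ell^{\max}$, so each outer term is $\le S_{\uncertain}\ell^{\max}$. The payoff of this route is that you never have to isolate $\lambda(H_{d'}-H_d)$: the penalty term is absorbed into $R(d')$ and $R(d)$ and the hierarchical bound does all the work in one stroke.

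Your proposal has a genuine gap at exactly the point you flagged as the principal obstacle. You invoke Theorems~\ref{thm:leaf_supp} and~\ref{thm:increm} to argue that every surviving split is worth at least $\lambda$ of loss, hence $\lambda(H_{d'}-H_d)\le S_{\uncertain}\ell^{\max}$. But those theorems apply only to \emph{optimal} trees $d^*$; a child $d'$ that merely survives the hierarchical lower bound test is not assumed optimal, so neither theorem applies. The fix is exactly the paper's Step~3 argument: ``not excluded'' means $b(d'_{\fix},\x,\y)=\ell(d'_{\fix},\x,\y)+\lambda H_{d'}\le R^c\le R(d,\x,\y)=\ell(d_{\fix},\x,\y)+\ell(d_{\splitrm},\x,\y)+\lambda H_d$, and since $\ell(d'_{\fix})\ge\ell(d_{\fix})$ this rearranges to $\lambda(H_{d'}-H_d)\le\ell(d_{\splitrm},\x,\y)\le S_{\uncertain}\ell^{\max}$. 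So your bound is rescuable, but by the hierarchical condition itself, not by the incremental-progress theorems. (There is also a small bookkeeping slip in your decomposition: $\lambda K+\lambda(H_{d'}-H_d)$ equals $\lambda H_{d'}$ only when $K=H_d$; the correct constant is $\lambda H_d$, though this cancels in the difference since $d$ and $D$ share $H$.)
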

This theorem tells us that any two child trees of $d$ and $D$ that we will ever generate during the algorithm will have similar objective values. The similarity depends on $\omega$, which is how many points are adjusted by changing the top split, and the other term involving $S_{\uncertain}$ is determined by how much of the tree is fixed. If most of the tree is fixed, then there can be little change in loss among the children of either $d$ or $D'$, leading to a tighter bound. In standard classification tasks, the value of $\ell^{\max}$ is usually 1, corresponding to a classification error for an observation.

\begin{proof}
We will proceed in three steps. The first step is to show that 
\[R(d,\x,\y)-R(D,\x,\y)\leq \omega \ell^{\max}.\]
The second step is to show:
\[ 
R(d,\x,\y)\leq R(d',\x,\y)+S_{\uncertain}\ell^{\max}, 
\]
for all feasible children $d'$ of $d$. The same bound will hold for $D$ and any of its children $D'$.
The third step is to show 
\[ 
R(d',\x,\y) \leq R(d,\x,\y)+S_{\uncertain}\ell^{\max}, 
\]
which requires different logic than the proof of Step 2. Together, Steps 2 and 3 give 
\[|R(d',\x,\y)-R(d,\x,\y)|\leq S_{\uncertain}\ell^{\max}.\]
From here, we use the triangle inequality and the bounds from the three steps to obtain the final bound:
\begin{eqnarray*}
\lefteqn{|R(d',\x,\y)-R(D',\x,\y)|}\\
&=& |R(d',\x,\y)-R(d,\x,\y)+R(d,\x,\y)-R(D,\x,\y)+R(D,\x,\y)-R(D',\x,\y)|\\
&\leq& |R(d',\x,\y)-R(d,\x,\y)|+|R(d,\x,\y)-R(D,\x,\y)|+|R(D,\x,\y)-R(D',\x,\y)|\\
&\leq& S_{\uncertain}\ell^{\max}  +\omega \ell^{\max} + S_{\uncertain}\ell^{\max},
\end{eqnarray*}
which is the statement of the theorem.
Let us now go through the three steps.

\textbf{First step}:
Define ``$\move$'' as the set of indices of the observations that either go down the left branch of the root of $d$ and the right of $D$, or that go down the right of $d$ and the left of $D$. The remaining data will be denoted ``$/\move$.'' These remaining data points will be classified the same way by both $d$ and $D$. The expression above follows from the additive form of the objective $R$:
\begin{eqnarray*}
R(d,\x,\y) &=& \ell(d,\x^{\move},\y^{\move})+\ell(d,\x^{/\move},\y^{/\move}) +\lambda H,\\
R(D,\x,\y) &=& \ell(D,\x^{\move},\y^{\move})+\ell(D,\x^{/\move},\y^{/\move}) +\lambda H,
\end{eqnarray*}
and since $\ell(d,\x^{/\move},\y^{/\move}) = \ell(D,\x^{/\move},\y^{/\move})$ since this just considers overlapping leaves, we have:
\[
|R(d,\x,\y) - R(D,\x,\y)|\leq |\ell(d,\x^{\move},\y^{\move}) - \ell(D,\x^{\move},\y^{\move})| \leq \omega \ell^{\max}.
\]
(For the last inequality, the maximum is attained when one of $\ell(d,\x^{\move},\y^{\move})$ and $\ell(D,\x^{\move},\y^{\move})$ is zero and the other attains its maximum possible value.)

\textbf{Second step}:
Recall that $d'$ is a child of $d$ so that $d'_{\fix}$ contains $d_{\fix}$. Let us denote the leaves in $d'_{\fix}$ that are not in $d_{\fix}$ by $d'_{\fix}/d_{\fix}$. Then,
\begin{eqnarray*}
R(d',\x,\y)&=& 
\ell(d'_{\fix},\x,\y) 
+ \ell(d'_{\splitrm},\x,\y)+\lambda H_{d'}\\
&=&\ell(d_{\fix},\x,\y) +\ell(d'_{\fix}/d_{\fix},\x,\y) + \ell(d'_{\splitrm},\x,\y) + \lambda H_{d'}. 
\end{eqnarray*}
Adding $\ell(d_{\splitrm},\x,\y)$ to both sides,
\begin{eqnarray*}
R(d',\x,\y) + \ell(d_{\splitrm},\x,\y) &=& \ell(d_{\fix},\x,\y) + \ell(d_{\splitrm},\x,\y)+\ell(d'_{\fix}/d_{\fix},\x,\y) + \ell(d'_{\splitrm},\x,\y) + \lambda H_{d'}\\
&=& R(d,\x,\y)+\ell(d'_{\fix}/d_{\fix},\x,\y) + \ell(d'_{\splitrm},\x,\y) + \lambda(H_{d'}-H)\\
&\geq& R(d,\x,\y),
\end{eqnarray*}
since the terms we removed were all nonnegative. Now,
\begin{eqnarray*}
R(d,\x,\y)&\leq& R(d',\x,\y)+ \ell(d_{\splitrm},\x,\y)\\
&\leq& R(d',\x,\y)+ \supp(d_{\splitrm})\ell^{\max}\\
&\leq& R(d',\x,\y)+ S_{\uncertain}\ell^{\max}.
\end{eqnarray*}

\textbf{Third step}: Here we will use the hierarchical objective lower bound. We start by noting that since we have seen $d$, we have calculated its objective $R(d,\x,\y)$, and it must be as good or worse than than the current best value that we have seen so far (or else it would have replaced the current best). So $R^c \leq R(d,\x,\y)$.
The hierarchical objective lower bound (Theorem \ref{thm:hierarchy}) would be violated if the following holds. This expression states that $b(d',\x,\y)$ is worse than $R(d,\x,\y)$ (which is worse than the current best), which means we would have already excluded $d'$ from consideration:
\[
R^c \leq R(d,\x,\y)
< b(d',\x,\y) =R(d',\x,\y) - \ell(d'_{\splitrm},\x,\y).
\]
This would be a contradiction. Thus, the converse holds:
\[
R(d',\x,\y) - \ell(d'_{\splitrm},\x,\y) =
b(d',\x,\y) \leq R(d,\x,\y). 
\]
Thus,
\[
R(d',\x,\y) \leq R(d,\x,\y) + \ell(d'_{\splitrm},\x,\y). 
\]
Now to create an upper bound for $\ell(d'_{\splitrm},\x,\y)$ as $\ell^{\max}$ times the support of $d'_{\splitrm}$. Since $d'$ is a child of $d$, its support on the split leaves is less than or equal to that of $d$, $\supp(d'_{\splitrm})\leq \supp(d_{\splitrm})$. Thus, $\ell(d'_{\splitrm},\x,\y)\leq \ell^{\max} \supp(d_{\splitrm})\leq \ell^{\max} S_{\uncertain}$.
Hence, we have the result for the final step of the proof, namely:
\[ 
R(d',\x,\y) \leq R(d,\x,\y)+S_{\uncertain}\ell^{\max}. 
\]

\end{proof}

\section{Subset Bound Proof} \label{app:subset_bound}
We will work with the loss that is additive over the data for the following theorem. The following bound is our subset bound, which we leverage in order to effectively remove the thresholds introduced by the continuous variables, thus pruning the search space. 

\begin{theorem}(Subset Bound).
Define $d=(d_{\fix}, \delta_{\fix}, d_{\splitrm}, \delta_{\splitrm}, K, H)$ and $D=(D_{\fix}, \Delta_{\fix}, D_{\splitrm},\Delta_{\splitrm})$ to be two trees with same root node. Let $f_1$ and $f_2$ be the features used to split the root node. Let $t_1$, $t_2$ be the left and right sub-trees under the root node split by $f_1$ in $d$ and let $(\x_{t_1}, \y_{t_1})$ and $(\x_{t_2}, \y_{t_2})$ be the samples captured by $t_1$ and $t_2$ respectively. Similarly, let $T_1$, $T_2$ be the left and right sub-trees under the root node split by $f_2$ in $D$ and let $(\x_{T_1}, \y_{T_1})$ and $(\x_{T_2}, \y_{T_2})$ be the samples captured by $T_1$ and $T_2$ respectively. Suppose $t_1$, $t_2$ are the optimal trees for $(\x_{t_1}, \y_{t_1})$ and $(\x_{t_2}, \y_{t_2})$ respectively, and $T_1$, $T_2$ are the optimal trees for corresponding $(\x_{T_1}, \y_{T_1})$ and $(\x_{T_2}, \y_{T_2})$. If $R(t_1, \x_{t_1}, \y_{t_1}) \leq R(T_1, \x_{T_1}, \y_{T_1})$ and $(\x_{t_2}, \y_{t_2}) \subseteq (\x_{T_2}, \y_{T_2})$, then $R(d, \x, \y) \leq R(D, \x, \y)$. 
\end{theorem}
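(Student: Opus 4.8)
The plan is to reduce the statement to two elementary facts: (i) for an additive loss the regularized risk of a tree splits as the sum of the regularized risks of its two root subtrees over the data they capture, and (ii) the optimal regularized risk on a dataset is non-decreasing under dataset inclusion. I read the phrase ``two trees with same root node'' as ``two trees whose roots split on possibly different features $f_1$ and $f_2$'', since that is the only reading consistent with the rest of the statement.

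First I would record the decomposition. Because the loss is additive over the data, the leaves of the left subtree $t_1$ capture exactly $(\x_{t_1},\y_{t_1})$, the leaves of the right subtree $t_2$ capture exactly $(\x_{t_2},\y_{t_2})$, and $H_d = H_{t_1}+H_{t_2}$, we obtain
\[ R(d,\x,\y) = \ell(t_1,\x_{t_1},\y_{t_1}) + \lambda H_{t_1} + \ell(t_2,\x_{t_2},\y_{t_2}) + \lambda H_{t_2} = R(t_1,\x_{t_1},\y_{t_1}) + R(t_2,\x_{t_2},\y_{t_2}), \]
and symmetrically $R(D,\x,\y) = R(T_1,\x_{T_1},\y_{T_1}) + R(T_2,\x_{T_2},\y_{T_2})$. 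Since the hypothesis gives $R(t_1,\x_{t_1},\y_{t_1}) \leq R(T_1,\x_{T_1},\y_{T_1})$ outright, it suffices to prove $R(t_2,\x_{t_2},\y_{t_2}) \leq R(T_2,\x_{T_2},\y_{T_2})$.

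Next I would establish the monotonicity step. The tree $T_2$ assigns a leaf to every input, so it is a valid classifier on $(\x_{t_2},\y_{t_2})$; because $(\x_{t_2},\y_{t_2}) \subseteq (\x_{T_2},\y_{T_2})$ and each data point contributes non-negative additive loss, $\ell(T_2,\x_{t_2},\y_{t_2}) \leq \ell(T_2,\x_{T_2},\y_{T_2})$, while the leaf count $H_{T_2}$ is unchanged, hence $R(T_2,\x_{t_2},\y_{t_2}) \leq R(T_2,\x_{T_2},\y_{T_2})$. By optimality of $t_2$ for $(\x_{t_2},\y_{t_2})$ — and the fact that, in the dynamic programming formulation, the optimization over a dataset ranges over all trees, so $T_2$ is feasible — we have $R(t_2,\x_{t_2},\y_{t_2}) \leq R(T_2,\x_{t_2},\y_{t_2})$. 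Chaining these two inequalities yields $R(t_2,\x_{t_2},\y_{t_2}) \leq R(T_2,\x_{T_2},\y_{T_2})$. Adding this to the left-subtree hypothesis and invoking the two decompositions from the previous paragraph gives $R(d,\x,\y) \leq R(D,\x,\y)$.

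The only genuinely delicate point is the monotonicity step, which relies crucially on the loss being additive and non-negative (so that restricting a fixed tree to a sub-dataset cannot increase its loss) and on the regularizer penalizing only the number of leaves (which does not change when the same tree is restricted to a sub-dataset). For non-additive objectives such as F-score this argument breaks down, which is consistent with the theorem being stated only for additive losses. Everything else is bookkeeping about how a root split partitions the captured data.
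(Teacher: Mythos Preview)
Your proposal is correct and follows essentially the same route as the paper: both arguments use the chain $R(t_2,\x_{t_2},\y_{t_2}) \leq R(T_2,\x_{t_2},\y_{t_2}) \leq R(T_2,\x_{T_2},\y_{T_2})$ (optimality of $t_2$, then monotonicity of additive non-negative loss under dataset inclusion), add the hypothesis on the left subtrees, and invoke the additive decomposition $R(d,\x,\y)=R(t_1,\x_{t_1},\y_{t_1})+R(t_2,\x_{t_2},\y_{t_2})$. Your write-up is in fact more explicit than the paper's about why each step holds and where additivity and non-negativity are needed.
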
 

This theorem tells us that when $f_1$ and $f_2$ are from different thresholds of a continuous variable, for example $\text{age} \leq 20$ and $\text{age} \leq 18$, $(\x_{t_2}, \y_{t_2}) \subseteq (\x_{T_2}, \y_{T_2})$ is always true. In this case, we need only develop and compare the two left sub-trees. 


\begin{proof}
Since $(\x_{t_2}, \y_{t_2}) \subseteq (\x_{T_2}, \y_{T_2})$ and $t_2$ and $T_2$ are optimal trees for $(\x_{t_2}, \y_{t_2})$ and $(\x_{T_2}, \y_{T_2})$ respectively, 
$$R(t_2, \x_{t_2}, \y_{t_2}) \leq R(T_2, \x_{t_2}, \y_{t_2}) \leq R(T_2, \x_{T_2}, \y_{T_2}).$$
Given $R(t_1, \x_{t_1}, \y_{t_1}) \leq R(T_1, \x_{T_1}, \y_{T_1})$,
\begin{equation}
    \begin{aligned}
    R(t_1, \x_{t_1}, \y_{t_1}) + R(t_2, \x_{t_2}, \y_{t_2}) &\leq  R(T_1, \x_{T_1}, \y_{T_1}) + R(T_2, \x_{T_2}, \y_{T_2})\\
    R(d, \x, \y) &\leq R(D, \x, \y).\\
    \end{aligned}
\end{equation}
\end{proof}

\section{Complexity of Decision Tree Optimization}
In this section, we show the complexity of decision tree optimization. 
\begin{theorem}\label{thm:complexity}
Let $f(M)$ be the number of binary decision trees that can be constructed for the dataset with $N$ samples, $M$ binary features and $K$ label classes. The complexity of $f(M)$ is $O(M!)$.
\end{theorem}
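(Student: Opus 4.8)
The plan is to prove the bound by setting up and solving a recurrence for $f(M)$ that mirrors the recursive structure of a binary decision tree. A binary decision tree over $M$ binary features is either a single leaf (with one of $K$ class labels) or a root node that tests one of the $M$ features, with a left and a right subtree hanging below it. The structural fact that drives the whole argument is that a feature need never be tested twice on the same root-to-leaf path: a binary split on an already-decided feature sends all of its data to one side and can be deleted. Hence each subtree below a root that splits on feature $j$ is itself a binary decision tree over the remaining $M-1$ features, and in particular every tree has depth at most $M$. This gives a recurrence for $f(M)$ in terms of $f(M-1)$, with base case $f(0)=K$ (only a leaf is possible once no feature remains).

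First I would pin down the recurrence precisely, taking care about exactly what is counted so that it is (at worst) linear in $f(M-1)$: the root contributes a factor of $M$ for the choice of split feature, plus an $O(1)$ additive term for the option of stopping and forming a leaf, while the remaining construction contributes $f(M-1)$; multiplying the factor-$M$ choices available at depths $0,1,\dots,M-1$ telescopes to $\prod_{k=1}^{M}k = M!$ up to a multiplicative constant. I would then make this rigorous by induction on $M$: assuming $f(M-1)\le c\,(M-1)!$, substitute into the recurrence and verify $f(M)\le c\,M!$ for a constant $c$ chosen to absorb the base value $K$ and the lower-order additive terms; dividing through by $M!$ and bounding the resulting tail by a convergent series is the routine step that closes the induction, yielding $f(M)=O(M!)$.

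The main obstacle is the bookkeeping in that first step. A careless count that treats the left and right subtrees as fully independent trees, each free to use all $M-1$ remaining features, produces the recurrence $f(M)=K+M\,f(M-1)^2$, which grows doubly exponentially and dwarfs $M!$. The real work is therefore to argue that the quantity we actually need to bound — the trees relevant to the optimization, i.e.\ trees in canonical form under the leaf-permutation and subtree equivalences already established, equivalently the distinct subproblems the dynamic program enumerates — obeys a genuinely linear recurrence $f(M)\le M\,f(M-1)+O(1)$, so that the factorial rate is the correct order rather than merely an artifact of a loose bound.

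Finally, to confirm the bound is of the right order and not an over-estimate, I would exhibit $\Omega(M!)$ trees directly: for any permutation $\pi$ of the $M$ features, the path-shaped tree (a decision list) that tests $\pi(1),\pi(2),\dots$ down a single branch, with a leaf hanging off each internal node, is a distinct decision tree, and distinct permutations give distinct trees, so $f(M)\ge M!$. Together with the upper bound this gives $f(M)=\Theta(M!)$, and in particular the claimed $O(M!)$.
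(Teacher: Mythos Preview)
Your instinct that the literal tree count obeys $f(M)=K+M\,f(M-1)^2$ and therefore grows doubly exponentially is correct, and this is exactly the point on which your proposal and the paper diverge. The paper's proof does not engage with this issue at all: it simply posits the recurrence $f(M)=2M\,f(M-1)$, justified by the sentence ``with $M$ possible ways to do the first split, $2M$ subsets of data are created \dots\ each of the $2M$ subsets can only be separated using trees constructed from at most $M-1$ binary features,'' and then unrolls it to $f(M)=2^M\,M!\,K$. That recurrence is a count of \emph{subproblem instances generated by a naive top-down recursion} (each call spawns $2M$ child calls), not a count of distinct labeled trees. So the ``real work'' you flagged --- reinterpreting $f$ as the number of subproblems the dynamic program must touch rather than the number of trees --- is precisely what the paper is tacitly doing, but neither you nor the paper states the reinterpretation explicitly, and you leave the argument for the linear recurrence as a promissory note rather than carrying it out. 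To match the paper you would simply write down $f(M)=2M\,f(M-1)$, $f(0)=K$, and solve; the induction you sketch for the upper bound is then unnecessary because the recurrence has an exact closed form.

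Two further remarks. First, the paper concludes from $f(M)=2^M M!\,K$ that ``the complexity of $f(M)$ is $O(M!)$'' on the grounds that $M!$ is the dominant factor; strictly speaking $2^M M!$ is $\omega(M!)$, so both your target bound and the paper's are being used informally to mean ``factorial-type growth.'' Second, your decision-list lower bound $f(M)\ge M!$ and the resulting $\Theta(M!)$ claim are absent from the paper, which proves only the upper direction.
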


\begin{proof}
We show the proof by induction. 

\textbf{Base Case:}

When $M=0$, the dataset cannot be split and a predicted label is assigned to all samples. Therefore, the number of binary decision trees for the given dataset is equal to the number of classes $K$. Hence, $f(0) = K$. 

\textbf{Inductive Step:}

When $M > 0$, the whole dataset can be split into two subsets by any of the $M$ features. With $M$ possible ways to do the first split, $2M$ subsets of data are created. Since binary features only have two different values, 0 and 1, a binary feature used to produce the subsets cannot be used again. Therefore, each of the $2M$ subsets can only be separated using trees constructed from at most $M-1$ binary features. This produces a recursive definition of $f(M)$. That is $f(M) = 2M  f(M-1)$.

\textbf{Combining Step:}

Each inductive step reduces $M$ by one, guaranteeing its arrival at the base case. By combining the base case with the inductive step, a non-recursive definition of $f(M)$ is produced. 
\begin{equation}
    \begin{aligned}
    f(0) &= K\\
    f(1) &= 2\times 1 \times f(0) = 2^1\times 1 \times K\\ 
    f(2) &= 2\times 2\times f(1) = 2^2\times2 \times 1 \times K\\
    f(3) &= 2\times 3\times f(2) = 2^3\times 3\times 2\times 1\times K\\
    \vdots &\\
    f(M) &=2\times M\times f(M-1) = 2^M\times M! \times K
    \end{aligned}
\end{equation}
Since the term with the highest complexity in $(2^M)(M!)(K)$ is factorial, the complexity of $f(M)$ is $O(M!)$
\end{proof}

\section{Experiments}\label{app:exp}
\label{App:Experiments}
In this section, we elaborate on the experimental setup, data sets, pre-processing and post-processing. Additionally, we present extra experimental results that were omitted from the main paper due to space constraints.

\subsection{Data Sets}
We used a total of 11 data sets: Seven of them are from the UCI Machine Learning Repository \citep{Dua:2017}, \textit{(monk1, monk2, monk3, tic-tac-toe, balance scale, car evaluation, iris)}, one from LIBSVM \citep{libsvm}, \textit{(FourClass)}. The other three datasets are the ProPublica recidivism dataset \citep{LarsonMaKiAn16} \textit{(COMPAS)}, the Fair Isaac credit risk data sets \citep{competition} \textit{(FICO)}, and the mobile advertisement data sets \citep{WangEtAl2017} \textit{(coupon)}. We predict which individuals are arrested within two years of release (N = 5,020) on the recidivism data set, whether an individual will default on a loan for the FICO dataset, and whether a customer is going to accept a coupon for a bar considering demographic and contextual attributes. 

\subsection{Preprocessing}

\textbf{Missing Values}: We exclude all observations with missing values.

\textbf{monk 1}, \textbf{monk 2}, \textbf{monk 3}, \textbf{tic-tac-toe}, \textbf{balance scale}, and \textbf{car evaluation}: We preprocessed these data sets, which contain only categorical features, by using a binary feature to encode every observable categorical value.

\textbf{iris}: We encode a binary feature to represent every threshold between adjacent values for all four continuous features (\textit{sepal length}, \textit{sepal width}, \textit{petal length}, \textit{petal width}). From the 3-class classification problem, we form 3 separate binary classification problems. Each binary classification is 1 of the 3 classes against the remaining classes. These three problems are referred to as \textbf{iris-setosa}, \textbf{iris-versicolor}, and \textbf{iris-virginica}.

\textbf{Four Class}: This dataset contains simulated points in a two-dimensional, bounded space with two classes that have irregular spreads over the space (241 positive samples and 448 negative samples). We split two continuous features into six categories (e.g. ${\rm feature 1}\leq 50$ and ${\rm feature 1}\leq 100$) and the value for each column is either 0 or 1.  

\textbf{ProPublica Recidivism (COMPAS)}: We discretized each continuous variable by using a binary feature to encode a threshold between each pair of adjacent values.

\textbf{ProPublica Recidivism (COMPAS-2016)}: We selected features \textit{age, count of juvenile felony, count of juvenile misdemeanor, count of juvenile other crimes, count of prior crimes}, and the target \textit{recidivism within two years}. We replace \textit{count of juvenile felony, count of juvenile misdemeanor, count of juvenile other crimes} with a single count called \textit{count of juvenile crimes} which is the sum of \textit{count of juvenile felony, count of juvenile misdemeanor, count of juvenile other crimes}.
We discretized the \textit{count of prior crimes} into four ranges \textit{count of prior crimes = 0}, \textit{count of prior crimes = 1}, \textit{count of prior crimes between 2 to 3}, and \textit{count of prior crimes $>$ 3}. These four ranges are each encoded as binary features. Therefore, after preprocessing, these data contain 2 continuous features, 4 binary features, and one target. 

\textbf{ProPublica Recidivism (COMPAS-binary)}: We use the same discretized binary features of \textbf{compas} produced in \cite{HuRuSe2019} which are the following: \textit{sex = Female, age $<$ 21, age $<$ 23, age $<$ 26, age $<$ 46, juvenile felonies = 0, juvenile misdemeanors = 0, juvenile crimes = 0, priors = 0, priors = 1, priors = 2 to 3, priors $>$ 3.}

\textbf{Fair Isaac Credit Risk (FICO)}: We discretized each continuous variable by using a binary feature to encode a threshold between each pair of adjacent values.

\textbf{Fair Isaac Credit Risk (FICO-binary)}: We use the same discretized binary features of \textbf{compas} produced in \cite{HuRuSe2019} which are the following: \textit{External Risk Estimate $<$ 0.49 , External Risk Estimate $<$ 0.65, External Risk Estimate $<$ 0.80, Number of Satisfactory Trades $<$ 0.5, Trade Open Time $<$ 0.6, Trade Open Time $<$ 0.85, Trade Frequency $<$ 0.45, Trade Frequency $<$ 0.6, Delinquency $<$ 0.55, Delinquency $<$ 0.75, Installment $<$ 0.5, Installment $<$ 0.7, Inquiry $<$ 0.75, Revolving Balance $<$ 0.4, Revolving Balance $<$ 0.6, Utilization $<$ 0.6, Trade W. Balance $<$ 0.33.}

\textbf{Mobile Advertisement (coupon)}: We discretized each continuous variable by using a binary feature to encode a threshold between each pair of adjacent values. We discretized each categorical variable by using a binary feature to encode each observable categorical value.

\textbf{Mobile Advertisement (bar-7)}: In order to predict whether a customer is going to accept a coupon for a bar, we selected features \textit{age, passengers, bar, restaurant20to50, direction same}, and target. For features \textit{age, bar} and \textit{direction} we used the same encoding as we used for \textbf{coupon}.
We modified \textit{passengers} so that it is replaced with the binary feature \textit{passengers $>$ 0}. We modified \textit{restaurant20to50} so that it is replaced with the binary feature \textit{restaurant20to50=0}, which is 0 if the number of times that they eat at a restaurant with average expense less than \$20 per person is less than 4 times per month and 1 otherwise.

\textbf{Data Set Summary}: Table \ref{table:datasetsummary} presents a summary of the datasets.\\
\begin{table}[h]
\centering
\begin{tabular}{|c c c c c|} 
 \hline
 \textbf{Data Set} & \textbf{Samples} & \textbf{Categorical Features} & \textbf{Continuous Features} & \textbf{Encoded Binary Features} \\
 \hline\hline
 monk 1 & 124 & 11 & 0 & 11 \\ 
 \hline
 monk 2 & 169 & 11 & 0 & 11 \\
 \hline
 monk 3 & 122 & 11 & 0 & 11 \\
 \hline
 car evaluation & 1729 & 15 & 0 & 15 \\
 \hline
 balance scale & 625 & 16 & 0 & 16 \\
 \hline
 tic-tac-toe & 958 & 18 & 0 & 18 \\
 \hline
 iris & 151 & 0 & 4 & 123 \\
 \hline
 iris-setosa & 151 & 0 & 4 & 123 \\
 \hline
 iris-versicolor & 151 & 0 & 4 & 123 \\
 \hline
 iris-virginica & 151 & 0 & 4 & 123 \\
 \hline
 coupon & 12684 & 26 & 0 & 129 \\
 \hline
 bar-7 & 1913 & 5 & 0 & 14 \\
 \hline
 Four Class & 862 & 0 & 2 & 345 \\
 \hline
 COMPAS & 12382 & 2 & 20 & 647 \\
 \hline
 COMPAS-2016 & 5020 & 4 & 2 & 85 \\
 \hline
 COMPAS-binary & 6907 & 12 & 0 & 12 \\
 \hline
 FICO & 1000 & 0 & 23 & 1407 \\
 \hline
 FICO-binary & 10459 & 17 & 0 & 17 \\
 \hline

\end{tabular}
\caption{Comparison of different data sets (and their preprocessed derivatives). \textbf{Categorical Features} denote the number of features that are inherently categorical or represent discrete ranges of a continuous feature. These features generally have a low cardinality. \textbf{Continuous Features} denote the number of features that take on many integer or real number values. These features generally have a high cardinality. \textbf{Encoded Binary Features} denote the number of binary features required to encode all categorical and continuous values observed in the dataset. This is value is approximately the total cardinality of all categorical and continuous features. \label{table:datasetsummary}}
\end{table}

\subsection{Optimization Algorithms}

\textbf{CART}: We run CART as a reference point for what is achievable with a greedy algorithm that makes no optimality guarantee. The algorithm is run using the Python implementation from Sci-Kit Learn. The depth and leaf count are constrained in order to adjust the resulting tree size.

\textbf{BinOCT}: BinOCT is modified to run using only a single thread to make comparison across algorithms fair. This algorithm runs on the academic version of CPLEX. The depth is constrained to adjust the resulting tree size.

\textbf{DL8.5}: DL8.5 is implemented in C++ and is run as a native extension of the Python interpreter. The depth is constrained to adjust the resulting tree size.

Because BinOCT and DL8.5 have hard constraints, rather than OSDT or GOSDT's soft constraints, GOSDT and OSDT's optimization problem is substantially harder than that of BinOCT and DL8.5. GOSDT and OSDT effectively consider a large number of possible tree sizes whereas the other algorithms consider only full trees of a given depth.

\textbf{OSDT}: OSDT is implemented in Python and run directly. The regularization coefficient is varied to adjust the resulting tree size. 

\textbf{PyGOSDT}: PyGOSDT is an early Python implementation of GOSDT. The regularization coefficient is varied to adjust the resulting tree size.

\textbf{GOSDT}: GOSDT is implemented in C++ and run as a native executable. The regularization coefficient is varied to adjust the resulting tree size.

\subsection{Computing Infrastructure}
The experiments for optimizing rank statistics were run on a personal laptop with a 2.6GHz i5-7300U processor and 8GB of RAM.

All other experiments were run on a 2.30 GHz (30 MB L3 cache) Intel Xeon E7-4870 v2 processor with 60 cores across 4 NUMA nodes. We disabled hyper-threading. The server has 512 GB RAM uniformly distributed (128 GB each) across the four NUMA nodes. The host OS is Ubuntu 16.04.6 LTS. We set a 5-minute time limit on all experiments, unless otherwise stated. All algorithms that support multi-threading are modified to run sequentially.

\subsection{Experiments: Rank Statistics}
\textbf{Collection and Setup}:
We ran this experiment on the data set \textbf{FourClass}. We train models to optimize various objectives with 30 minute time limits. When the time limit is reached, our algorithm returns the current best tree considering the objectives. 

\textbf{Results}: Figure \ref{fig:fc_rank_statistics} shows the training ROC and test ROC of decision trees generated for six different objectives. Optimizing different objectives produces different trees with different $FP$ and $FN$. Some interesting observations are that the pAUC$_{\ch}$ model performs as well as the AUC$_{\ch}$ model on the left part of the ROC curve, but then sacrifices some area under the middle and right of the curve (which is not as relevant to its objective) to obtain a sparser model (sparsity is relevant to the objective). The  pAUC$_{\ch}$ and AUC$_{\ch}$ results illustrate how the objective allows us to trade off parts of the ROC curve (that are not important for some applications) with overall sparsity. Another interesting observation is that some of the models are extremely sparse: recall that each leaf is a single diagonal line on the ROC curve, so one can count the number of leaves by looking at the number of diagonal lines. In some cases, a well-chosen single split point can lead to a model with an excellent TPR/FPR tradeoff somewhere along the ROC curve.

\begin{figure}[h]
    \centering
    \includegraphics[scale=0.5]{figure/roc_train.png}
    \includegraphics[scale=0.5]{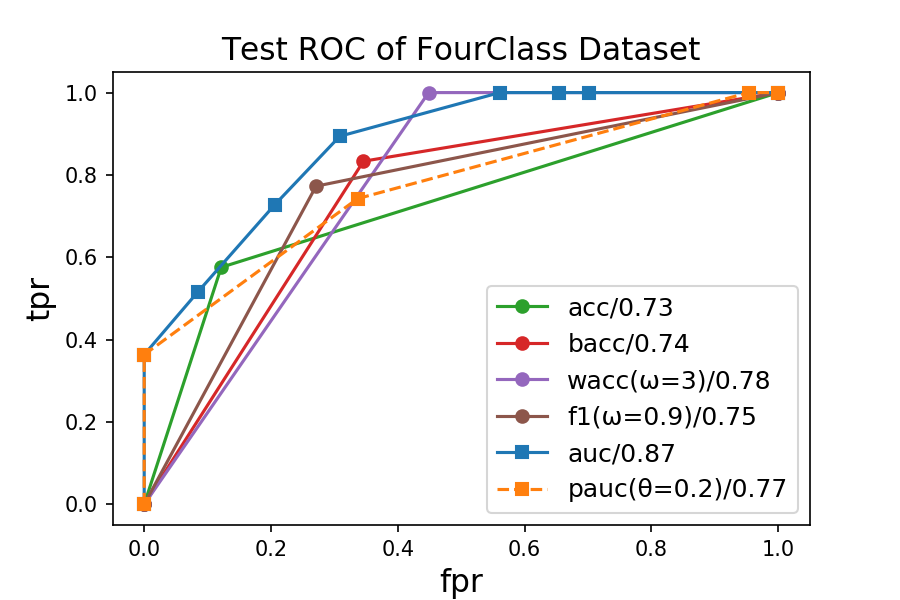}
    \caption{Training ROC and test ROC of FourClass dataset ($\lambda=0.01$). A/B in the legend at the bottom right of each subfigures shows the objective and its parameter/area under the ROC. }
    \label{fig:fc_rank_statistics}
\end{figure}

\subsection{Experiment: Accuracy vs Sparsity}
\label{exp:accspar}
\textbf{Collection and Setup}:
We ran this experiment on the 6 data sets \textbf{car evaluations, COMPAS-binary, tic-tac-toe, monk 1, monk 2}, and \textbf{monk 3}. For the monk data sets, we used only the samples from the training set. For each data set we train models using varying configurations (described in the following sections) to produce models with varying number of leaves. For any single configuration, we perform a 5-fold cross validation to measure training accuracy and test accuracy for each fold. All runs that exceed the time limit of 5 minutes are discarded.

\textit{We omit PyGOSDT since it differs only from GOSDT in program speed, and would provide no additional information for this experiment.}

Below are the configurations used for each algorithm:
\begin{itemize}
    \item \textit{CART} Configurations: We ran this algorithm with 6 different configurations: depth limits ranging from 1 to 6, and a corresponding maximum leaf limit of 2, 4, 8, 16, and 64.
    \item \textit{BinOCT} and \textit{DL8.5} Configurations: We ran these algorithms with 6 different configurations: depth limits ranging from 1 to 6.
    \item \textit{OSDT} and \textit{GOSDT} Configurations: We ran these algorithms with 29 different regularization coefficients: 0.2, 0.1, 0.09, 0.08, 0.07, 0.06, 0.05, 0.04, 0.03, 0.02, 0.01, 0.009, 0.008, 0.007, 0.006, 0.005, 0.004, 0.003, 0.002, 0.001, 0.0009, 0.0008, 0.0007, 0.0006, 0.0005, 0.0004, 0.0003, 0.0002, and 0.0001.
\end{itemize}

\textbf{Calculations}:
For each combination of data set, algorithm, and configuration, we produce a set of up to 5 models, depending on how many runs exceeded the time limit. We summarize the measurements (e.g., training accuracy, test accuracy, and number of leaves) across the set of up to 5 models by plotting the median. We compute the $25^{th}$ and $75^{th}$ percentile and show them as lower and upper error values respectively.

\textbf{Results}:
Figure \ref{fig:train_vs_leaf} shows that the objective optimized by GOSDT (same as OSDT) reliably produces a more efficient frontier between training accuracy and number of leaves.
Figure \ref{fig:test_vs_leaf} shows the same plots with test accuracy and number of leaves. The difference between frontiers sometimes becomes insignificant due to error introduced from generalization, particularly when the training accuracies between algorithms were close together. That is, if CART achieves a solution that is almost optimal, then it tends to achieve high test accuracy as well. \textit{Without methods like GOSDT or OSDT, one would not be able to determine whether CART's training solution is close to optimal for a given number of leaves.} Further, if the training accuracies of the different algorithms are different (e.g., as in the monk2 data), this difference is reflected in an improved test accuracy for OSDT or GOSDT.

\begin{figure}[h]
  \centering
  \includegraphics[scale=0.291]{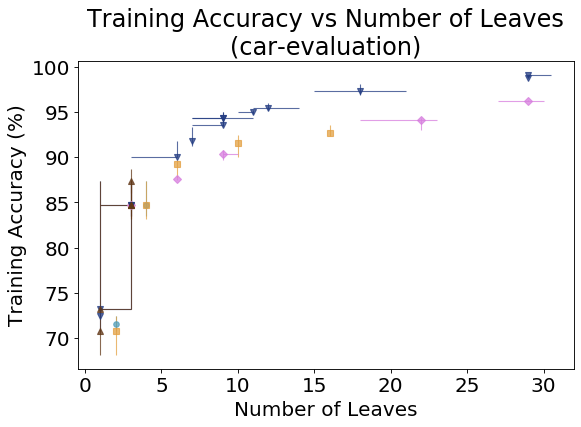}
  \includegraphics[scale=0.291]{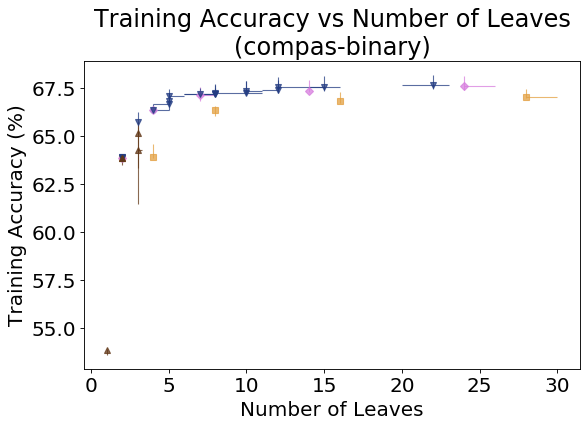}
  \includegraphics[scale=0.291]{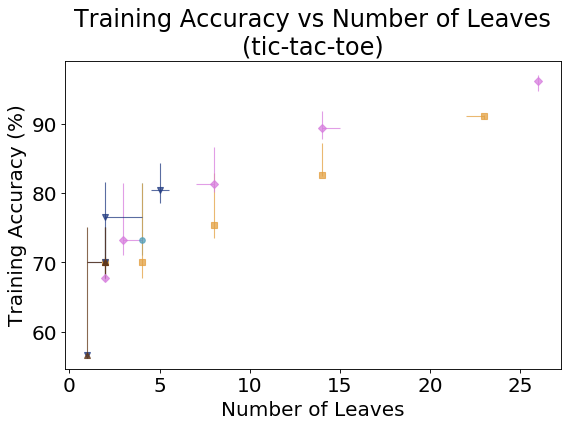}
  \includegraphics[scale=0.291]{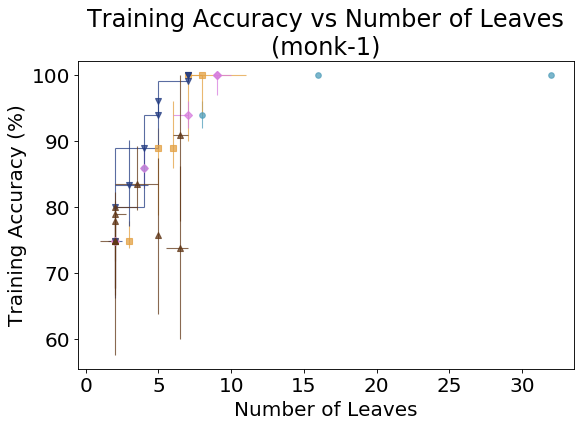}
  \includegraphics[scale=0.291]{figure/experiments/tradeoff/training_accuracy_vs_leaves_monk-2.png}
  \includegraphics[scale=0.291]{figure/experiments/tradeoff/training_accuracy_vs_leaves_monk-3.png}
  \caption{Training accuracy achieved by BinOCT, CART, DL8.5, GOSDT, and OSDT as a function of the number of leaves.  }
  \label{fig:train_vs_leaf}
\end{figure}

\begin{figure}[h]
  \centering
  \includegraphics[scale=0.291]{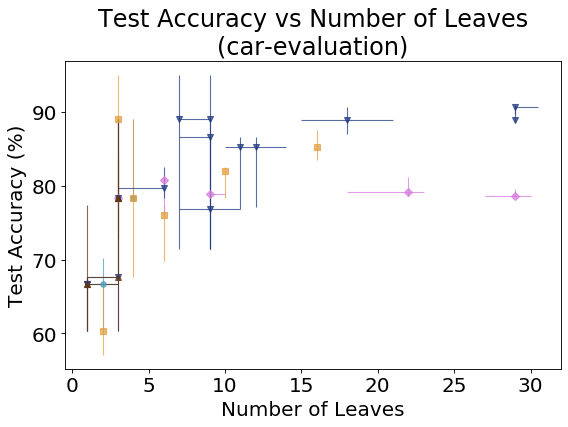}
  \includegraphics[scale=0.291]{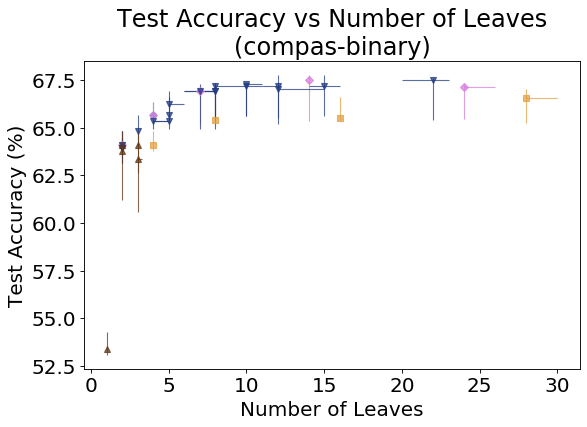}
  \includegraphics[scale=0.291]{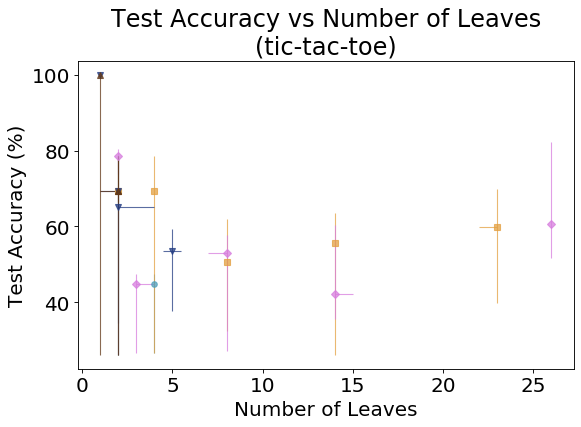}
  \includegraphics[scale=0.291]{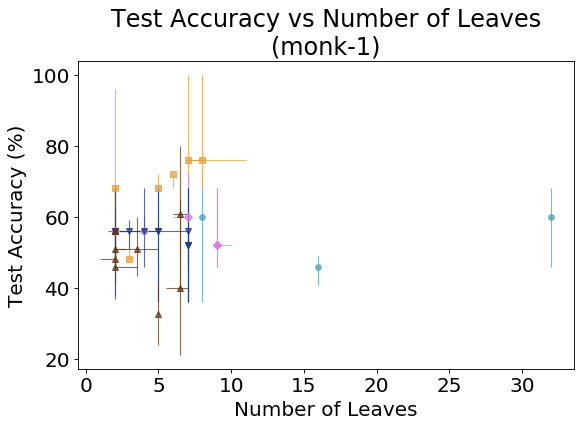}
  \includegraphics[scale=0.291]{figure/experiments/tradeoff/test_accuracy_vs_leaves_monk-2.png}
  \includegraphics[scale=0.291]{figure/experiments/tradeoff/test_accuracy_vs_leaves_monk-3.png}
  \caption{Test accuracy achieved by BinOCT, CART, DL8.5, GOSDT, and OSDT as a function of the number of leaves.
  \label{fig:test_vs_leaf}}
\end{figure}

\subsection{Experiment: Scalability}

\textbf{Collection and Setup}:
We ran this experiment on 4 data sets: \textbf{bar-7, compas-2016, compas}, and \textbf{fico}. The four data sets vary in the number of binary features required to fully represent their information. The number of binary features are, respectively, 14, 85, 647, and 1407. For each data set we show runtime as a function of both sample size and number of binary features used. 

Each data set is preprocessed so that categorical features produce one binary feature per unique value, and continuous features produce one binary feature per pair of adjacent values. The samples are then randomly shuffled. We measure run time on increasingly larger subsets of this data (with all binary features included), this is our measure of run time as a function of sample size. We also measure run time on increasingly larger numbers of binary features (with all samples included), this is our measure of run time as a function of binary features. For all experiments we continue increasing the difficulty until either the difficulty covers the full data set or a time limit of 5 minutes has been exceeded 3 times by the same algorithm.

Note that when varying the number of binary features, we include all samples. This means that adding a feature to a large data set (e.g., COMPAS and FICO) generally increases the difficulty more than adding a feature to a small data set (e.g., bar-7 and COMPAS-2016). Likewise, when varying the number of samples, we include all binary features. This means that adding a feature to a high-dimensional data set (e.g., COMPAS and FICO) generally increases the difficulty more than adding a sample to a low-dimensional data set (e.g., bar-7 and COMPAS-2016). As a result, the sample size is not a good measure of difficulty when comparing across different data sets of completely different features. The number of binary features is a more robust measure when comparing across different data sets.

Below are the configurations used for each algorithm tested:
\begin{itemize}
    \item \textit{CART} is configured to have a maximum depth of $\log_2(32)$ and a maximum leaf count of 32.
    \item \textit{BinOCT} and \textit{DL8.5} are configured to have a maximum depth of $\log_2(32)$.
    \item \textit{OSDT} and \textit{GOSDT} are configured with a regularization coefficient of $\frac{1}{32}$.
\end{itemize}

While we initially attempted to include BinOCT in this experiment, we were unable to find an instance where BinOCT reached completion with a maximum depth of $\log_2(32)$ and a time limit of 5 minutes. Consequently, BinOCT was not included in this experiment.

\textbf{Calculations}:
We provide two measures of speed. Training time measures the number of seconds required for an algorithm to complete with a certificate of optimality. Slow-down measures the ratio of the algorithm's training time against its fastest training time over values of problem difficulty.
We vary and measure problem difficulty in two separate ways. ``Number of binary features'' indicates how many of the binary features generated by our binary encoding were included for training. ``Number of samples'' indicates how many samples were included for training.

\textbf{Results}:
Figure \ref{fig:time_vs_feature} shows how each algorithm's training time varies as additional binary features are included. Figure \ref{fig:time_vs_sample} shows how each algorithm's training time varies as additional samples are included.

For \textit{bar-7} and \textit{compas-2016}, we observe a logarithmic time complexity when increasing sample size. These problems are sufficiently represented and solvable at a small sample size. As a result, additional samples contribute diminishing increase in the difficulty of the problem. Under these circumstances, GOSDT, PyGOSDT, and OSDT have a significant performance advantage over DL8.5.

For all data sets we observe an approximately factorial time complexity when increasing the number of features. This is consistent with the theoretical worst-case time complexity of full tree optimization (see Theorem \ref{thm:complexity}). The sharp increase in run time results in a limit on the size of problems solvable in practice by each algorithm. We observe that while all full tree optimization algorithms have such a limit, GOSDT usually has a higher limit than other algorithms.

Figure \ref{fig:slow_vs_feature} show how each algorithm's relative slow-down varies with additional binary features. Figure \ref{fig:slow_vs_sample} show how each algorithm's relative slow-down varies with additional samples. This reduces the effects of constant overhead, showing the asymptotic behavior of each algorithm. Our observations from Figure \ref{fig:time_vs_feature} and Figure \ref{fig:time_vs_sample} still hold under this analysis. Additionally, we observe that the slow-down of GOSDT and PyGOSDT under the \textit{bar-7} data set appears to become approximately constant; this is likely a result of additional samples belonging to already-present equivalence classes (the set of equivalence classes saturates). Recall that both PyGOSDT and GOSDT reduce the data set size to only the equivalence classes that are present in the data set, and thus scale in this quantity rather than the number of samples.

Overall, we observe that both GOSDT, PyGOSDT and OSDT have an advantage over \textit{DL8.5} which becomes increasingly clearer as we test on data sets of greater difficulty. GOSDT and OSDT appears to perform better than PyGOSDT, with GOSDT having a slight advantage over OSDT on larger data sets.

Previous comparisons do not account for differences in implementation language. We observe that that GOSDT is several orders of magnitude faster and more scalable than DL8.5, both of which are implemented in C++. However, PyGOSDT is not quite as performant as OSDT, both of which are implemented in Python. This suggests, for data sets similar to the ones in this experiment, there are advantageous characteristics of OSDT that are worth further exploration for extensions of GOSDT.

\begin{figure}[h]
\centering
\subfloat[Training Time vs Number of Features (Full Scale)]{
    \begin{minipage}[b]{0.99\linewidth}
        \centering
        \scalebox{1.0}{
            \includegraphics[scale=0.22]{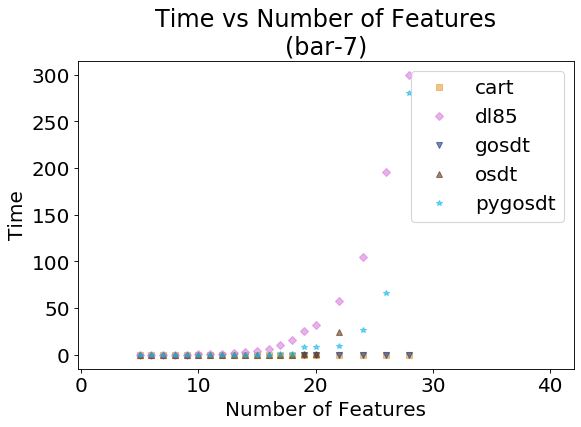}
            \includegraphics[scale=0.22]{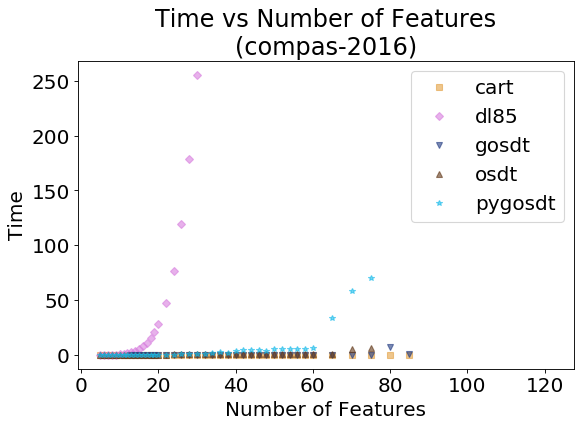}
            \includegraphics[scale=0.22]{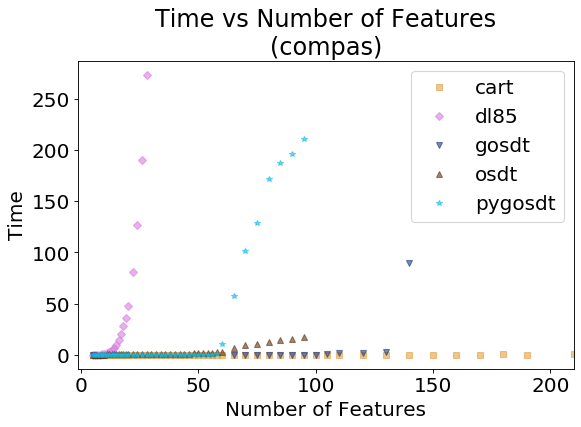}
            \includegraphics[scale=0.22]{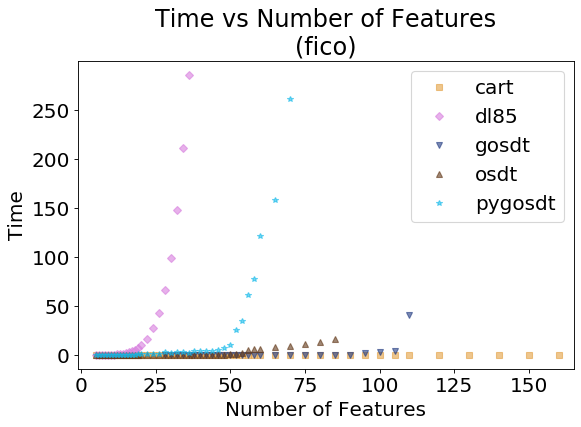}
        }
    \end{minipage}
}
\quad
\subfloat[Training Time vs Number of Features (Zoomed In)]{
    \begin{minipage}[b]{0.99\linewidth}
    \centering
        \scalebox{1.0}{
            \includegraphics[scale=0.22]{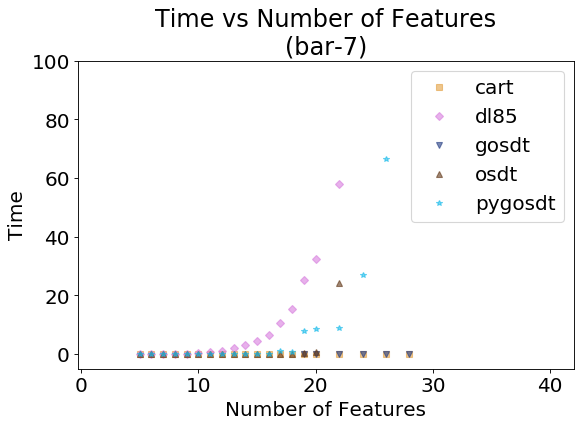}
            \includegraphics[scale=0.22]{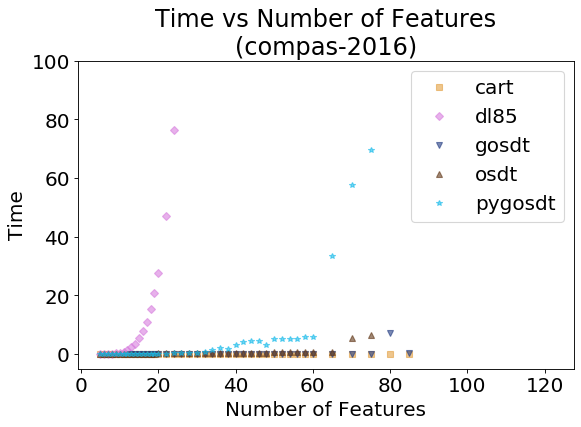}
            \includegraphics[scale=0.22]{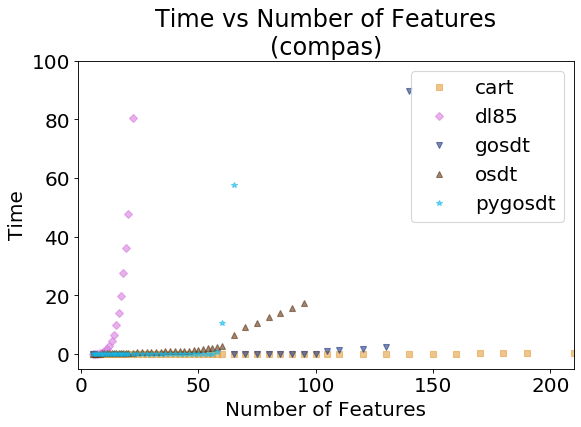}
            \includegraphics[scale=0.22]{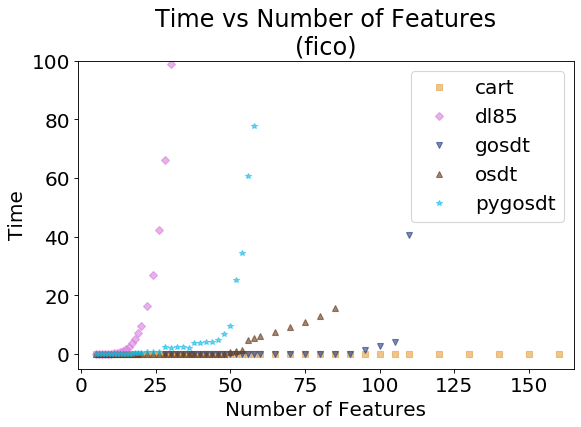}
        }
    \end{minipage}
}
\caption{
    Time required to reach optimality  (or to finish tree construction for non-optimal methods) for BinOCT, CART, DL8.5, GOSDT (C++), PyGOSDT (Python) and OSDT as a function of the number of binary features used to encode the continuous dataset ($\lambda$ = 0.3125 or max depth = 5).
}
\label{fig:time_vs_feature}
\end{figure}

\begin{figure}[h]
\centering
\subfloat[Training Time vs Number of Samples (Full Scale)]{
    \begin{minipage}[b]{0.99\linewidth}
        \centering
        \scalebox{1.0}{
            \includegraphics[scale=0.22]{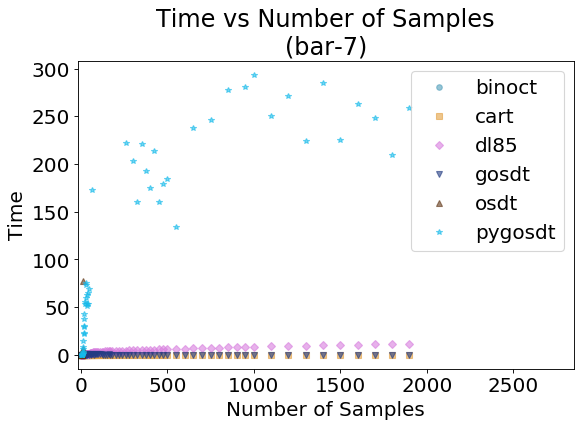}
            \includegraphics[scale=0.22]{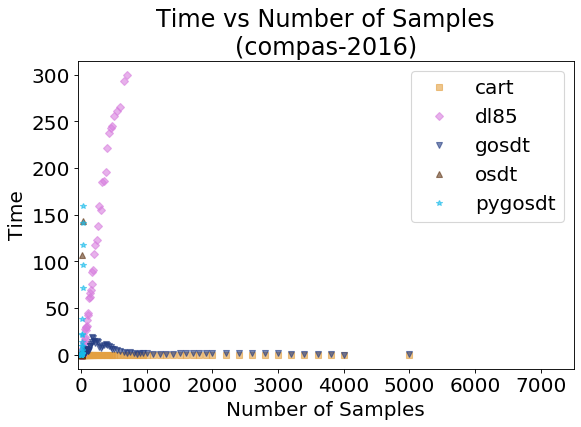}
            \includegraphics[scale=0.22]{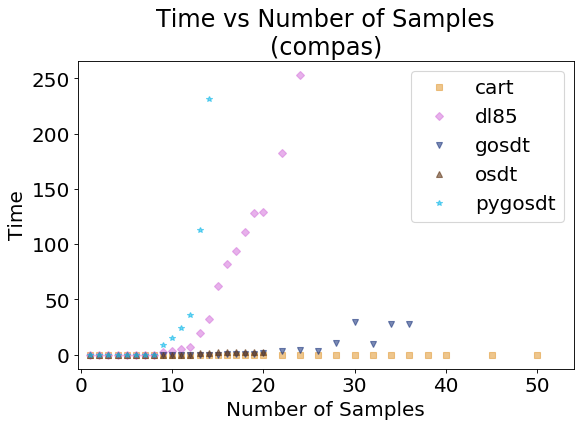}
            \includegraphics[scale=0.22]{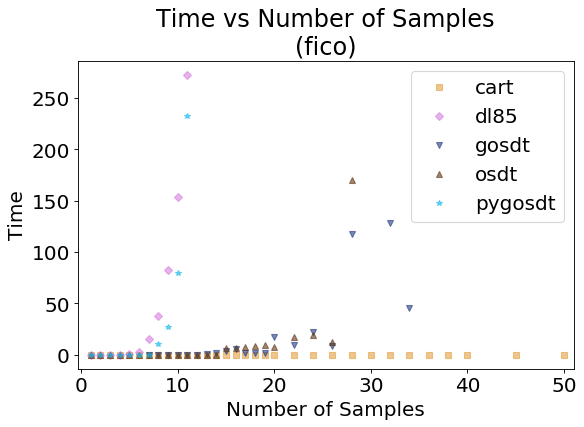}
        }
    \end{minipage}
}
\quad
\subfloat[Training Time vs Number of Samples (Zoomed In)]{
    \begin{minipage}[b]{0.99\linewidth}
    \centering
        \scalebox{1.0}{
            \includegraphics[scale=0.22]{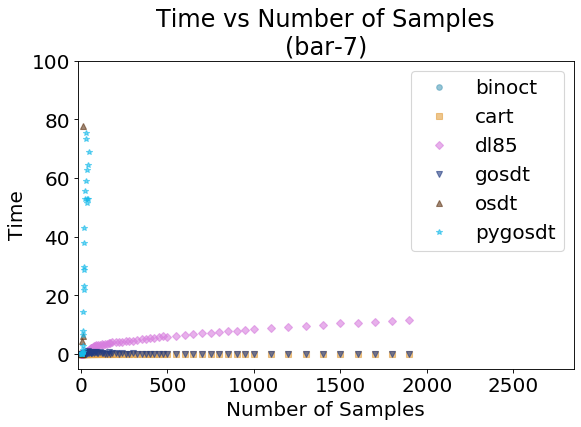}
            \includegraphics[scale=0.22]{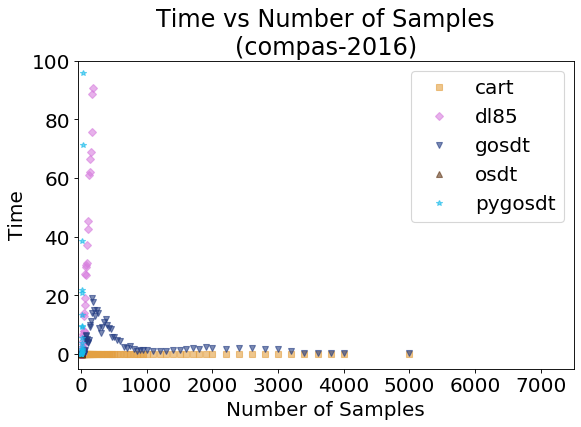}
            \includegraphics[scale=0.22]{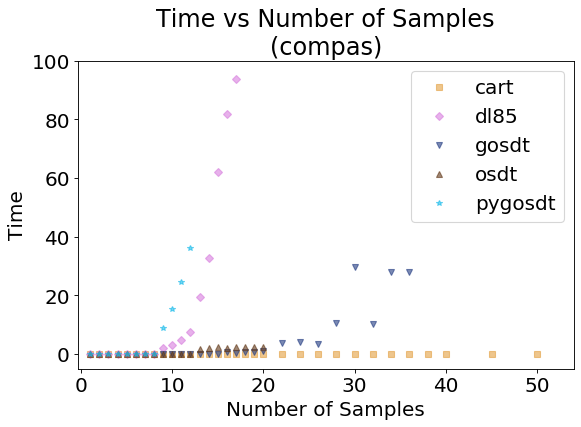}
            \includegraphics[scale=0.22]{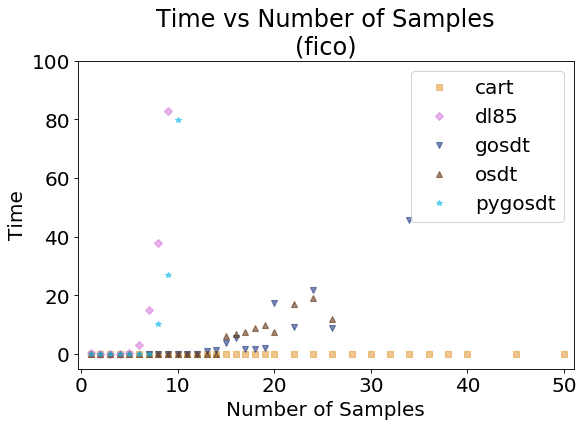}
        }
    \end{minipage}
}
\caption{
    Time required to reach optimality (or to finish tree construction for non-optimal methods) for BinOCT, CART, DL8.5, GOSDT (C++), PyGOSDT (Python) and OSDT as a function of the number of samples taken from the continuous dataset ($\lambda$ = 0.3125 or max depth = 5).
}
\label{fig:time_vs_sample}
\end{figure}

\begin{figure}[h]
\centering
\subfloat[Slow-Down vs Number of Features (Full Scale)]{
    \begin{minipage}[b]{0.99\linewidth}
        \centering
        \scalebox{1.0}{
            \includegraphics[scale=0.21]{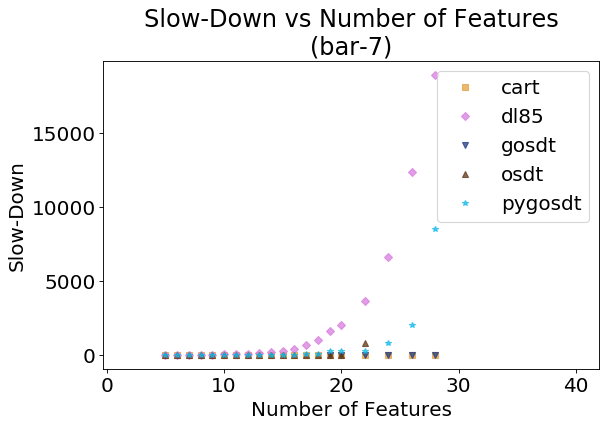}
            \includegraphics[scale=0.21]{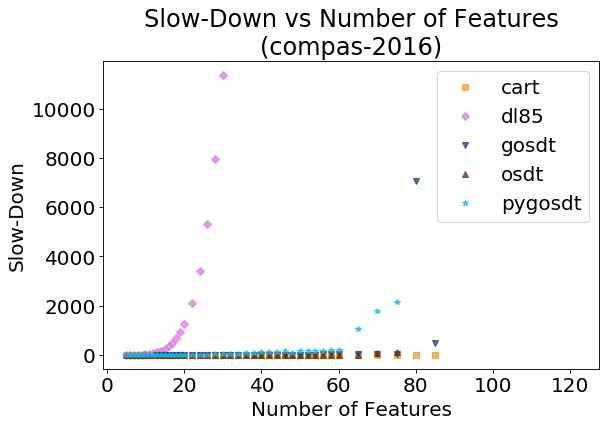}
            \includegraphics[scale=0.21]{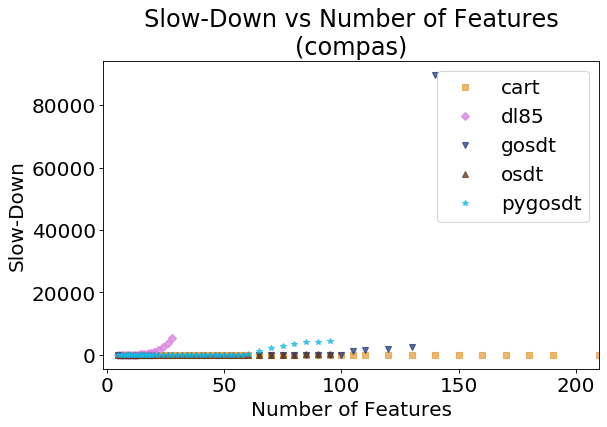}
            \includegraphics[scale=0.21]{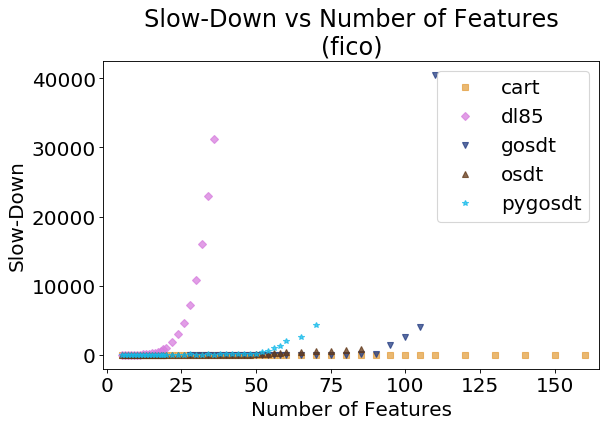}
        }
    \end{minipage}
}
\quad
\subfloat[Slow-Down vs Number of Features (Zoomed In)]{
    \begin{minipage}[b]{0.99\linewidth}
    \centering
        \scalebox{1.0}{
            \includegraphics[scale=0.22]{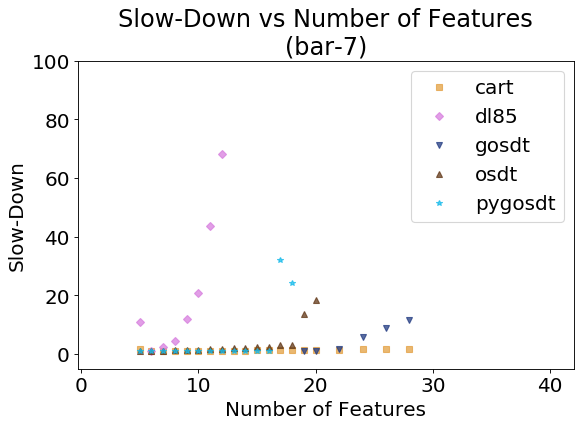}
            \includegraphics[scale=0.22]{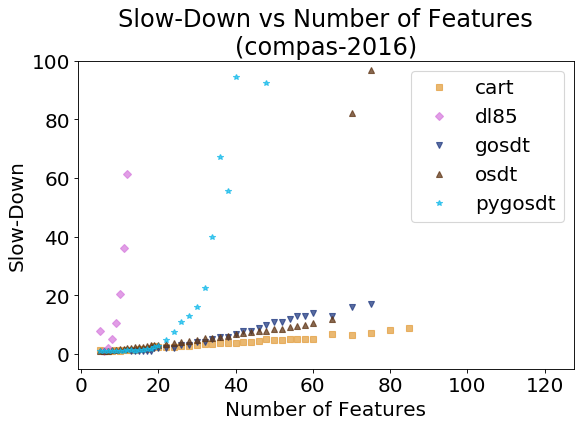}
            \includegraphics[scale=0.22]{figure/experiments/scalability_features/slow-down_number_of_features_compas_small.png}
            \includegraphics[scale=0.22]{figure/experiments/scalability_features/slow-down_number_of_features_fico_small.png}
        }
    \end{minipage}
}
\caption{
    Slow-down experienced by BinOCT, CART, DL8.5, GOSDT (C++), PyGOSDT (Python) and OSDT as a function of the number of binary features used to encode the continuous dataset ($\lambda$ = 0.3125 or max depth = 5).
}
\label{fig:slow_vs_feature}
\end{figure}

\begin{figure}[h]
\centering
\subfloat[Slow-Down vs Number of Samples (Full Scale)]{
    \begin{minipage}[b]{0.99\linewidth}
        \centering
        \scalebox{1.0}{
            \includegraphics[scale=0.21]{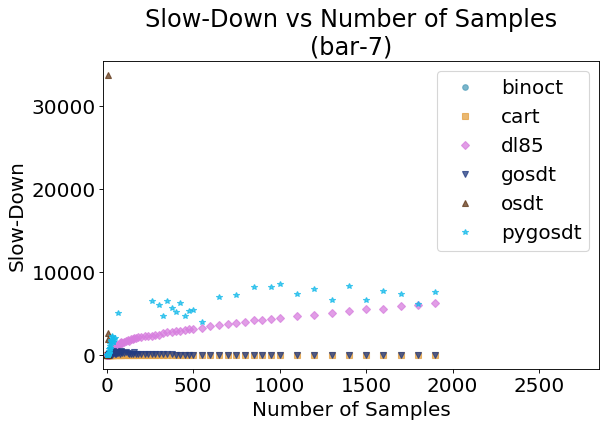}
            \includegraphics[scale=0.21]{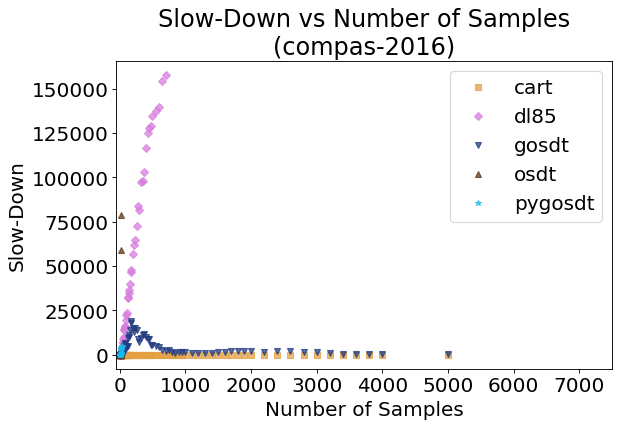}
            \includegraphics[scale=0.21]{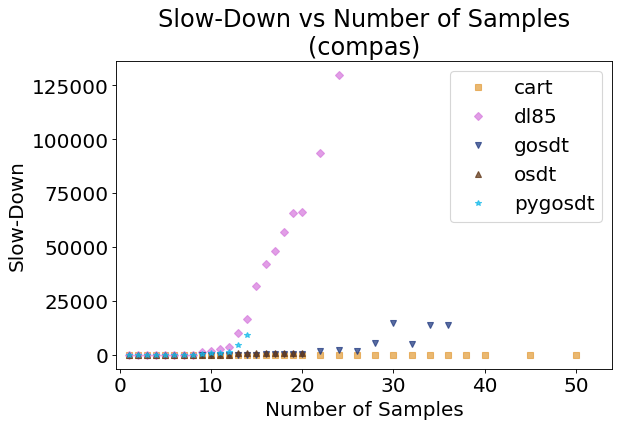}
            \includegraphics[scale=0.21]{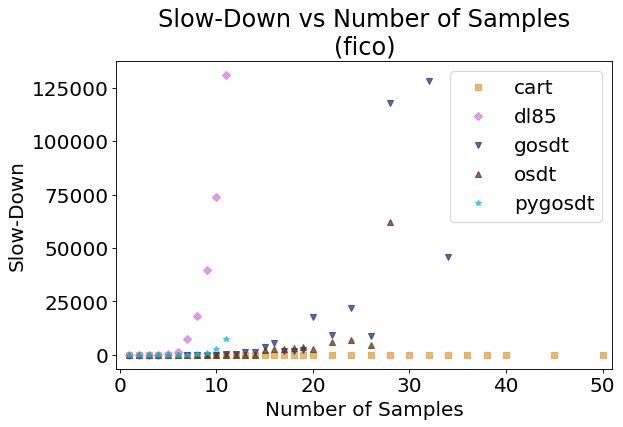}
        }
    \end{minipage}
}
\quad
\subfloat[Slow-Down vs Number of Samples (Zoomed In)]{
    \begin{minipage}[b]{0.99\linewidth}
    \centering
        \scalebox{1.0}{
            \includegraphics[scale=0.22]{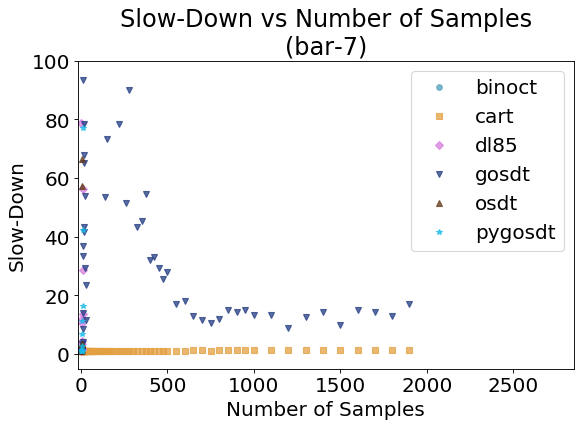}
            \includegraphics[scale=0.22]{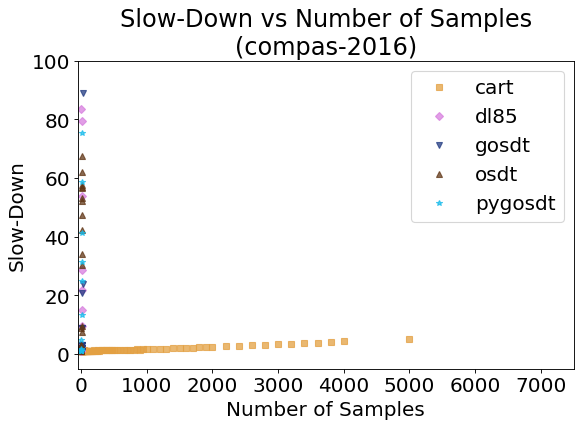}
            \includegraphics[scale=0.22]{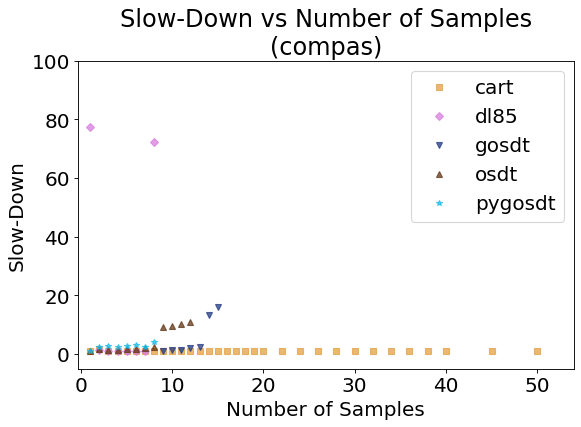}
            \includegraphics[scale=0.22]{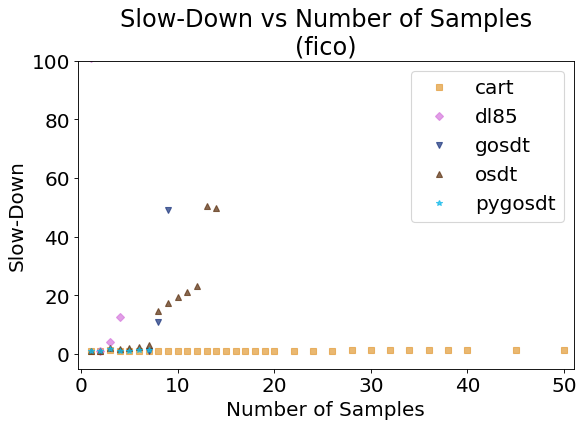}
        }
    \end{minipage}
}
\caption{
    Slow-down experienced by BinOCT, CART, DL8.5, GOSDT (C++), PyGOSDT (Python) and OSDT as a function of the number of samples taken from the continuous dataset ($\lambda$ = 0.3125 or max depth = 5).
}
\label{fig:slow_vs_sample}
\end{figure}

\subsection{Experiment: Time to Optimality}

\textbf{Collection and Setup}:
We ran this experiment on 4 data sets: \textbf{bar-7, tic-tac-toe, car-evaluation, compas-binary, fico-binary, monk-1, monk-2}, and \textbf{monk-3}. For each experiment, we run \textit{OSDT, GOSDT}, and \textit{PyGOSDT} with a regularization coefficient of 0.005. For each run we track the progress of the algorithm by plotting the minimum objective score seen so far. Once the algorithm terminates or reaches a time limit of 30 minutes, the values are stored in a file.

\textbf{Results}:
Figure \ref{fig:time_to_optimality} shows the different behaviors between GOSDT, PyGOSDT, and OSDT. In general, both PyGOSDT and GOSDT complete their certificate of optimality earlier than OSDT.

Note that PyGOSDT's implementation does not include high-priority bound updates. This causes PyGOSDT to maintain a higher objective score before making a sharp drop upon completion (with a certificate of optimality). GOSDT, on the other hand, behaves similarly to OSDT because both algorithms aggressively prioritize lowering the best observed objective score. We observe that under the tic-tac-toe data set this appears to be less advantageous. While PyGOSDT's progress initially appears less promising, it completed remarkably faster than both GOSDT and OSDT. This suggests that optimal prioritization is dependent on specifics of the optimization problem.

\begin{figure}[h]
  \centering
  \includegraphics[scale=0.22]{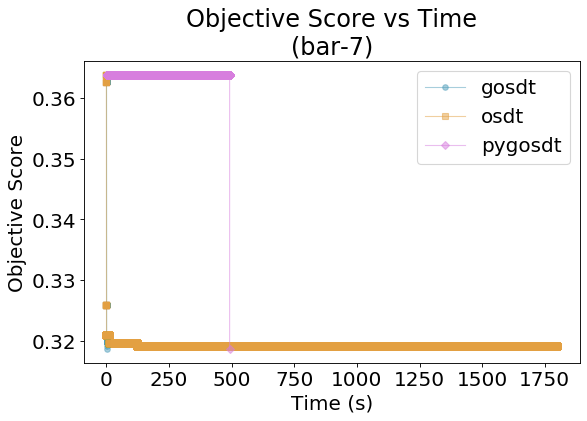}
  \includegraphics[scale=0.22]{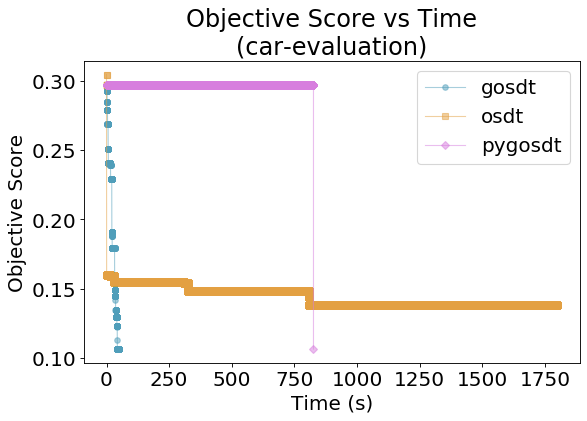}
  \includegraphics[scale=0.22]{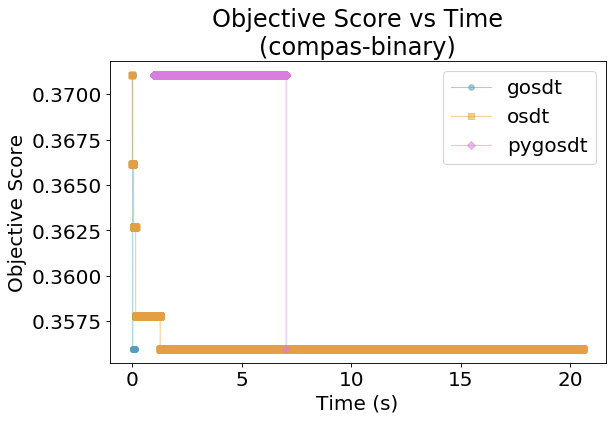}
  \includegraphics[scale=0.22]{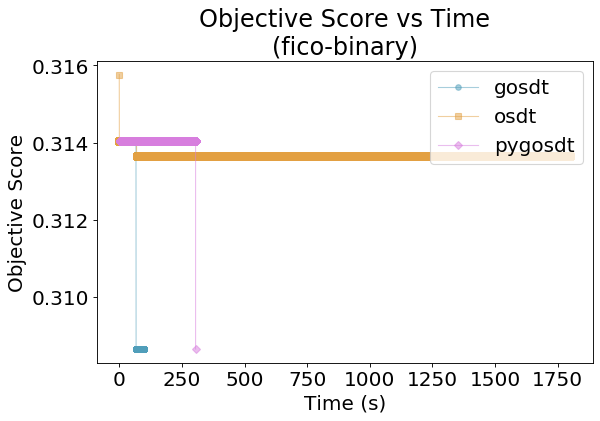}
  \includegraphics[scale=0.22]{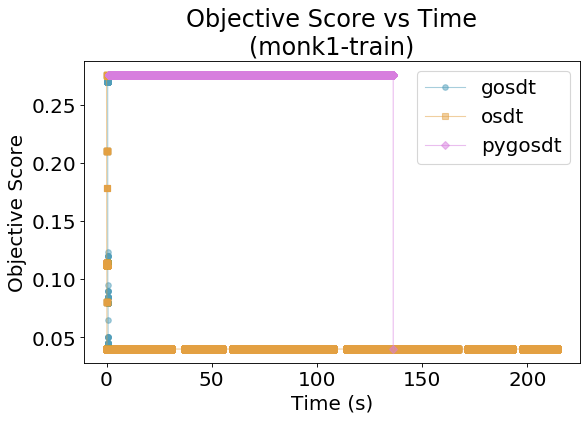}
  \includegraphics[scale=0.22]{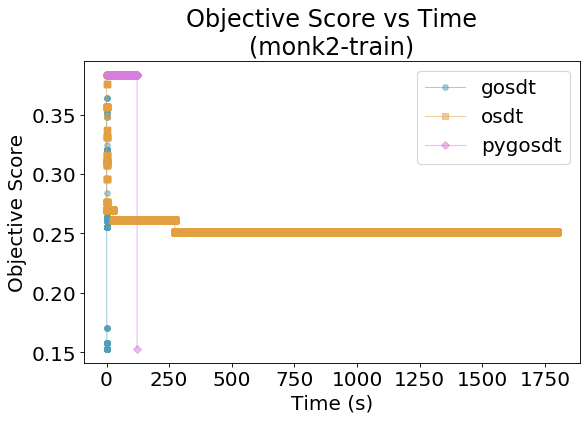}
  \includegraphics[scale=0.22]{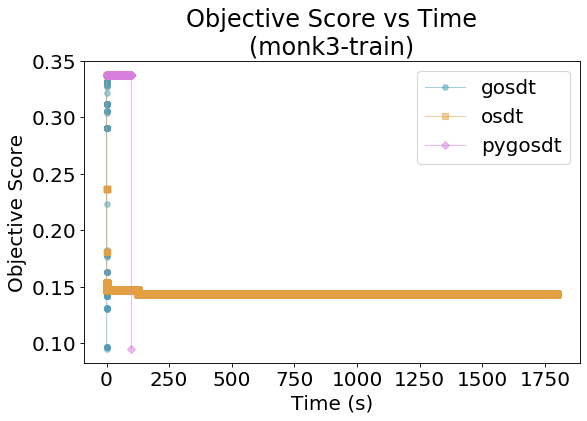}
  \includegraphics[scale=0.22]{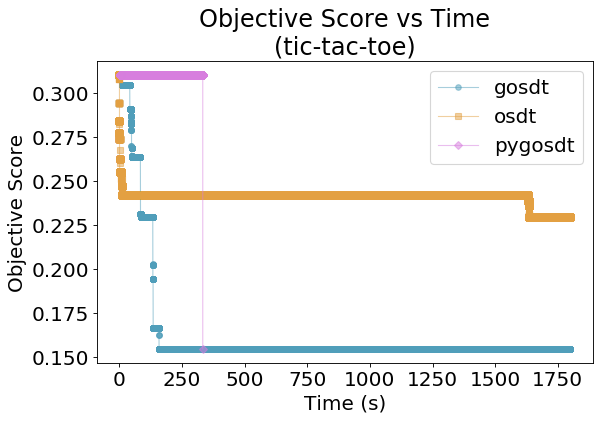}
  \caption{Best objective score of OSDT, GOSDT, and PyGOSDT over the course of their run time. ($\lambda$ = 0.005)
  \label{fig:time_to_optimality}}
\end{figure}

\subsection{Optimal Trees}
We present some of the trees that achieved the peak median accuracy from Section \ref{exp:accspar}. Figure \ref{fig:monk1tree} shows a comparison between the results of training BinOCT (a) and GOSDT (b) on the Monk 1 data set. GOSDT is able to produce a model with 20\% higher accuracy than BinOCT even though both trees have 8 leaves.
Figure \ref{fig:monk2tree} shows a comparison between DL8.5 (a) and GOSDT (b) on the Monk 2 data set. GOSDT is able to produce a model with 3\% higher accuracy than DL8.5 even though both trees have 7 leaves. Figure \ref{fig:tictactoetree} shows a comparison between BinOCT (a), DL8.5 (b), and GOSDT (c) on the Tic-Tac-Toe data set. GOSDT is able to produce a model with higher accuracy than both BinOCT and DL8.5 when all trees have 16 leaves.

\textbf{Comparison to True Model}:
For the results shown in Figure \ref{fig:monk1tree}, we know that the true model used to generate the data in Monk 1 is a set of logical rules:
\[
\begin{aligned}
class = (jacket = red) \lor (head = body).
\end{aligned}
\]
The data set we train on does not encode binary features for equality between two features (e.g., $head = body$) and categorical variables $head$ and $body$ are only encoded using $k-1$ binary rules (this means one value from each categorical variable will be expressed with a negation of all other values). Altogether, this means our encoding forces the true model to instead be expressed as the following:
\[
\begin{aligned}
class = & (jacket = red) \\
& \lor (head = round \land body = round) \\
& \lor (head = square \land body = square) \\
& \lor (head \neq round \land head \neq square \land body \neq round \land body \neq square) \\
\end{aligned}
\]

We can interpret the trees produced by BinOCT as the following set of logical rules:
\[
\begin{aligned}
class = & (jacket = red \land head = round ) \\
& \lor (jacket \neq red \land head = round \land body = round) \\
& \lor (jacket = red \land head \neq round \land body = round ) \\ 
& \lor (jacket \neq green \land head \neq round \land body \neq round).
& \end{aligned}
\]

We can interpret the trees produced by GOSDT as the following set of logical rules:
\[
\begin{aligned}
class = & (jacket = red) \\
& \lor (head = round \land body = round) \\
& \lor (head = square \land body = square) \\
& \lor (head \neq round \land head \neq square \land body \neq round \land body \neq square). \\
\end{aligned}
\]
In this instance, BinOCT produces a model that is similar to the true model but has a few mismatches. This is mainly due to the structural constraints of BinOCT.
GOSDT, after exploring a larger space while still penalizing complexity, is able to produce a model that perfectly matches with the ground truth.

\label{app:Trees}

\begin{figure}[h]
\centering
\subfloat[BinOCT (training accuracy: 90.9\%, test accuracy: 84.0\%)]{
\begin{minipage}[b]{0.49\textwidth}
  \centering
  \scalebox{0.70}{
    \begin{forest}
    [ {$head = round$}
        [ {$jacket = red$},edge label={node[midway, above, font=\small] {True}}
            [ {$head = round$} [ $Yes$ ] [ $Yes$ ] ] 
            [ {$body = round$} [ $Yes$ ] [ $No$ ]  ] 
        ]
        [ {$body = round$},edge label={node[midway, above, font=\small] {False}}
            [ {$jacket = red$} [ $Yes$ ] [ $No$ ] ] 
            [ {$jacket = green$} [ $No$ ] [ $Yes$ ]  ]
        ]
    ]
    \end{forest}}
\end{minipage}}
\subfloat[GOSDT (training accuracy: 100\%, test accuracy: 100\%)]{
\begin{minipage}[b]{0.49\textwidth}
  \centering
    \scalebox{0.75}{
    \begin{forest}
[ {$jacket = red$} 
    [ $Yes$, edge label={node[midway,above, font=\small] {True}}  ] 
    [ {$head = round$},edge label={node[midway,above, fill=white, font=\small] {False}}  
        [ {$body = round$} [ $Yes$ ] [ $No$ ] ]
        [ {$head = square$} 
            [ {$body = square$} [ $Yes$ ] [ $No$ ] ]
            [ {$body = round$} 
                [ $No$ ]
                [ {$body = square$} [ $No$ ] [ $Yes$ ] ] 
            ]
        ]
    ] 
]
    \end{forest}}
\end{minipage}}
\caption{Eight-leaf decision trees generated by BinOCT and GOSDT on the Monk1 dataset. The tree generated by BinOCT includes two useless splits (the head=round in the bottom left), while GOSDT avoids this problem. BinOCT is 91\% accurate, GOSDT is 100\% accurate.}
\label{fig:monk1tree}
\end{figure}
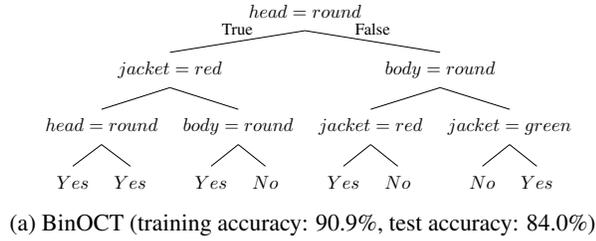
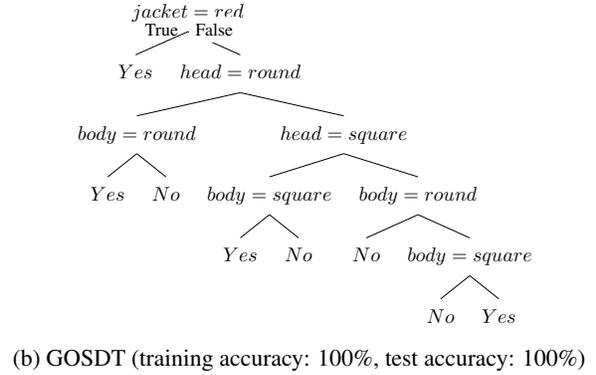

\begin{figure}[h]
\centering
\subfloat[DL8.5 (training accuracy: 76.3\%, test accuracy: 70.6\%)]{
\begin{minipage}[b]{0.49\textwidth}
  \centering
  \scalebox{0.75}{
    \begin{forest}
    [ {$jacket = red$}
        [ {$smiling = yes$},edge label={node[midway, above] {True}}
            [ $No$ ]
            [ {$holding = sword$} [ $No$ ] [ $Yes$ ] ] 
        ]
        [ {$bowtie = yes$},edge label={node[midway, above] {False}}
            [ {$body = square$} [ $Yes$ ] [ $No$ ] ]
            [ {$smiling = yes$} [ $Yes$ ] [ $No$ ] ]
        ]
    ]
    \end{forest}}
\end{minipage}}
\subfloat[GOSDT (training accuracy: 79.3\%, test accuracy: 73.5\%)]{
\begin{minipage}[b]{0.49\textwidth}
  \centering
    \scalebox{0.75}{
    \begin{forest}
    [ {$holding = sword$}
        [ $No$,edge label={node[midway, above] {True}} ]
        [ {$smiling = yes$},edge label={node[midway, above] {False}}
            [ {$jacket = red$}
                [ $No$ ]
                [ {$head = round$}
                    [ {$bowtie = yes$} [ $No$ ] [ $Yes$ ] ]
                    [ $Yes$ ]
                ]
            ]
            [ {$jacket = red$} [ $Yes$ ] [ $No$ ] ]
        ]
    ]
    \end{forest}}
\end{minipage}}
\caption{Seven-leaf decision trees generated by DL8.5 and GOSDT on the Monk2 dataset. With the same number of leaves, DL8.5 is 76.3\% accurate, GOSDT is 79.3\% accurate. 
}
\label{fig:monk2tree}
\end{figure}
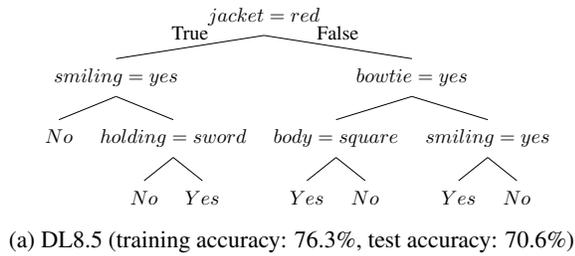
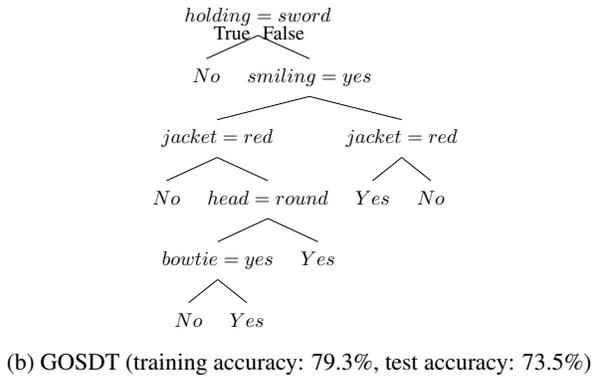

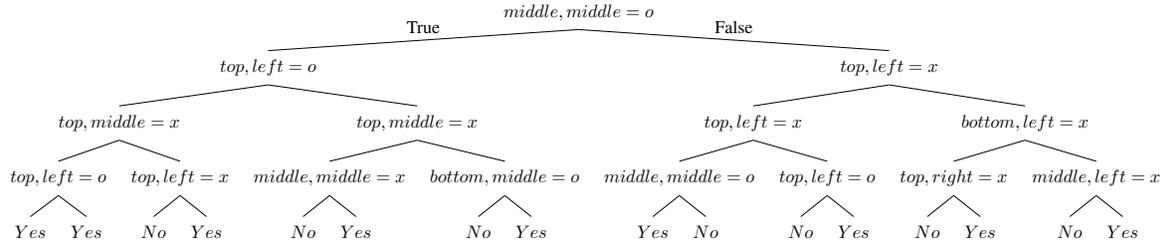
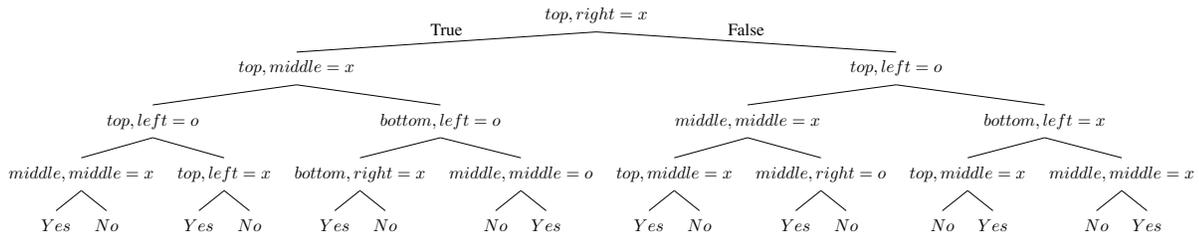
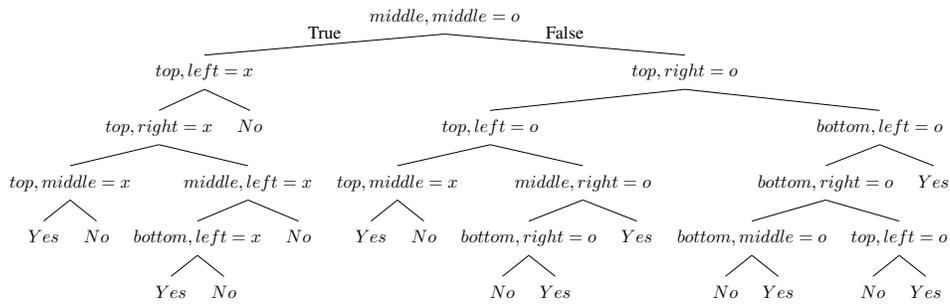
\begin{figure}[h]
\centering
\subfloat[BinOCT (training accuracy: 82.4\%, test accuracy 34.4\%)]{
\begin{minipage}[b]{0.99\linewidth}
\centering
\scalebox{0.68}{
\begin{forest}
[ {$middle,middle = o$}
    [ {$top,left = o$},edge label={node[midway, above] {True}}
        [ {$top,middle = x$}
            [ {$top,left = o$} [ $Yes$ ] [ $Yes$ ] ]
            [ {$top,left = x$} [ $No$ ] [ $Yes$ ] ]
        ]
        [ {$top,middle = x$}
            [ {$middle,middle = x$} [ $No$ ] [ $Yes$ ] ]
            [ {$bottom,middle = o$} [ $No$ ] [ $Yes$ ] ]
        ]
    ]
    [ {$top,left = x$},edge label={node[midway, above] {False}}
        [ {$top,left = x$}
            [ {$middle,middle = o$} [ $Yes$ ] [ $No$ ] ]
            [ {$top,left = o$} [ $No$ ] [ $Yes$ ] ]
        ]
        [ {$bottom,left = x$}
            [ {$top,right = x$} [ $No$ ] [ $Yes$ ] ]
            [ {$middle,left = x$} [ $No$ ] [ $Yes$ ] ]
        ]
    ]
]
\end{forest}}
\end{minipage}
}
\quad
\subfloat[DL8.5 (training accuracy: 86.9\%, test accuracy: 61.5\%)]{
\begin{minipage}[b]{0.99\linewidth}
\scalebox{0.65}{
\begin{forest}
[ {$top,right = x$}
    [ {$top,middle = x$},edge label={node[midway, above] {True}}
        [ {$top,left = o$}
            [ {$middle,middle = x$} [ $Yes$ ] [ $No$ ] ]
            [ {$top,left = x$} [ $Yes$ ] [ $No$ ] ]
        ]
        [ {$bottom,left = o$}
            [ {$bottom,right = x$} [ $Yes$ ] [ $No$ ] ]
            [ {$middle,middle = o$} [ $No$] [ $Yes$ ] ]
        ]
    ]
    [ {$top,left = o$},edge label={node[midway, above] {False}}
        [ {$middle,middle = x$}
            [ {$top,middle = x$} [ $Yes$ ] [ $No$ ] ]
            [ {$middle,right = o$} [ $Yes$ ] [ $No$ ] ]
        ]
        [ {$bottom,left = x$}
            [ {$top,middle = x$} [ $No$] [ $Yes$ ] ]
            [ {$middle,middle = x$}  [ $No$] [ $Yes$ ] ]
        ]
    ]
]
\end{forest}}
\end{minipage}
}
\quad
\subfloat[GOSDT (training accuracy: 90.9\%, test accuracy: 70.8\%)]{
\begin{minipage}[b]{0.99\linewidth}
\centering
\scalebox{0.68}{
\begin{forest}
[ {$middle,middle = o$}
    [ {$top,left = x$}, edge label={node[midway, above] {True}}
        [ {$top,right = x$}
            [ {$top,middle = x$} [ $Yes$ ] [ $No$ ] ]
            [ {$middle,left = x$}
                [ {$bottom,left = x$} [ $Yes$ ] [ $No$ ] ]
                [ $No$ ]
            ]
        ]
        [ $No$ ]
    ]
    [ {$top,right = o$}, edge label={node[midway, above] {False}}
        [ {$top,left = o$}
            [ {$top,middle = x$} [ $Yes$ ] [ $No$ ] ]
            [ {$middle,right = o$}
                [ {$bottom,right = o$} [ $No$ ] [ $Yes$ ] ]
                [ $Yes$ ]
            ]
        ]
        [ {$bottom,left = o$}
            [ {$bottom,right = o$}
                [ {$bottom,middle = o$} [ $No$ ] [ $Yes$ ] ]
                [ {$top,left = o$} [ $No$ ] [ $Yes$ ] ]
            ]
            [ $Yes$ ]
        ]
    ]
]
\end{forest}}
\end{minipage}
}
\caption{16-leaf decision trees generated by BinOCT, DL8.5, and GOSDT on the tic-tac-toe dataset. The tree generated by BinOCT includes some useless splits such as top,left=o on the bottom left and middle,middle=o near the center of the bottom row. These extra splits repeat earlier decisions from the tree, so they are clearly useless and lead to at least one empty leaf. DL8.5 also prefers to generate complete binary trees. GOSDT is more effective in generating sparse trees. With the same number of leaves, BinOCT is 82.4\% accurate, DL8.5 is 86.9\% accurate, and GOSDT is 90.9\% accurate. }
\label{fig:tictactoetree}
\end{figure}

\subsection{Summary of Experimental Results}
\textbf{Experiment G.5} shows that the new set of objective functions allows GOSDT to produce trees with a more efficient ROC curve than the standard accuracy objective assumed by other algorithms.

\textbf{Experiment G.6} shows that the regularized risk objective used by OSDT and GOSDT produces the most efficient training accuracy vs$.$ sparsity frontier. When placed under time constraints, GOSDT is able to produce more of the highly accurate models along this frontier than OSDT.

\textbf{Experiment G.7} shows that GOSDT is able to handle significantly more binary features than BinOCT, DL8.5, and, to a lesser extent, OSDT. Since binary features are used to encode thresholds over continuous features, GOSDT is able to handle continuous datasets of higher cardinality compared to other aforementioned methods.

\textbf{Experiment G.8} shows that GOSDT outpaces OSDT and PyGOSDT when it comes to reducing the optimality gap, this allows it to terminate with stronger optimality guarantees in the event of a premature termination.

\textbf{Experiment G.9} shows that optimizing an efficient training accuracy vs$.$ sparsity frontier allows GOSDT to more accurately capture the ground truth compared to BinOCT when subject to the same sparsity constraints.

To summarize, we began this experimental section by showing the benefits of optimizing more sophisticated objective functions. We then showed the benefits of a more efficient algorithm to support these objectives. Finally, we closed this section by combining these two elements to produce provably optimal and interpretable models and showcase their advantages.

\section{Algorithm}\label{App:algorithm}
In addition to the main GOSDT algorithm (Algorithm \ref{alg:gosdt_summary_app}), we present the subroutines \textit{get\_lower\_bound} (Algorithm \ref{alg:lowerbound}), \textit{get\_upper\_bound} (Algorithm \ref{alg:upperbound}), \textit{fails\_bound} (Algorithm \ref{alg:failsbound}), and \textit{split} (Algorithm \ref{alg:split}) used during optimization. We also present an extraction algorithm (Algorithm \ref{alg:extraction}) used to construct the optimal tree from the dependency graph once the main GOSDT algorithm completes.



\setcounter{algorithm}{0}
\begin{algorithm}[htb!]
\caption{GOSDT$(R, \x, \y, \lambda)$ \label{alg:gosdt_summary_app}}
\begin{tabbing}
xxx \= xx \= xx \= xx \= xx \= xx \kill
1: \> \textbf{input:} $R$, $Z$, $z^-$, $z^+$, $\lambda$ \comment{risk, unique sample set, negative sample set, positive sample set, regularizer} \\
2: \> $Q = \emptyset$ \comment{priority queue}\\
3: \> $G=\emptyset$ \comment{dependency graph}\\
4: \> $s_0 \leftarrow \{1,...,1\}$\comment{bit-vector of 1's of length $U$} \\
5: \> $p_0 \leftarrow$ FIND\_OR\_CREATE\_NODE($G, s_0$)\comment{node for root}\\
6: \> $Q.{\rm push}(s_0)$\comment{add to priority queue}\\
7: \> \textbf{while} $p_0.lb \neq p_0.ub$ \textbf{do}\\
8: \> \> $s\leftarrow Q.{\rm pop}()$\comment{index of problem to work on}\\
9: \> \> $p\leftarrow G.{\rm find}(s)$\comment{find problem to work on}\\
10: \> \> \textbf{if} $p.lb=p.ub$ \textbf{then}\\
11: \> \> \> \textbf{continue}\comment{problem already solved} \\
12: \> \> $(lb', ub') \leftarrow (\infty, \infty)$\comment{very loose starting bounds}\\
13: \> \> \textbf{for} each feature $j \in [1,M]$ \textbf{do}\\
14: \> \> \> $s_l, s_r \leftarrow \text{split}(s,j,Z)$ \comment{create children if they don't exist} \\
15: \> \> \> $p_l^j\leftarrow$FIND\_OR\_CREATE\_NODE$(G,s_l)$\\
16: \> \> \> $p_r^j\leftarrow$FIND\_OR\_CREATE\_NODE$(G,s_r)$\\
\>\>\comment{create bounds as if $j$ were chosen for splitting}
\\
17: \> \> \> $lb' \leftarrow \min(lb', p_l^j.lb + p_r^j.lb)$ \\
18: \> \> \> $ub' \leftarrow \min(ub', p_l^j.ub + p_r^j.ub)$ \\
\> \> \comment{signal the parents if an update occurred} \\
19: \> \> \textbf{if} $p.lb \neq lb'$ \textbf{or} $p.ub \neq ub'$  \textbf{then} \\
20: \> \> \> $p.ub \leftarrow \min(p.ub, ub')$\\
21: \> \> \> $p.lb \leftarrow \min(p.ub, \max(p.lb, lb'))$\\
22: \> \> \> \textbf{for} $p_{\pi} \in G.{\rm parent}(p)$ \textbf{do} \\
\> \> \> \> \comment{propagate information upwards}\\
23: \> \> \> \> $Q.{\rm push}(p_{\pi}.{\rm id}, {\rm priority}=1)$\\
24: \> \> \textbf{if} $p.lb = p.ub$ \textbf{then} \\
25: \> \> \> \textbf{continue} \comment{problem solved just now} \\
\> \> \comment{loop, enqueue all children that are dependencies} \\

26: \> \> \textbf{for} each feature $j \in [1,M]$ \textbf{do} \\
\> \> \comment{fetch $p_l^j$ and $p_r^j$ in case of update from other thread} \\
27: \> \> \> repeat line 14-16\\
28: \> \> \> $lb' \leftarrow p_l^j.lb + p_r^j.lb$ \\
29: \> \> \> $ub' \leftarrow p_l^j.ub + p_r^j.ub$ \\
30: \> \> \> \textbf{if} $lb' < ub'$ \textbf{and} $lb' \le p.ub$ \textbf{then} \\
31: \> \> \> \> $Q.{\rm push}(s_{l}, {\rm priority}=0)$ \\
32: \> \> \> \> $Q.{\rm push}(s_{r}, {\rm priority}=0)$ \\
33: \> \textbf{return}\\
---------------------------------------------------------------------------\\
34: \> \textbf{subroutine} FIND\_OR\_CREATE\_NODE(G,s)\\
35: \> \> if $G.{\rm find}(s) = {\rm NULL}$ \comment{$p$ not yet in dependency graph}\\
36: \> \> \> $p.id \leftarrow s$ \comment{identify $p$ by $s$}\\
37: \> \> \> $p.lb \leftarrow {\rm get\_lower\_bound}(s,Z,z^-,z^+)$\\
38: \> \> \> $p.ub \leftarrow {\rm get\_upper\_bound}(s,Z,z^-,z^+)$\\
39: \> \> \> \textbf{if} fails\_bounds$(p)$ \textbf{then}\\
40: \> \> \> \> $p.lb=p.ub$ \comment{no more splitting allowed}\\
41: \> \> \> G.insert(p) \comment{put $p$ in dependency graph}\\
42: \> \> \textbf{return} G.find(s)
\end{tabbing}
\end{algorithm}
\begin{algorithm}
\caption{get\_lower\_bound$(s, Z, z^-, z^+)$ $\rightarrow$ $lb$\label{alg:lowerbound})}
\begin{minipage}{1.0\linewidth}
\begin{tabbing}
xxx \= xxx \= xxx \kill
\textbf{input:} $s, Z, z^-, z^+$ \comment{support, unique sample set, negative sample set, positive sample set} \\
\textbf{output:} $lb$ \comment{Risk lower bound} \\
\comment{Compute the risk contributed if applying a class to every equivalence class independently} \\
\textbf{for} each equivalence class $u \in [1,U]$ \textbf{define}\\
\> \comment{Values provided in $Z$} \\
\> $z_u^- = \frac{1}{N}\sum_{i=1}^{N}\mathds{1}[y_i = 0 \land x_i = z_u]$ \\
\> $z_u^+ = \frac{1}{N}\sum_{i=1}^{N}\mathds{1}[y_i = 1 \land x_i = z_u]$ \\
\> \comment{Risk of assigning a class to equivalence class $u$} \\
\> $z_u^{\min} = \min(z_u^-, z_u^+)$ \\
\comment{Add all risks for each class $u$} \\
\comment{Add $2\lambda$ which is a lower bound of the complexity penalty} \\
$lb \leftarrow 2\lambda + \sum_u s_u z_u^{\min}$ \\
\textbf{return} $lb$
\end{tabbing}
\end{minipage}
\end{algorithm}
\begin{algorithm}
\caption{get\_upper\_bound$(s, Z, z^-, z^+)$ $\rightarrow$ $ub$\label{alg:upperbound}}
\begin{minipage}{1.0\linewidth}
\begin{tabbing}
xxx \= xxx \= xxx \kill
\textbf{input:} $s, Z, z^-, z^+$ \comment{support, unique sample set, negative sample set, positive sample set} \\
\textbf{output:} $ub$ \comment{Risk upper bound} \\
\comment{Compute the risk contributed if applying a single class to all samples in $s$} \\
\textbf{for} each equivalence class $u \in [1,U]$ \textbf{define}\\
\> \comment{Add up the positive and negative class weights under equivalence class $u$} \\
\> $z_u^- = \frac{1}{N}\sum_{i=1}^{N}\mathds{1}[y_i = 0 \land x_i = z_u]$ \\
\> $z_u^+ = \frac{1}{N}\sum_{i=1}^{N}\mathds{1}[y_i = 1 \land x_i = z_u]$ \\
\comment{Total the positive and negatives over all classes $u$, choosing the smaller total as the misclassification} \\
\comment{Add a single $\lambda$ for the complexity penalty of a leaf} \\
$ub \leftarrow \lambda +\text{min}(\sum_u s_u z_u^-, \sum_u s_u z_u^+)$ \\
\textbf{return} $ub$
\end{tabbing}
\end{minipage}
\end{algorithm}
\begin{algorithm}
\caption{fails\_bounds$(p)$ $\rightarrow$ $v$\label{alg:failsbound}}
\begin{minipage}{1.0\linewidth}
\begin{tabbing}
xxx \= xxx \= xxx \kill
\textbf{input:} $p$ \comment{current problem} \\
\textbf{output:} $v$ \comment{boolean indicating valid problem} \\
\comment{ If this expression is true then the lower bound on incremental accuracy is crossed by all descendents } \\
\comment{ This works because since $ub - lb$ is an upperbound on incremental accuracy for any descendent } \\
$incremental\_accuracy \leftarrow p.ub - p.lb \le 0$ \\
\comment{ If this expression is true then the lower bound on leaf classification accuracy is crossed } \\
$s \leftarrow p.id$ \\
$leaf\_accuracy \leftarrow  \frac{1}{N} \sum_{i=1}^N \mathds{1}[s_i = 1] \le 2\lambda$ \\
\textbf{if} $(incremental\_accuracy = True) \lor (leaf\_accuracy = True)$ \textbf{then} \\
\> \textbf{return} $True$ \\
\textbf{return} $False$\\
\end{tabbing}
\end{minipage}
\end{algorithm}
\begin{algorithm}
\caption{split$(s, j, Z)$ $\rightarrow$ $s_l, s_r$\label{alg:split}}
\begin{minipage}{1.0\linewidth}
\begin{tabbing}
xxx \= xxx \= xxx \kill
\textbf{input:} $s, j, Z$ \comment{support set, feature index, unique sample set} \\
\textbf{output:} $s_l, s_r$ \comment{left and right splits} \\
\comment{Create the left key which is the subset of $s$ such that feature $j$ tests negative} \\
$s_l = \{\mathds{1}[s_u = 1 \land Z_{u,j} = 0] | 1 \le u \le U\}$ \\
\comment{Create the right key which is the subset of $s$ such that feature $j$ tests positive} \\
$s_r = \{\mathds{1}[s_u = 1 \land Z_{u,j} = 1] | 1 \le u \le U\}$ \\
\textbf{return} $s_l, s_r$
\end{tabbing}
\end{minipage}
\end{algorithm}
\begin{algorithm}
\caption{extract$(t)$ $\rightarrow$ $s$ \comment{Extract optimal tree after running the algorithm}\label{alg:extraction}}
\begin{minipage}{1.0\linewidth}
\begin{tabbing}
xxx \= xxx \= xxx \= xxx \kill
\textbf{input:} $s$ \comment{Key of the problem from which we want to build a tree} \\
\textbf{output:} $t$ \comment{Optimal tree} \\
$p \leftarrow$ FIND\_OR\_CREATE\_NODE$(G,s)$ \comment{Find the node associated to this key} \\
$t \leftarrow$ None \comment{ Create a null tree } \\
$base\_bound \leftarrow p.ub$ \comment{ The risk if we end this node as a leaf } \\ 
$base\_prediction \leftarrow 0$ \comment{ The prediction if we end this node as a leaf } \\
$split\_bound \leftarrow \infty$ \comment{ The risk if we split this node } \\
$split\_feature \leftarrow 0$ \comment{ The index of the feature we should use to split this node } \\
\textbf{for} each feature $j \in [1,M]$ \textbf{do} \comment{ Check all possible features } \\
\> $s_l, s_r \leftarrow \text{split}(s,j,Z)$ \comment{Key of the the children for this split} \\
\> $p_l^j\leftarrow$FIND\_OR\_CREATE\_NODE$(G,s_l)$ \comment{Find left child} \\
\> $p_r^j\leftarrow$FIND\_OR\_CREATE\_NODE$(G,s_r)$ \comment{Find right child} \\
\> \comment{Check if the risk of this split is better than the best split risk so far } \\
\> \textbf{if} $p_l^j.ub + p_r^j.ub < split\_bound$ \textbf{then} \\
\> \> $split\_bound \leftarrow p_l^j.ub + p_r^j.ub$ \comment{ Update the best split risk } \\
\> \> $split\_feature \leftarrow j$ \comment{Best feature index to split on which minimizes loss upper bound} \\
\comment{ Calculate the total positive and negative weights of each equivalence class} \\
\textbf{for} each equivalence class $u \in [1,U]$ \textbf{define}\\
\> \comment{Values come from equivalence class matrix $Z$ as seen in Algorithm \ref{alg:upperbound}} \\
\> $z_u^- = \frac{1}{N}\sum_{i=1}^{N}\mathds{1}[y_i = 0 \land x_i = z_u]$ \comment {total negatives } \\
\> $z_u^+ = \frac{1}{N}\sum_{i=1}^{N}\mathds{1}[y_i = 1 \land x_i = z_u]$ \comment {total positives} \\
\comment{ Select only the positive and negative weights captured by $s$ } \\
$negatives \leftarrow \sum_u s_u z_u^-$ \\
$positives \leftarrow \sum_u s_u z_u^+$ \\
\comment{ Set the leaf prediction based on class with the higher selected total weight } \\
\textbf{if} $negatives < positives$ \textbf{then} \\
\> \comment{Leaf predicts the majority class as 1 since positive weights are higher} \\
\> $base\_prediction.pred \leftarrow 1$ \\
\comment{Base case: If the risk of remaining as a leaf is better than splitting, remain as leaf} \\
\textbf{if} $base\_bound \leq split\_feature$ \\
\> \comment{Construct and return a leaf node} \\
\> $t.type \leftarrow leaf$ \\
\> $t.prediction \leftarrow base\_prediction$ \\
\> \textbf{return} $t$ \\
\comment{Recursive case: One of the splits performs better than the leaf} \\
\comment{Generate left and right splits based on best split feature} \\
$s_l, s_r \leftarrow \text{split}(s,split\_feature,Z)$ \\
\comment{Recurse onto child keys to create left and right subtrees}\\
$t_l \leftarrow \text{extract}(s_l)$ \\
$t_r \leftarrow \text{extract}(s_r)$ \\
\comment{Construct and return a split node containing the left and right subtrees} \\
$t.type \leftarrow tree$ \\
$t.split \leftarrow split\_feature$ \\
$t.left \leftarrow t_l$ \\
$t.right \leftarrow t_r$ \\
\textbf{return} $t$
\end{tabbing}
\end{minipage}
\end{algorithm}

\end{document}